\documentclass[11pt]{article}

\usepackage{amsmath,textcomp,amssymb,graphicx,amsthm,enumerate}
\usepackage{graphicx}
\usepackage{enumerate}
\usepackage{natbib}
\usepackage{xcolor}
\usepackage{mathtools}
\usepackage{ stmaryrd }
\usepackage{ enumitem}

\newtheorem{theoremM}{Theorem}
\newtheorem{corollaryM}{Corollary}
\newtheorem{theorem}{Theorem}[section]
\newtheorem{lemma}{Lemma}[section]
\newtheorem{corollary}{Corollary}[section]
\newtheorem{example}{Example}[section]
\theoremstyle{definition}
\newtheorem{definition}{Definition}[section]

\usepackage[affil-it]{authblk}

 \setlist[itemize]{leftmargin=*}
 \setlist[enumerate]{leftmargin=*}
 
\AtBeginDocument{}
\oddsidemargin=0in
\topmargin=-.5in
\textwidth=6.5in
\textheight=8.6in

\newenvironment{remark}[1][Remark:]{\begin{trivlist}
\item[\hskip \labelsep {\bfseries #1}]}{\end{trivlist}}

\newcommand{\D}{{\mathbf D}}

\newcommand{\w}{{\mathbf w}}
\newcommand{\Dset}{{\mathcal D}}
\newcommand{\Obj}{L}
\newcommand{\indicator}{I}

\newcommand{\x}{{\mathbf x}}

\newcommand{\abf}{{\mathbf a}}

\newcommand{\alphabf}{{\pmb \alpha}}

\newcommand{\Real}{{\mathbb R}}
\newcommand{\defeq}{\coloneqq}
\newcommand{\SigDO}{\mathbf M_0}
\newcommand{\I}{\mathbf I}
\newcommand{\pbinom}{{\tt pbinom}}
\newcommand{\A}{\mathbf A}

\newcommand{\E}{\mathbf E}
\newcommand{\W}{\mathbf W}
\newcommand{\Set}{\mathbf S}

\newcommand{\Hbf}{\mathbf H}

\newcommand{\Iden}{\mathbf I}
\newcommand{\LL}{\mathbf F}
\newcommand{\QQ}{\mathbf G}
\newcommand{\TT}{\mathbf t}
\newcommand{\sgn}{\textbf{sgn}}
\newcommand{\MU}[2]{\mu_{#2}(#1) }

\newcommand{\LOWER}[2]{\nu_{#2}(#1) }
\newcommand{\vertiii}[1]{{\left\vert\kern-0.25ex\left\vert\kern-0.25ex\left\vert #1 
    \right\vert\kern-0.25ex\right\vert\kern-0.25ex\right\vert}}
\newcommand{\vv}{\mathbf v}
\newcommand{\uu}{\mathbf u}
\newcommand{\z}{\mathbf z}
\newcommand{\y}{\mathbf y}
\newcommand{\ys}{\mathbf y_S}
\newcommand{\xibf}{\pmb \xi}
\newcommand{\HH}{H}
\newcommand{\EH}{h}
\newcommand{\PP}{\mathcal P}

\newcommand{\coh}{\mu}
\newcommand{\hs}[3]{\mathbb E \sqrt{L_{{#1}}(#2,#3)}}
\newcommand{\hb}[3]{\tau_{{#1}}(#2,#3)}

\newcommand{\intSet}[1]{\llbracket #1\rrbracket}



\begin{document}

\title{Local Identifiability of $l_1$-minimization Dictionary Learning: a Sufficient and Almost Necessary Condition}

\author{Siqi Wu%
  \thanks{Electronic address: \texttt{siqi@stat.berkeley.edu}}}
\affil{Department of Statistics, UC Berkeley}

\author{Bin Yu%
  \thanks{Electronic address: \texttt{binyu@berkeley.edu}}}
\affil{Department of Statistics and Department of EECS, UC Berkeley}



\maketitle

\begin{abstract}%
We study the theoretical properties of learning a dictionary from $N$ signals $\x_i\in \mathbb R^K$ for $i=1,...,N$ via $l_1$-minimization. We assume that $\x_i$'s are $i.i.d.$ random linear combinations of the $K$ columns from a complete (i.e., square and invertible) reference dictionary $\D_0 \in \mathbb R^{K\times K}$. Here, the random linear coefficients are generated from either the $s$-sparse Gaussian model or the Bernoulli-Gaussian model. First, for the population case, we establish a sufficient and almost necessary condition for the reference dictionary $\D_0$ to be locally identifiable, i.e., a local minimum of the expected $l_1$-norm objective function. Our condition covers both sparse and dense cases of the random linear coefficients and significantly improves the sufficient condition by Gribonval and Schnass (2010). In addition, we show that for a complete $\mu$-coherent reference dictionary, i.e., a dictionary with absolute pairwise column inner-product at most $\coh\in[0,1)$, local identifiability holds even when the random linear coefficient vector has up to $O(\mu^{-2})$ nonzeros on average. Moreover, our local identifiability results also translate to the finite sample case with high probability provided that the number of signals $N$ scales as $O(K\log K)$. 
\end{abstract}
\textbf{Keywords}: dictionary learning, $l_1$-minimization, local minimum, non-convex optimization, sparse decomposition.

\section{Introduction}
\label{sec:intro}
Expressing signals as sparse linear combinations of a dictionary basis has enjoyed great success in applications ranging from image denoising to audio compression. Given a known dictionary matrix $\D\in\mathbb R^{d\times K}$ with $K$ columns or atoms, one popular method to recover sparse coefficients $\alphabf\in\mathbb R^K$ of the signal $\x\in\mathbb R^d$ is through solving the convex $l_1$-minimization problem: 
\[\text{ minimize } \|\alphabf\|_1 \text{ subject to } \x = \D\alphabf.\] 
This approach, known as {\it basis pursuit} \citep{Chen1998}, along with many of its variants, has been studied extensively in statistics and signal processing communities. See, e.g. \cite{Donoho2003, Fuchs2004, Candes2005}.

For certain data types such as natural image patches, predefined dictionaries like the wavelets \citep{Mallat2008} are usually available. However, when a less-known data type is encountered, a new dictionary has to be designed for effective representations. Dictionary learning, or sparse coding, learns adaptively a dictionary from a set of training signals such that they have sparse representations under this dictionary \citep{Olshausen1997}. One formulation of dictionary learning involves solving a non-convex $l_1$-minimization problem  \citep{Plumbley2007, GS2010, Geng2011}. Concretely, 
define
\begin{align}
\label{basisPursuit}
l(\x,\D) = \min_{\alphabf\in \mathbb R^K}\{\|\alphabf\|_1, \text{ subject to } \x = \D\alphabf \}.
\end{align} 
We learn a dictionary from the $N$ signals $\x_i \in \mathbb R^d$ for $i=1,...,N$ by solving:
\begin{align}
\label{noiseless1}
\min_{\D\in \Dset} \ L_N(\D) =  \min_{\D\in \Dset} \frac{1}{N} \sum_{i=1}^N l(\mathbf{x}_i,\mathbf{D}).
\end{align}
\noindent Here, $\Dset \subset \mathbb R^{d\times K}$ is a constraint set for candidate dictionaries. In many signal processing tasks, learning an adaptive dictionary via the optimization problem (\ref{noiseless1}) and its variants is empirically demonstrated to have superior performance over fixed standard dictionaries \citep{Elad2006,Peyre2009,Grosse2012}. For a review of dictionary learning algorithms and applications, see \cite{Elad2010,Rubinstein2010,Mairal2014}.

Despite the empirical success of many dictionary learning formulations, relatively little theory is available to explain why they work. One line of research treats the problem of {\it dictionary identifiability}: if the signals are generated using a dictionary $\D_0$ referred to as the {\it reference dictionary}, under what conditions can we recover $\D_0$ by solving the dictionary learning problem? Being able to identify the reference dictionary is important when we interpret the learned dictionary. Let $\alphabf_i \in \mathbb R^K$ for $i=1,...,N$ be some random vectors. A popular signal generation model assumes that a signal vector can be expressed as a linear combination of the columns of the reference dictionary: $\x_i \approx \D_0\alphabf_i$ \citep{GS2010,Geng2011,Jenatton2012}. In this paper, we will study the problem of {\it local identifiability} of $l_1$-minimization dictionary learning (\ref{noiseless1}) under this generating model.

\vspace{2mm}

\noindent\textbf{Local identifiability.} A reference dictionary $\D_0$ is said to be {\it locally identifiable} with respect to an objective function $L(\D)$ if $\D_0$ is one of the local minima of $L$. The pioneer work of \citet{GS2010} (referred to as GS henceforth) analyzed the $l_1$-minimization problem (\ref{noiseless1}) for noiseless signals 
($\x_i = \D_0\alphabf_i$)
and complete ($d = K$ and full rank) dictionaries. Under a sparse Bernoulli-Gaussian model for the linear coefficients $\alphabf_i$'s, they showed that for a sufficiently incoherent reference dictionary $\D_0$, $N = O(K\log K)$ samples can guarantee local identifiability with respect to $L_N(\D)$ in (\ref{noiseless1}) with high probability. Still in the noiseless setting, \cite{Geng2011} extended the analysis to over-complete ($d>K$) dictionaries. More recently under the noisy linear generative model ($\x_i = \D_0\alphabf_i + \text{noise}$) and over-complete dictionary setting, \cite{Jenatton2012} developed the theory of local identifiability for (\ref{noiseless1}) with $l(\x,\D)$ replaced by the LASSO objective function of \cite{Tibshirani1996}. 
Other related works on local identifiability include \cite{Schnass2014a} and \cite{Schnass2014b}, who gave respectively sufficient conditions for the local correctness of the K-SVD \citep{Aharon2006b} algorithm and a maximum response formulation of dictionary learning. 
\vspace{2mm}

\noindent\textbf{Contributions.} There has not been much work on necessary conditions for local dictionary identifiability. Numerical experiments demonstrate that there seems to be a phase boundary for local identifiability (Figure \ref{fig:errorK10}). The bound implied by the sufficient condition in GS falls well below the simulated phase boundary, suggesting that their result can be further improved. Thus, even though theoretical results for the more general scenarios are available, we adapt the noiseless signals and complete dictionary setting of GS in order to find better local identifiability conditions. We summarize our major contributions below:
\begin{itemize}
\item For the population case where $N = \infty$, we establish a sufficient and almost necessary condition for local identifiability under both the $s$-sparse Gaussian model and the Bernoulli-Gaussian model. For the Bernoulli-Gaussian model, the phase boundary implied by our condition significantly improves the GS bound and agrees well with the simulated phase boundary (Figure \ref{fig:errorK10}). 
\item  We provide lower and upper bounds to approximate the quantities involved in our sufficient and almost necessary condition, as it generally requires to solve a series of second-order cone programs to compute those quantities.
\item As a consequence, we show that a $\coh$-coherent reference dictionary -- a dictionary with absolute pairwise column inner-product at most $\coh\in[0,1)$ -- is locally identifiable for sparsity level, measured by the average number of nonzeros in the random linear coefficient vectors, up to the order $O(\coh^{-2})$. Moreover, if the sparsity level is greater than $O(\coh^{-2})$, the reference dictionary is generally not locally identifiable. In comparison, instead of imposing condition on the sparsity level, the sufficient condition by GS demands the number of dictionary atoms $K = O(\coh^{-2})$, which is a much more stringent requirement. For over-complete dictionaries, \cite{Geng2011} requires the sparsity level to be of the order $O(\coh^{-1})$. It should also be noted that \cite{Schnass2014b} established the bound $O(\coh^{-2})$ for {\it approximate} local identifiability under a new response maximization formulation of dictionary learning. Our result is the first in showing that $O(\coh^{-2})$ is achievable and optimal for {\it exact} local recovery under the $l_1$-minimization criterion.  
\item We also extend our identifiability results to the finite sample case.
We show that for a fixed sparsity level, we need $N = O(K\log K)$ $i.i.d$ signals to determine whether or not the reference dictionary can be identified locally.  This sample requirement is the same as GS's and is the best known sample requirement among all previous studies on local identifiability.
\end{itemize}

\begin{figure}[h!]
\begin{center}
$\begin{array}{ccc}
\includegraphics[height=.45\textwidth]{./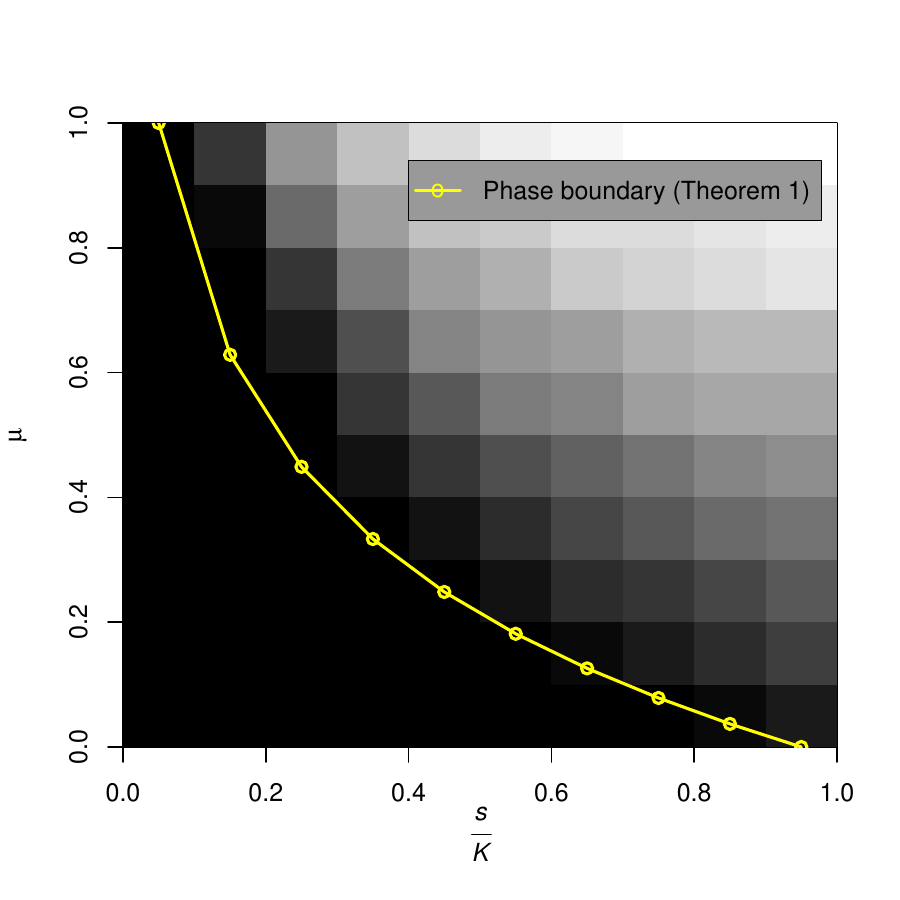} 
\includegraphics[height=0.45\textwidth]{./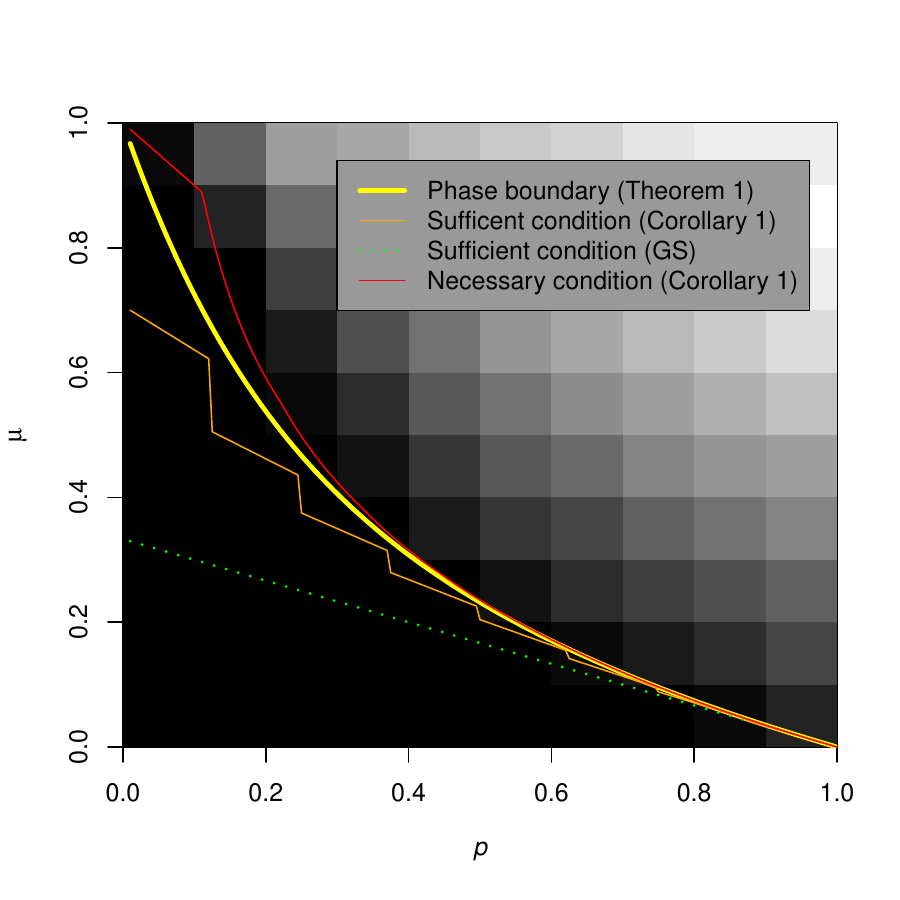}
\includegraphics[width=0.07\textwidth]{./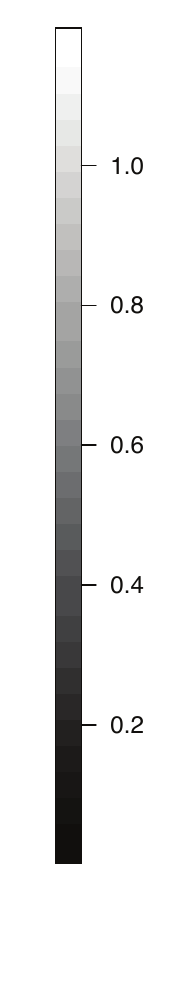}
\end{array}$
\end{center}
\caption{Local recovery error for the $s$-sparse Gaussian model (Left) and the Bernoulli($p$)-Gaussian model (Right). The parameter $s\in \{1,...,K\}$ is the number of nonzeros in each linear coefficient vector under the $s$-sparse Gaussian model, and $p\in (0,1]$ is the probability of an entry of the linear coefficient vector being nonzero under the Bernoulli($p$)-Gaussian model. The data are generated with the reference dictionary $\D_0\in\mathbb R^{10\times 10}$ (i.e. $K=10$)  satisfying $\D_0^T\D_0 = \mu\textbf{1}\textbf{1}^T+(1-\mu)\I$ for $\mu \in [0,1)$, see Example \ref{eg:allMu} for details. For each $(\mu,\frac{s}{K})$ or $(\mu,p)$ tuple, ten batches of $N=2000$ signals $\{\x_i\}_{i=1}^{2000}$ are generated according to the noiseless linear model $\x_i = \D_0\alphabf_i$, with $\{\alphabf_i\}_{i=1}^{2000}$ drawn $i.i.d$ from the $s$-sparse Gaussian model or $i.i.d$ from the Bernoulli($p$)-Gaussian model. For each batch, the dictionary is estimated through an alternating minimization algorithm in the \href{http://spams-devel.gforge.inria.fr}{{\tt SPAMS}} package  \citep{mairal2010}, with initial dictionary set to be $\D_0$. The grayscale intensity in the figure corresponds to the Frobenius error of the difference between the estimated dictionary and the reference dictionary $\D_0$, averaged for the ten batches. The ``phase boundary" curve corresponds to the theoretical boundary that separates the region of local identifiability (below the curve) and the region of local non-identifiability (above the curve) according to Theorem \ref{thm:popResult} of this paper. The ``Sufficient condition (Corollary 1)"  and ``Necessary condition (Corollary 1)" curves are the lower and upper bounds given by Corollary \ref{thm:popResult2} to approximate the exact phase boundary. Finally, the ``Sufficient condition (GS)" curve corresponds to the lower bound by GS. Note that for the $s$-sparse Gaussian model, the ``Sufficient condition (Corollary 1)"  and ``Necessary condition (Corollary 1)" curves coincide with the phase boundary.
}
\label{fig:errorK10}
\end{figure}
\vspace{2mm}

\noindent\textbf{Other related works.}  Apart from analyzing the local minima of dictionary learning, another line of research aims at designing provable algorithms for recovering the reference dictionary. \cite{Georgiev2005} and \cite {Aharon2006} proposed combinatorial algorithms and gave deterministic conditions for dictionary recovery which require sample size $N$ to be exponentially large in the number of dictionary atoms $K$. \cite{Spielman2012} established exact global recovery results for complete dictionaries through efficient convex programs. \cite{Agarwal2013} and \cite{Arora2014} proposed clustering-based methods to estimate the reference dictionary in the overcomplete setting. \cite{Agarwal2014a} and \cite{Arora2015} provided theoretical guarantees for their alternating minimization algorithms. \cite{Sun2015} proposed a non-convex optimization algorithm that provably recovers a complete reference dictionary for sparsity level up to $O(K)$. While in this paper we do not provide an algorithm, our identifiability conditions suggest theoretical limits of dictionary recovery for all algorithms attempting to solve the optimization problem (\ref{noiseless1}). In particular, in the regime where the reference dictionary is not identifiable, no algorithm can simultaneously solve (\ref{noiseless1}) and return the ground truth reference dictionary. 

Other related works include generalization bounds for signal reconstruction errors under the learned dictionary \citep{Maurer2010,Vainsencher2011,Mehta2012,Gribonval2013}, dictionary identifiability through combinatorial matrix theory \citep{Hillar2015}, as well as algorithms and theories for the closely related independent component analysis \citep{Comon1994,Arora2012ICA} and nonnegative matrix factorization \citep{Arora2012NMF,Recht2012}.


\vspace{2mm}

The rest of the paper is organized as follows: In Section \ref{sec:prelim}, we give basic assumptions and describe the two probabilistic models for signal generation. Section \ref{sec:pop} develops sufficient and almost necessary local identifiability conditions under both models for the population problem, and establishes lower and upper bounds to approximate the quantities involved in the conditions. In Section \ref{sec:finite}, we will present local identifiability results for the finite sample problem.
Detailed proofs for the theoretical results can be found in the Appendix.

\section{Preliminaries}
\label{sec:prelim}
\subsection{Notations}
For a positive integer $m$,  define $\intSet{m}$ to be the set of the first $m$ positive integers, $\{1,...,m\}$. The notation $\x[i]$ denotes the $i$-th entry of the vector $\x\in \mathbb R^m$. For a non-empty index set $S\subset \intSet{m}$, we denote by $|S|$ the set cardinality and $\x[S] \in \mathbb R^{|S|}$ the sub-vector indexed by $S$. We define $\x[-j] \defeq (\x[1],...,\x[j-1],\x[j+1],...,\x[m]) \in \mathbb R^{m-1}$ to be the vector $\x$ without its $j$-th entry. 

For a matrix $\A \in \mathbb R^{m\times n}$, we denote by $\A[i,j]$ its $(i,j)$-th entry. 
For non-empty sets $S \subset \intSet{m}$ and $T \subset \intSet{n}$, denote by $\A[S,T]$ the submatrix of $\A$ with the rows indexed by $S $ and columns indexed by $T$. Denote by $\A[i,]$ and $\A[,j]$ the $i$-th row and the $j$-th column of $\A$ respectively. Similar to the vector case, the notation $\A[-i,j]\in \mathbb R^{(m-1)\times n}$ denotes the $j$-th column of $\A$ without its $i$-th entry. 


For $p\geq 1$, the $l_p$-norm of a vector $\x \in \mathbb R^{m}$ is defined as $\|\x\|_p = (\sum_{i=1}^m |\x[i]|^p)^{1/p}$, with the convention that $\|\x\|_0 = |\{i:\x[i]\ne 0\}|$ and $\|\x\|_\infty = \max_i|\x[i]|$. For any norm $\|.\|$ on $\mathbb R^m$, the dual norm of $\|.\|$ is defined as $\|\mathbf{x}\|^* = \sup_{\mathbf{y}\ne 0}\frac{\mathbf{x}^T\mathbf{y}}{\|\mathbf{y}\|}$.  

For two sequences of real numbers $\{a_n\}_{n=1}^\infty$ and $\{b_n\}_{n=1}^\infty$, we denote by $a_n = O(b_n)$ if there is a constant $C>0$ such that $a_n\leq C b_n$ for all $n\geq 1$. For $a\in \mathbb R$, denote by $\lfloor a \rfloor$  the integer part of $a$ and $\lceil a \rceil$ the smallest integer greater than or equal to $a$. Throughout this paper, we shall agree that $\frac{0}{0}=0$.

\subsection{Basic assumptions}
We denote by $\Dset  \subset \mathbb R^{d\times K}$ the constraint set of dictionaries for the optimization problem (\ref{noiseless1}). In this paper,  since we focus on complete dictionaries, we assume $d=K$. As in GS, we choose $\Dset$ to be the {\it oblique manifold} \citep{Absil2008}: 
\[\Dset = \left\{\D\in\Real^{K\times K}: \ \|\D[,k]\|_2 = 1 \text{ for all $k = 1,...,K$}\right\}.\] 
We also call a column of the dictionary $\D[,k]$ an {\it atom} of the dictionary. Denote by $\D_0\in \Dset$ the {\it reference dictionary} -- the ground truth dictionary that generates the signals. With these notations, we now give a formal definition for local identifiability: 
\begin{definition} (Local identifiability) Let $L(\D):\Dset \to \mathbb R$ be an objective function.
  We say that the reference dictionary $\D_0$ is {\it locally identifiable} with respect to $L(\D)$ if $\D_0$ is a local minimum of $L(\D)$.
\end{definition}
\noindent\textbf{Sign-permutation ambiguity.} As noted by previous works GS and \cite{Geng2011}, there is an intrinsic sign-permutation ambiguity with the $l_1$-norm objective function $L(\D) = L_N(\D)$ of (\ref{noiseless1}).  Let $\mathbf \D' = \mathbf \D  \mathbf P \mathbf \Lambda$ for some permutation matrix $\mathbf P$ and diagonal matrix $\mathbf \Lambda$ with $\pm 1$ diagonal entries. It is easy to see that $\D'$ and $\D$ have the same objective value. Thus, the objective function $L_N(\D)$ has at least $2^n n!$ local minima. We can only recover $\D_0$ up to column permutation and column sign changes. 
\vspace{2mm}

Note that if the dictionary atoms are linearly dependent, the effective dimension is strictly less than $K$ and the problem essentially becomes over-complete. Since dealing with over-complete dictionaries is beyond the scope of this paper, we make the following assumption:

\vspace{2mm}
\noindent\textbf{Assumption I} {\it (Complete dictionaries). The reference dictionary $\D_0 \in \Dset \subset \mathbb R^{K\times K} $ is full rank.}
\vspace{2mm}

Let $\SigDO = \D_0^T\D_0$ be the {\it dictionary atom collinearity matrix} containing the inner-products between dictionary atoms. Since each dictionary atom has unit $l_2$-norm, $\SigDO[i,i] = 1$ for all $i\in\intSet{K}$. In addition, as $\D_0$ is full rank, $\SigDO$ is positive definite and $|\SigDO[i,j]|<1$ for all $i\ne j$. 


We assume that a signal is generated as a random linear combination of the dictionary atoms. In this paper, we consider the following two probabilistic models for the random linear coefficients:

\vspace{2mm}

\noindent\textbf{Probabilistic models for sparse coefficients.} Denote by $\z\in \mathbb R^m$ a random vector from the $K$-dimensional standard normal distribution.
\begin{description}
\item[\textbf{Model 1 -- $SG(s)$.}] Let $\Set$ be a size-$s$ subset uniformly drawn from all size-$s$ subsets of $\intSet{K}$. Define $\xibf \in \{0,1\}^K$ by setting $\xibf[j] = \indicator\{j\in \Set\}$ for $j\in \intSet{K}$, where $I\{.\}$ is the indicator function. Let $\alphabf\in\mathbb R^m$ be such that $\alphabf[j] = \xibf[j]\z[j]$. Then we say $\alphabf$ is drawn from the {\it $s$-sparse Gaussian model}, or $SG(s)$.
\item[\textbf{Model 2 -- $BG(p)$.}] For $j\in \intSet{K}$, let $\xibf[j]$'s be $i.i.d.$ Bernoulli random variable with success probability $p \in (0,1]$. Let $\alphabf\in\mathbb R^m$ be such that $\alphabf[j] = \xibf[j]\z[j]$. Then we say $\alphabf$ is drawn from the {\it Bernoulli($p$)-Gaussian model}, or $BG(p)$.
\end{description}
With the above two models we can formally state the following assumption for random signal generation:

\vspace{2mm}
\noindent\textbf{Assumption II} {\it (Signal generation). For $i \in \intSet{N}$, let $\alphabf_i$'s be either $i.i.d.$ $s$-sparse Gaussian vectors or $i.i.d.$ Bernoulli($p$)-Gaussian vectors. The signals $\x_i$'s are generated according to the noiseless linear model: }
\[\x_i = \D_0\alphabf_i.\]

\noindent\textbf{Remarks}: \\
\noindent(1) The above two models and their variants were studied in a number of prior theoretical works, including \cite{GS2010,Geng2011,Jenatton2012,Agarwal2014,Sun2015}. 

\noindent(2) By construction, a random vector generated from the $s$-sparse model has exactly $s$ nonzero entries. The data points $\x_i$'s therefore lie within the union of the linear spans of $s$ dictionary atoms (Figure \ref{fig:data} Left). The Bernoulli($p$)-Gaussian model, on the other hand, allows the random coefficient vector to have any number of nonzero entries ranging from $0$ to $K$ with a mean $pK$. As a result, the data points can be outside of the any sparse linear span of the dictionary atoms (Figure \ref{fig:data} Right). We refer readers to the remarks following Example \ref{eg:allMu} in Section \ref{sec:pop} for a discussion of the effect of non-sparse outliers on local identifiability. 

\noindent (3) Our local identifiability results can be extended to a wider class of sub-Gaussian distributions. However, such an extension will results in an increase complexity of the form of the quantities involved in our theorems. For proof of concept, for now we will only focus the standard Gaussian distribution. 
\vspace{2mm}

\begin{figure}[ht]
\begin{center}
$\begin{array}{cc}
\includegraphics[width=.5\textwidth]{./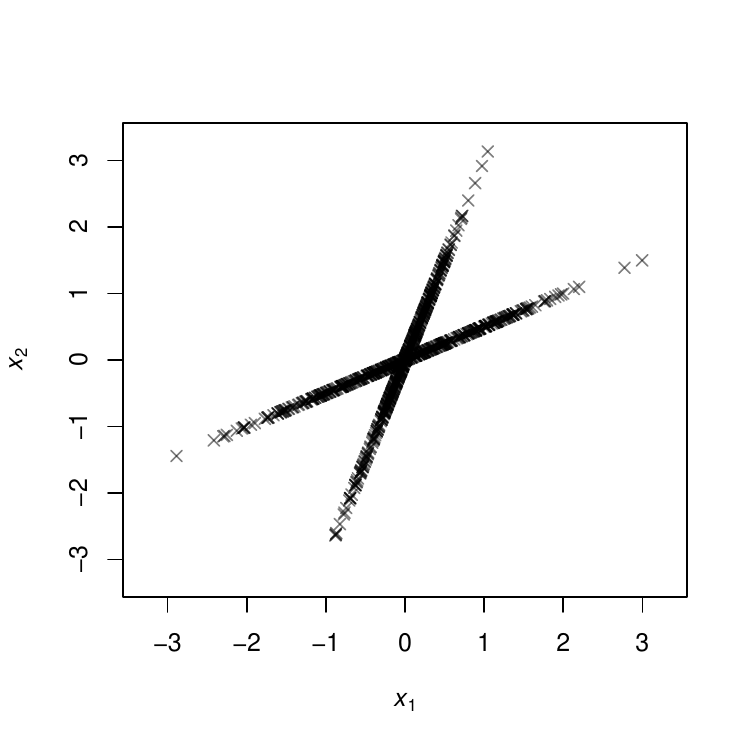} &
\includegraphics[width=0.5\textwidth]{./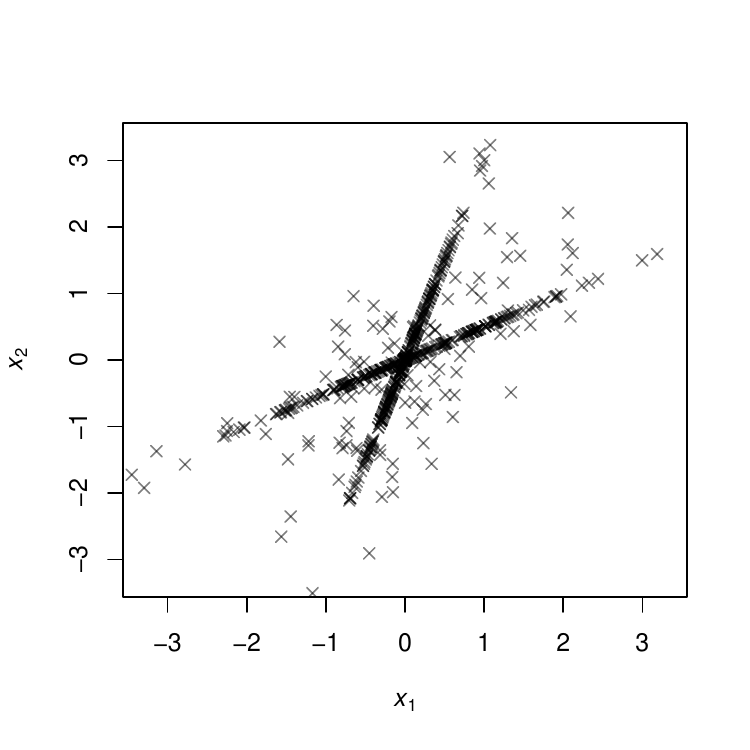}
\end{array}$
\end{center}
\caption{Data generation for $K=2$. Left: the $s$-sparse Gaussian model with $s=1$; Right: the Bernoulli($p$)-Gaussian model with $p=0.2$. The dictionary is constructed such that the inner product between the two dictionary atoms is $0.7$. A sample of $N=1000$ data points are generated for both models. For the $s$-sparse model all data points are perfectly aligned with the two lines corresponding to the two dictionary atoms. For the Bernoulli($p$)-Gaussian model, a number of data points fall outside the two lines. According to our Theorem \ref{thm:popResult} and \ref{thm:finiteBerGau}, despite those outliers and the high collinearity between the two atoms, the reference dictionary is still locally identifiable at the population level and with high probability for finite samples.}
\label{fig:data}
\end{figure}

In this paper, we study the problem of dictionary identifiability with respect to the population objective function $\mathbb E \ L_N(\D)$ (Section \ref{sec:pop}) and the finite sample objective function $L_N(\D)$ (Section \ref{sec:finite}).  In order to analyze these objective functions, it is convenient to define the following ``group LASSO"-type norms:



\begin{definition}
For an integer $m\geq2$ and $\w\in \mathbb R^{m}$. 
\begin{enumerate}
\item For $k\in\intSet{m}$, define
\[
\vertiii{\w}_k = \frac{\sum_{|S|=k}\|\w[S]\|_2}{{m-1 \choose k-1}}.
\]
\item For $p\in(0,1)$, define
\[
\vertiii{\w}_p = \sum_{k=0}^{m-1}{\tt pbinom}(k;m-1,p)\vertiii{\w}_{k+1}, 
\]
where {\tt pbinom} is the probability mass function of the binomial distribution:
\[{\tt pbinom}(k;n,p) = {n\choose k} p^k(1-p)^{n-k}.\]
\end{enumerate}
\end{definition}
\begin{remark}\hspace{1mm}\\
(1) Note that the above norms $\vertiii{\w}_k$ and $\vertiii{\w}_p$ are in fact the expected values of $|\w^T\alphabf|$ with the random vector $\alphabf$ drawn from the $SG(s)$ model and the $BG(p)$ model respectively. For invertible $\D\in\Dset$, it can be shown that the objective function for one signal $\x = \D_0\alphabf$ is 
\[l(\x,\D) = \|\Hbf \alphabf\|_1 = \sum_{j=1}^K |\Hbf[j,]\alphabf|,\] 
where $\Hbf = \D^{-1}\D_0$. Thus, taking the expectation of the objective function with respect to $\x$, we end up with a quantity involving either $\sum_{j=1}^K \vertiii{\Hbf[j,]}_s$ or $\sum_{j=1}^K \vertiii{\Hbf[j,]}_p$. This is the motivation of defining these norms. \\
\noindent (2) In particular, $\vertiii{\w}_1 = \|\w\|_1$ and $\vertiii{\w}_m = \|\w\|_2$.\\
\noindent (3) The norms defined above are special cases of the group LASSO penalty by \cite{Yuan2006}. For $\vertiii{\w}_k$, the summation covers all size-$k$ subsets of $\intSet{m}$. The normalization factor is the number of times $\w[i]$ appears in the numerator. Thus, $\vertiii{\w}_k$ is essentially the average of the $l_2$-norms of all size-$k$ sub-vectors of $\w$. On the other hand, $\vertiii{\w}_p$ is a weighted average of $\vertiii{\w}_k$'s with binomial probabilities. 
\end{remark}

\section{Population Analysis}
\label{sec:pop}
In this section, we establish local identifiability results for the case where infinitely many signals are observed. Denote by $\mathbb E \ l(\x_1,\D)$ the expectation of the objective function $l(\x_1,\D)$ of (\ref{basisPursuit}) with respect to the random signal $\x_1$. By the central limit theorem, as the number of signals $N$ tends to infinity, the empirical objective function $L_N(\D) = \frac{1}{N} \sum_{i=1}^N l(\x_i,\mathbf{D})$ converges almost surely to its population mean $\mathbb E \ l(\x_1,\D)$ for each fixed $\D\in \Dset$. Therefore the population version of the optimization problem (\ref{noiseless1}) is:
\begin{align}
\label{expNoiseless}
\min_{\D\in \Dset} & \ \mathbb E \ l(\x_1,\D) 
\end{align}
Note that we only need to work with $\D\in \Dset$ that is full rank. Indeed, if the linear span of the columns of $\D$ span$(\D)\neq \mathbb R^K$, then $\D_0\alphabf_1 \not \in \text{span}(\D)$ with nonzero probability. Thus $\D$ is infeasible with nonzero probability and so $\mathbb E \ l(\x_1,\D) = +\infty$. For a full rank dictionary $\D$, the following lemma gives the closed-form expressions for the expected objective function $\mathbb E \ l(\x_1,\D)$:
\begin{lemma} (Closed-form objective functions) Let $\D$ be a full rank dictionary in $\Dset$ and $\x_1 = \D_0\alphabf_1$ where $\alphabf_1\in \mathbb R^K$ is a random vector. For notational convenience, let $\Hbf = \D^{-1}\D_0$.
\label{lemma:closedForm}
\begin{enumerate}
\item If $\alphabf_1$ is generated according to the $SG(s)$ model with $s\in\intSet{K-1}$,  
\begin{align}
\label{g}
L_{SG(s)}(\D) \defeq \mathbb E \ l(\x_1,\D) = \sqrt{\frac{2}{\pi}}\frac{s}{K} \sum_{j=1}^K \vertiii{\Hbf[j,]}_s.
\end{align}
\item  If $\alphabf_1$ is generated according to the $BG(p)$ model with $p\in(0,1)$, 
\begin{align}
\label{f}
L_{BG(p)}(\D) \defeq \mathbb E \ l(\x_1,\D) = \sqrt{\frac{2}{\pi}}p\sum_{j=1}^K \vertiii{\Hbf[j,]}_p.
\end{align}
\end{enumerate}
For the non-sparse cases $s=K$ and $p=1$, we have
\[L_{SG(s)}(\D) = L_{BG(p)}(\D) = \sqrt{\frac{2}{\pi}}\sum_{j=1}^K \|\Hbf[j,]\|_2.\]
\end{lemma}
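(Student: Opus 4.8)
The plan is to reduce the expectation of the objective to a sum of one-dimensional Gaussian absolute-moment computations and then match the resulting combinatorial averages against the definitions of $\vertiii{\cdot}_s$ and $\vertiii{\cdot}_p$. The first step is to justify the pointwise identity $l(\x,\D) = \|\Hbf\alphabf\|_1$ asserted in Remark (1). Since $\D$ is full rank and $K\times K$, it is invertible, so the feasible set $\{\betabf : \x = \D\betabf\}$ is the singleton $\{\D^{-1}\x\}$; substituting $\x = \D_0\alphabf$ gives the unique feasible point $\D^{-1}\D_0\alphabf = \Hbf\alphabf$, and the minimization defining $l$ is vacuous. Writing the $l_1$-norm row-wise, $l(\x_1,\D) = \sum_{j=1}^K |\Hbf[j,]\alphabf_1|$, so by linearity $\mathbb E\, l(\x_1,\D) = \sum_{j=1}^K \mathbb E\,|\Hbf[j,]\alphabf_1|$, and it suffices to evaluate $\mathbb E\,|\w^T\alphabf|$ for an arbitrary fixed row $\w = \Hbf[j,] \in \mathbb R^K$.

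The second step computes $\mathbb E\,|\w^T\alphabf|$ by conditioning on the random support. Under both models, write $\Set = \{j : \xibf[j] = 1\}$ and observe $\w^T\alphabf = \sum_{j\in \Set} \w[j]\z[j]$; conditioned on $\Set$ this is a centered Gaussian with variance $\|\w[\Set]\|_2^2$ (using independence of $\z$ from $\xibf$), so the half-normal moment identity $\mathbb E\,|Y| = \sqrt{2/\pi}\,\sigma$ for $Y\sim N(0,\sigma^2)$ gives $\mathbb E[\,|\w^T\alphabf|\mid \Set\,] = \sqrt{2/\pi}\,\|\w[\Set]\|_2$. It then remains to take the expectation over $\Set$, i.e. to evaluate $\mathbb E\,|\w^T\alphabf| = \sqrt{2/\pi}\,\mathbb E_{\Set}\|\w[\Set]\|_2$.

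The third and key step is the combinatorial bookkeeping that converts $\mathbb E_{\Set}\|\w[\Set]\|_2$ into the defined norms. For $SG(s)$, $\Set$ is uniform over size-$s$ subsets, so $\mathbb E_{\Set}\|\w[\Set]\|_2 = \binom{K}{s}^{-1}\sum_{|\Set|=s}\|\w[\Set]\|_2$; applying the identity $\binom{K}{s} = \tfrac{K}{s}\binom{K-1}{s-1}$ matches this exactly to $\tfrac{s}{K}\vertiii{\w}_s$, yielding (\ref{g}) after summing over $j$. For $BG(p)$, I would condition further on $|\Set| = k$: since i.i.d.\ Bernoulli draws are exchangeable, given $|\Set| = k$ the set is uniform over size-$k$ subsets, so $\mathbb E_{\Set}\|\w[\Set]\|_2 = \sum_{k=1}^K \binom{K}{k}p^k(1-p)^{K-k}\binom{K}{k}^{-1}\sum_{|\Set|=k}\|\w[\Set]\|_2 = \sum_{k=1}^K p^k(1-p)^{K-k}\sum_{|\Set|=k}\|\w[\Set]\|_2$ (the $k=0$ term vanishes). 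Expanding the definition of $\vertiii{\w}_p$ with $m=K$ and shifting the index $k\mapsto k+1$ shows $\vertiii{\w}_p = \sum_{k=1}^K p^{k-1}(1-p)^{K-k}\sum_{|\Set|=k}\|\w[\Set]\|_2$, so the two differ precisely by the factor $p$, giving $\mathbb E\,|\w^T\alphabf| = \sqrt{2/\pi}\,p\,\vertiii{\w}_p$ and hence (\ref{f}). For the degenerate cases $s=K$ and $p=1$ one has $\Set = \intSet{K}$ almost surely, so $\w^T\alphabf = \w^T\z \sim N(0,\|\w\|_2^2)$ and the formula collapses to $\sqrt{2/\pi}\,\|\w\|_2$ summed over rows.

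I expect the main obstacle to be purely the normalization bookkeeping in the third step --- getting the prefactors $\tfrac{s}{K}$ and $p$ to emerge, which hinges on the fact that the norms $\vertiii{\cdot}_k$ are normalized by $\binom{m-1}{k-1}$ rather than $\binom{m}{k}$, together with the index shift in the binomial mixture. The probabilistic content (uniqueness of the feasible point and the folded-Gaussian moment) is routine; all the care is in aligning the two combinatorial expressions.
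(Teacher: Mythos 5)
Your proposal is correct and follows essentially the same route as the paper's proof: condition on the random support $\Set$, use the half-Gaussian moment $\mathbb E|Y| = \sqrt{2/\pi}\,\sigma$ to get $\sqrt{2/\pi}\,\mathbb E\|\Hbf[j,\Set]\|_2$, and then match the resulting combinatorial averages (uniform over size-$s$ subsets, respectively the binomial mixture with the index shift $k\mapsto k+1$) to $\tfrac{s}{K}\vertiii{\cdot}_s$ and $p\vertiii{\cdot}_p$. The only difference is that you explicitly justify the pointwise identity $l(\x,\D)=\|\Hbf\alphabf\|_1$ via invertibility of $\D$, which the paper states only in a remark --- a welcome addition, not a change of method.
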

\begin{remark}
It can be seen from the above closed-form expressions that the two models are closely related. First of all, it is natural to identify $p$ with $\frac{s}{K}$, the fraction of expected number of nonzero entries in $\alphabf_1$. Next, by definition, $\vertiii{.}_p$ is a binomial average of $\vertiii{.}_k$. Therefore, the Bernoulli-Gaussian objective function $L_{BG(p)}(\D)$ can be treated as a binomial average of the $s$-sparse objective function $L_{SG(s)}(\D)$.
\end{remark}
\vspace{2mm}

By analyzing the above closed-form expressions of the $l_1$-norm objective function, we establish the following sufficient and almost necessary conditions for population local identifiability:
\begin{theoremM} (Population local identifiability) Recall that $\SigDO = \D_0^T\D_0$ and $\SigDO[-j,j]$ denotes the $j$-th column of $\SigDO$ without its $j$-th entry. Let $\vertiii{.}_s^*$ and $\vertiii{.}_p^*$ be the dual norm of $\vertiii{.}_s$ and $\vertiii{.}_p$ respectively.
\label{thm:popResult}  
\begin{enumerate}
\item ($SG(s)$ models) For $K\geq 2$ and $s \in \intSet{K-1}$, if
\[
\max_{j\in \intSet{K}}\vertiii{\SigDO[-j,j]}_s^* < 1 - \frac{s-1}{K-1}.
\]
then $\D_0$ is locally identifiable with respect to $L_{SG(s)}$.
\item ($BG(p)$ models) For $K\geq 2$ and $p \in (0,1)$, if
\[
\max_{j\in \intSet{K}}\vertiii{\SigDO[-j,j]}_p^* < 1 - p.
\]
then $\D_0$ is locally identifiable with respect to $L_{BG(p)}$.
\end{enumerate}
 Moreover, the above conditions are almost necessary in the sense that if the reverse strict inequalities hold, then $\D_0$ is not locally identifiable. 

On the other hand, if $s = K$ or $p = 1$, then $\D_0$ is not locally identifiable with respect to $L_{SG(s)}$ or $L_{BG(p)}$.
\end{theoremM}
\vspace{2mm}

\noindent\textbf{Proof sketch.} Let $\{\D_t\}_{t\in\mathbb R}$ be a collection of dictionaries $\D_t\in\Dset$ indexed by $t\in\mathbb R$ and $L(\D) = \mathbb E \ l(\x_1,\D)$ be the population objective function. The reference dictionary $\D_0$ is a local minimum of $L(\D)$ on the manifold $\Dset$ if and only if the following statement holds:  for any $\{\D_t\}_{t\in\mathbb R}$ that is a smooth function of $t$ with non-vanishing derivative at $t=0$, $L(\D_t)$ has a local minimum at $t = 0$.
For a fixed $\{\D_t\}_{t\in\mathbb R}$, to ensure that $L(\D_t)$ achieves a local minimum at $t=0$, it suffices to have the following one-sided derivative inequalities: 
\[\lim_{t\downarrow 0^+} \frac{L(\D_t) - L(\D_0)}{t} > 0 \text{ and } \lim_{t\uparrow 0^-} \frac{L(\D_t) - L(\D_0)}{t} < 0.\]
With some algebra, the two inequalities can be translated into the following statement:
\[\max_{j\in\intSet{K}}\Big|\SigDO[-j,j]^T\w\Big| < 
\left\{
\begin{array}{l l}
    1 - \frac{s-1}{K-1} & \quad \text{for $SG(s)$ models}\\
    1 - p & \quad \text{for $BG(p)$ models}
  \end{array} \right.\]
where $\w\in\mathbb R^{K-1}$ is a unit vector in terms of the norm $\vertiii{.}_s$ or $\vertiii{.}_p$ and it corresponds to the ``approaching direction" of $\D_t$ to $\D_0$ on $\Dset$ as $t$ tends to zero. Since $t=0$ has to be a local minimum for all smooth $\{\D_t\}_{t\in\mathbb R}$ or approaching directions, by taking the supremum over all such unit vectors the LHS of the above inequality becomes the dual norm of $\vertiii{.}_s$ or $\vertiii{.}_p$. On the other hand, $\D_0$ is not a local minimum if $\lim_{t\downarrow 0^+} (L(\D_t) - L(\D_0))/t< 0$ or $\lim_{t\uparrow 0^-}(L(\D_t) - L(\D_0))/t > 0$ for some $\{\D_t\}_{t\in\mathbb R}$. Thus our condition is also almost necessary. We refer readers to Section \ref{proof:popResult} for the detailed proof.
\vspace{2mm}

\noindent \textbf{Local identifiability phase boundary.} The conditions in Theorem \ref{thm:popResult} indicate that population local identifiability undergoes a phase transition. The following equations 
\[ \max_{j\in \intSet{K}}\vertiii{\SigDO[-j,j]}_s^* = 1 - \frac{s-1}{K-1} \text{ and }
\max_{j\in \intSet{K}}\vertiii{\SigDO[-j,j]}_p^* = 1 - p\]
define the local identifiability phase boundaries which separate the region of local identifiability, in terms of dictionary atom collinearity matrix $\SigDO$ and the sparsity level $s$ or $p$, and the region of local non-identifiability, under respective models.
\vspace{2mm}

\noindent \textbf{The roles of dictionary atom collinearity and sparsity.} Both the dictionary atom collinearity matrix $\SigDO$ and the sparsity parameter $s$ or $p$ play roles in determining local identifiability.  Loosely speaking, for $\D_0$ to be locally identifiable, neither can the atoms of $\D_0$ be too linearly dependent, nor can the random coefficient vectors that generate the data be too dense. For the $s$-sparse Gaussian model, the quantity $\max_{j\in \intSet{K}}\vertiii{\SigDO[-j,j]}_s^*$ measures the size of the off-diagonal entries of $\SigDO$ and hence the collinearity of the dictionary atoms. 
In addition, that quantity also depends on the sparsity parameter $s$. By Lemma \ref{lemma:normIncreasing} in the Appendix, $\max_{j \in \intSet{K}}\vertiii{\SigDO[-j,j]}_s^*$ is strictly increasing with respect to $s$ for $\SigDO$ whose upper-triangle portion contains at least two nonzero entries (if the upper-triangle portion contains at most one nonzero entry, then the quantity does not depend on $s$, see Example \ref{eg:oneMu}). Similar conclusion holds for the Bernoulli-Gaussian model. Therefore, the sparser the linear coefficients, the less restrictive the requirement on dictionary atom collinearity. 

On the other hand, for a fixed $\SigDO$, by the monotonicity of $\max_{j\in \intSet{K}}\vertiii{\SigDO[-j,j]}_s^*$ with respect to $s$, the collection of $s$ that leads to local identifiability is of the form $s<s^*(\SigDO)$ for some function $s^*$ of $\SigDO$.  Similarly for the Bernoulli-Gaussian model,  $p<p^*(\SigDO)$ for some function $p^*$ of $\SigDO$. 
\vspace{2mm}



Next, we will study some examples to gain more intuition for the local identifiability conditions.

\begin{example} ($1$-sparse Gaussian model)
\label{eg:oneSparse}
A full rank $\D_0$ is always locally identifiable at the population level under a $1$-sparse Gaussian model. Indeed, by Corollary \ref{lemma:normIncreasing} in the Appendix, $\vertiii{\SigDO[-j,j]}_1^* = \max_{i\ne j} |\SigDO[i,j]|<1$ for all $j\in\intSet{K}$. Thus, a full rank dictionary $\D_0$ always satisfies the sufficient condition.
\end{example}


\begin{example} ($(K-1)$-sparse Gaussian model) For $j \in\intSet {K}$, $\SigDO[-j,j]\in\mathbb R^{K-1}$. Thus by Lemma \ref{lemma:normIncreasing}, 
\[\vertiii{\SigDO[-j,j]}_{K-1}^* = \|\SigDO[-j,j]\|_2.\] 
Therefore the phase boundary under the $(K-1)$-sparse model is
\[
\max_{j\in\intSet{K}}\|\SigDO[-j,j]\|_2 =  \frac{1}{K}.
\]
\end{example}

%

\begin{example}
\label{eg:orthogonal}
(Orthogonal dictionaries) If $\SigDO = \I$, then 
\[\max_{j \in \intSet{K}}\vertiii{\SigDO[-j,j]}_s^*=\max_{j \in \intSet{K}}\vertiii{\SigDO[-j,j]}_p^*=0.\] 
Therefore orthogonal dictionaries are always locally identifiable if $s<K$ or $p<1$. 
\end{example}

\begin{example} (Minimally dependent dictionary atoms) 
\label{eg:oneMu}
Let $\mu\in(-1,1)$. Consider a dictionary atom collinearity matrix $\SigDO$ such that $\SigDO[1,2] = \SigDO[2,1] = \mu$ and $\SigDO[i,j] = 0$ for all other $i\ne j$. By Corollary \ref{thm:oneMu} in the Appendix, 
\[\max_{j \in \intSet{K}}\vertiii{\SigDO[-j,j]}_s^* = \max_{j \in \intSet{K}}\vertiii{\SigDO[-j,j]}_p^* = |\mu|.\]
Thus the phase boundaries under respective models are:
\[|\mu| = 1 - \frac{s-1}{K-1} \text{ and } |\mu| = 1 - p.\]
Notice that when $K=2$ and for the Bernoulli-Gaussian model, the phase boundary agrees well with the empirical phase boundary in the simulation result by GS (Figure 3 of the GS paper). 
\end{example}

\begin{example} 
\label{eg:allMu}
(Constant inner-product dictionaries) Let $\SigDO = \mu\textbf{1}\textbf{1}^T+(1-\mu)\I$, i.e. $\D_0[,i]^T\D_0[,j] = \mu$ for $1\leq i <  j \leq K$. Note that $\SigDO$ is positive definite if and only if $\mu\in(-\frac{1}{K-1},1)$. By Corollary \ref{thm:constMu} in the Appendix, we have 
\[
\vertiii{\SigDO[-j,j]}_s^* = \sqrt{s}|\mu|.
\]
Thus for the $s$-sparse model, the phase boundary is
\[
\sqrt{s}|\mu| =  1 - \frac{s-1}{K-1}.
\]
Similarly for the Bernoulli($p$)-Gaussian model, we have
\[
\vertiii{\SigDO[-j,j]}_p^* = |\mu|p(K-1)\Big(\sum_{k=0}^{K-1}\pbinom(k,K-1,p)\sqrt{k}\Big)^{-1}.
\]
Thus the phase boundary is
\[
|\mu| =  \frac{1-p}{p(K-1)}\sum_{k=0}^{K-1}\pbinom(k,K-1,p)\sqrt{k}.
\]
Figures \ref{fig:diffK} shows the phase boundaries for different dictionary sizes under the two models. As $K$ increases, the phase boundary moves towards the lower left of the region. This observation indicates that recovering the reference dictionary locally becomes increasingly difficult for larger dictionary size.
\end{example}

\begin{figure}[ht]
\begin{center}
$\begin{array}{cc}
\includegraphics[width=.5\textwidth]{./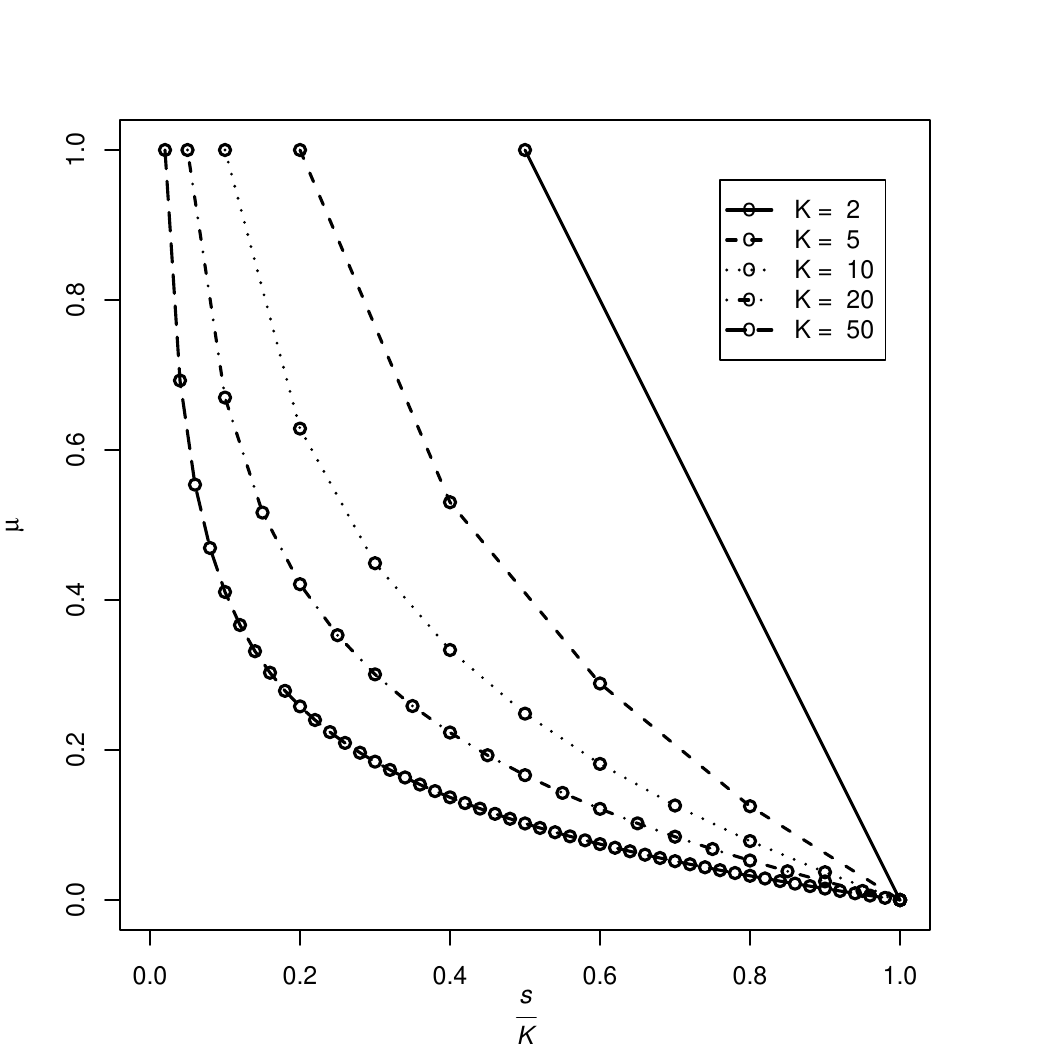} &
\includegraphics[width=0.5\textwidth]{./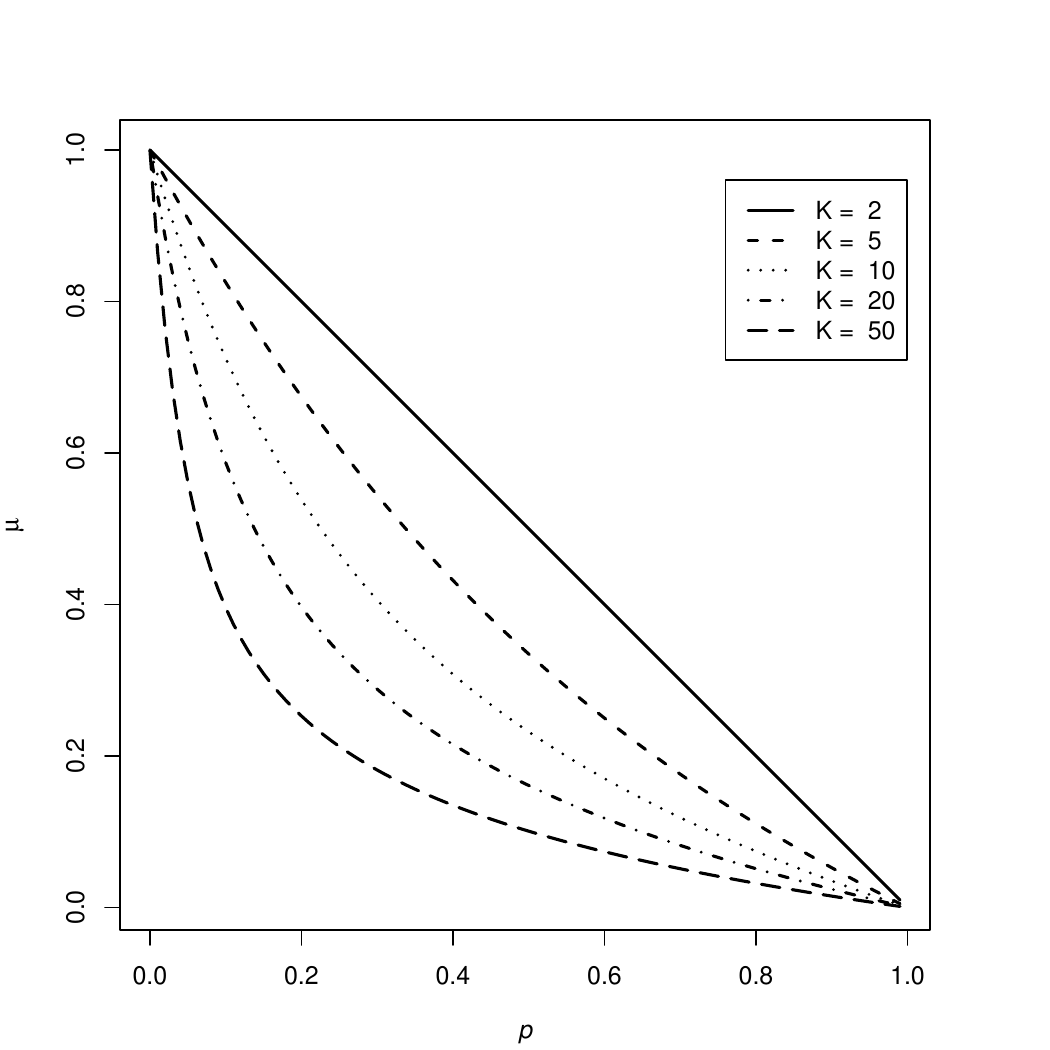}
\end{array}$
\end{center}
\caption{Local identifiability phase boundaries for constant inner-product dictionaries, under Left: the $s$-sparse Gaussian model; Right: the Bernoulli($p$)-Gaussian model. For each model, phase boundaries for different dictionary sizes $K$ are shown. Note that $\frac{s}{K}\in \{\frac{1}{K},\frac{2}{K}, \cdots ,1\}$ and $p \in (0,1]$. The area under the curves is the region where the reference dictionaries are locally identifiable at the population level. Due to symmetry, we only plot the portion of the phase boundaries for $\mu>0$. }
\label{fig:diffK}
\end{figure}

\noindent\textbf{The effect of non-sparse outliers.} Example \ref{eg:allMu} demonstrates how the presence of non-sparse outliers in the Bernoulli-Gaussian model (Figure \ref{fig:data} Right) affects the requirements for local identifiability. Set $p = \frac{s}{K}$ in order to have the same level of sparsity with the $SG(s)$ model. Applying Jensen's inequality, one can show that
\begin{align*}
\frac{1-p}{p(K-1)}\sum_{k=0}^{K-1}\pbinom(k,K-1,p)\sqrt{k}
< \frac{1}{\sqrt{s}}(1-\frac{s-1}{K-1}), 
\end{align*}
indicating that the phase boundary of the $s$-sparse models is always above that of the Bernoulli-Gaussian model with the same level of sparsity. The difference between the two phase boundaries is the extra price one has to pay, in terms of the collinearity parameter $\mu$, for recovering the dictionary locally in the presence of non-sparse outliers. One extreme example is the case where $s=1$ and correspondingly $p = \frac{1}{K}$. By Example \ref{eg:oneSparse}, under a $1$-sparse model the reference dictionary $\D_0$ is always locally identifiable if $|\mu|<1$. But for the $BG(\frac{1}{K})$ model, by the remark in Corollary \ref{thm:popResult2}, $\D_0$ is not locally identifiable if $|\mu| > 1 - \frac{1}{K}$. Hence, the requirement for $\mu$ in the presence of outliers is at least $\frac{1}{K}$ more stringent than that in the case of no outliers. 

However, such a difference diminishes as the number of dictionary atoms $K$ increases. Indeed, by Lemma \ref{lemma:normBounds}, one can show the following lower bound for the phase boundary of under the $BG(p)$ model:
\begin{align*}
\frac{1-p}{p(K-1)}\sum_{k=0}^{K-1}\pbinom(k,K-1,p)\sqrt{k}
\geq \frac{1-p}{\sqrt{p(K-1)+1}} \approx \frac{1}{\sqrt{s}}(1-\frac{s-1}{K-1}),
\end{align*}
for fixed sparsity level $p = \frac{s}{K}$ and large $K$. 
\vspace{2mm}

In general, the dual norms $\vertiii{.}_s^*$ and $\vertiii{.}_p^*$ have no closed-form expressions. According to Corollary \ref{col:SOCP} in the Appendix, computing those quantities involves solving a second order cone problem (SOCP) with a combinatoric number of constraints. The following Lemma \ref{lemma:normBounds}, on the other hand, gives computationally inexpensive approximation bounds.
\begin{definition}
\label{def:tau}
(Hyper-geometric distribution related quantities) Let $m$ be a positive integer and $d,k\in \{0\}\cup \intSet{m}$.
Denote by $L_m(d,k)$ the hypergeometric random variable with parameter $m$, $d$ and $k$, i.e. the number of 1's after drawing without replacement $k$ elements from $d$ 1's and $m-d$ 0's. 
Now for each $d\in\{0\}\cup \intSet{m}$, define the function $\hb{m}{d}{.}$ with domain on $[0,m]$ as follows: set $\hb{m}{d}{0}=0$. For $a\in (k-1,k]$ where $k\in\intSet{m}$, define 
\[\hb{m}{d}{a} =  \hs{m}{d}{k-1} + (\hs{m}{d}{k} - \hs{m}{d}{k-1})(a-(k-1)).\]
\end{definition}

\begin{lemma} 
\label{lemma:normBounds}
(Lower and upper bounds for $\vertiii{.}_s^*$ and $\vertiii{.}_p^*$) Let $m$ be a positive integer and $\z\in\mathbb R^m$.
\begin{enumerate}
\item For $s \in\intSet{m}$,
\[ \max \left(  \|\z\|_\infty, \sqrt{\frac{s}{m}}\max_{T\subset \intSet{m}}\frac{\|\z[T]\|_1}{\sqrt{|T|}} \right) \leq \frac{s}{m}\max_{T\subset \intSet{m}}\frac{\|\z[T]\|_1}{\hb{m}{|T|}{s}} \leq \vertiii{\z}_s^* \leq \max_{S\subset \intSet{m}, |S|=s}\|\z[S]\|_2.\]
\item For $p \in (0,1)$,
\[ 
\max\left( \|\z\|_\infty, \sqrt{p}\max_{T\subset \intSet{m}}\frac{\|\z[T]\|_1}{\sqrt{|T|}} \right) \leq p  \max_{T\subset \intSet{m}}\frac{\|\z[T]\|_1}{\hb{m}{|T|}{pm}}  \leq \vertiii{\z}_p^*\leq \max_{S\subset \intSet{m}, |S|=k}\|\z[S]\|_2. 
\]
where $k = \lceil p(m-1)+1 \rceil$.
\end{enumerate}
\end{lemma}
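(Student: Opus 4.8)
The plan is to treat each dual norm as $\vertiii{\z}^{*}=\sup_{\w\neq 0}\z^{T}\w/\vertiii{\w}$ and to establish the chain one link at a time: the rightmost upper bound by a Cauchy--Schwarz/averaging argument, the inner lower bound by exhibiting an explicit near-optimal test vector, and the leftmost $\max$ by specializing that test vector. For the $SG(s)$ upper bound I would start from the averaging identity that each coordinate lies in exactly $\binom{m-1}{s-1}$ size-$s$ subsets, so that $\sum_{|S|=s}\z[S]^{T}\w[S]=\binom{m-1}{s-1}\,\z^{T}\w$. Applying Cauchy--Schwarz termwise, $\z[S]^{T}\w[S]\le\max_{|S'|=s}\|\z[S']\|_{2}\,\|\w[S]\|_{2}$, and dividing by $\binom{m-1}{s-1}$ recognizes $\vertiii{\w}_{s}$ on the right, giving $\z^{T}\w\le\max_{|S|=s}\|\z[S]\|_{2}\,\vertiii{\w}_{s}$ and hence the upper bound.

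For the inner lower bound I take the test vector $\w_{0}$ equal to $\sgn(\z[i])$ on a set $T$ and $0$ elsewhere, so that $\z^{T}\w_{0}=\|\z[T]\|_{1}$ and $\|\w_{0}[S]\|_{2}=\sqrt{|S\cap T|}$. The key computation is $\tfrac{1}{\binom{m}{s}}\sum_{|S|=s}\sqrt{|S\cap T|}=\hs{m}{|T|}{s}=\hb{m}{|T|}{s}$, the hypergeometric mean, which together with $\binom{m}{s}/\binom{m-1}{s-1}=m/s$ yields $\vertiii{\w_{0}}_{s}=\tfrac{m}{s}\hb{m}{|T|}{s}$; taking the ratio and the supremum over $T$ gives the inner bound. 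The leftmost $\max$ then comes from two specializations: choosing $T$ to be the singleton achieving $\|\z\|_{\infty}$ (for which $\hb{m}{1}{s}=s/m$, making the inner expression exactly $\|\z\|_{\infty}$), and comparing, via Jensen's inequality $\hb{m}{|T|}{s}=\hs{m}{|T|}{s}\le\sqrt{\mathbb E L_{m}(|T|,s)}=\sqrt{s|T|/m}$, term by term.

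For the $BG(p)$ case I would first record the subset representation obtained by expanding the binomial weights,
\[
\vertiii{\w}_{p}=\sum_{\emptyset\neq S\subseteq\intSet{m}}p^{|S|-1}(1-p)^{m-|S|}\,\|\w[S]\|_{2},
\]
together with the fact that these weights sum to $1$ over the subsets containing any fixed coordinate. The lower bound and the leftmost $\max$ then mirror the $SG$ case: the same sign test vector gives $\vertiii{\w_{0}}_{p}=\tfrac1p\,\mathbb E\sqrt{\mathrm{Bin}(|T|,p)}$, after which I would invoke the convex-order comparison between sampling with and without replacement (Hoeffding). By the symmetry $L_{m}(d,pm)\stackrel{d}{=}$ the number of successes when drawing $d$ items without replacement from a population of $m$ with $pm$ successes, the binomial $\mathrm{Bin}(d,p)$ dominates it in convex order, so concavity of $\sqrt{\cdot}$ gives $\mathbb E\sqrt{\mathrm{Bin}(d,p)}\le\hb{m}{d}{pm}$ and hence the claimed inner bound; the two leftmost terms again follow from the singleton choice and from Jensen.

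The main obstacle is the $BG(p)$ upper bound, because the naive Cauchy--Schwarz step fails: in the subset representation the large subsets $|S|>k$ can have $\|\z[S]\|_{2}>\max_{|S'|=k}\|\z[S']\|_{2}$, so one cannot pull out a single constant. My plan is to reduce it to the comparison $\vertiii{\w}_{p}\ge\vertiii{\w}_{k}$ for $k=\lceil p(m-1)+1\rceil$, which by duality yields $\vertiii{\z}_{p}^{*}\le\vertiii{\z}_{k}^{*}$, and then the already-proved $SG$ upper bound (with $s=k$) finishes the job. To establish $\vertiii{\w}_{p}\ge\vertiii{\w}_{k}$ I would show that the sequence $j\mapsto\vertiii{\w}_{j}$ is convex and decreasing for every fixed $\w$ (the endpoints $\vertiii{\w}_{1}=\|\w\|_{1}\ge\|\w\|_{2}=\vertiii{\w}_{m}$ already fix the direction), and then apply discrete Jensen to the binomial average: since $\vertiii{\w}_{p}$ is the expectation of $\vertiii{\w}_{J+1}$ with $J\sim\mathrm{Bin}(m-1,p)$ and mean index $p(m-1)+1\le k$, convexity gives $\vertiii{\w}_{p}\ge\vertiii{\w}_{p(m-1)+1}$ (interpolated) and monotonicity gives $\vertiii{\w}_{p(m-1)+1}\ge\vertiii{\w}_{k}$. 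I expect the convexity of $j\mapsto\vertiii{\w}_{j}$ to be the genuinely hard step and the technical heart of the lemma; a secondary nuisance is that $pm$ and $p(m-1)+1$ are generally non-integer, so the interpolation defining $\hb{m}{d}{\cdot}$ and the discrete-Jensen step must be handled with the piecewise-linear extension rather than at integer points alone.
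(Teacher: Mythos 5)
Most of your plan coincides with the paper's own proof. Your test-vector computation $\vertiii{\w_0}_s=\frac{m}{s}\hb{m}{|T|}{s}$ and the singleton/Jensen specializations are exactly the paper's Lemma \ref{lemma:lower}; your Cauchy--Schwarz proof of the $SG(s)$ upper bound is the weak-duality direction of the paper's SOCP characterization (Corollary \ref{col:dualNorm}) instantiated at the feasible point $\ys=\z[S]$, i.e.\ the same argument in more elementary clothing; and your reduction of the $BG(p)$ upper bound to $\vertiii{\z}_p^*\le\vertiii{\z}_k^*$ via convexity, monotonicity and discrete Jensen is precisely the paper's Lemma \ref{lemma:normRelation}. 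You also correctly single out the convexity of $j\mapsto\vertiii{\w}_j$ as the technical heart: the paper devotes Lemma \ref{lemma:normConvex} (a two-level induction) to it.

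The genuine gap is in your $BG(p)$ inner lower bound, and it is not the routine bookkeeping you describe. Hoeffding's with/without-replacement comparison gives $\mathbb E\sqrt{\mathrm{Bin}(d,p)}\le \hs{m}{d}{pm}$ only when $pm$ is an integer. For non-integer $pm$ the quantity $\hb{m}{d}{pm}$ is a chord value, interpolating $\hs{m}{d}{\lfloor pm\rfloor}$ and $\hs{m}{d}{\lceil pm\rceil}$, and domination at the two integer endpoints does \emph{not} transfer to the interior: the function $g(q)\defeq\mathbb E\sqrt{\mathrm{Bin}(d,q)}$ is concave in $q$ (its second derivative is $d(d-1)$ times an expected second difference of $\sqrt{\cdot}$, hence nonpositive), so $g(p)$ lies \emph{above} the chord of its own endpoint values, while Hoeffding only tells you that $\hb{m}{d}{pm}$ lies above that same chord; the two facts sandwich nothing and no conclusion follows. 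What closes the gap is exactly the paper's Lemma \ref{hypergeom}: $k\mapsto\hs{m}{d}{k}$ is concave, hence its piecewise-linear extension $\hb{m}{d}{\cdot}$ is concave on $[0,m]$. Combined with the exact identity (which your Bernoulli-sampling representation already gives, since conditioning a Bernoulli($p$) subset on its size makes it uniform)
\[
\mathbb E\sqrt{\mathrm{Bin}(d,p)}\;=\;\sum_{k=0}^{m}\pbinom(k;m,p)\,\hs{m}{d}{k}\;=\;\mathbb E\,\hb{m}{d}{J},
\qquad J\sim \mathrm{Bin}(m,p),
\]
Jensen's inequality yields $\mathbb E\sqrt{\mathrm{Bin}(d,p)}\le\hb{m}{d}{\mathbb E J}=\hb{m}{d}{pm}$, which is the paper's argument in Lemma \ref{lemma:lower}, part 2 (and makes Hoeffding unnecessary). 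So your proposal is missing a second substantive lemma --- the hypergeometric concavity --- without which the $BG(p)$ lower bound does not go through for non-integer $pm$.
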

\begin{remark}\hspace{1mm} \\
\noindent(1) We refer readers to Lemma \ref{lemma:lower} and \ref{lemma:upper} for the detailed version of the above results. \\
\noindent(2) Since we agree that $\frac{0}{0}=0$, the case where $T=\emptyset$ does not affect taking the maximum of all subsets.\\
\noindent(3) Consider a sparse vector $\z = (z,0,...,0)^T\in \mathbb R^m$. By Corollary \ref{thm:oneMu},
\[\vertiii{\z}_s^* = \vertiii{\z}_p^* = |z| = \|\z\|_\infty = \max_{S\subset \intSet{m}, |S|=1}\|\z[S]\|_2.\] 
So the all the bounds are achievable by a sparse vector. \\
\noindent(4) Now consider a dense vector $\z = (z,...,z)^T\in\mathbb R^m$. By Corollary \ref{thm:constMu},
\[\vertiii{\z}_s^* = \sqrt{s}|z| = \sqrt{\frac{s}{m}}\max_{T\subset \intSet{m}}\frac{\|\z[T]\|_1}{\sqrt{|T|}} =\max_{S\subset \intSet{m}, |S|=s}\|\z[S]\|_2.\]
Thus the bounds for  $\vertiii{\z}_s^*$ can also be achieved by a dense vector. Similarly, by the upper-bound for $\vertiii{\z}_p^*$,
\[
\vertiii{\z}_p^* \leq \sqrt{pm+1}|z|.
\]
On the other hand,
\begin{align*}
\vertiii{\z}_p^* & \geq \sqrt{p}\max_{T\subset \intSet{m}}\frac{\|\z\|_1}{\sqrt{|T|}} 
= \sqrt{p}|z|\max_{T\subset \intSet{m}}\sqrt{|T|}
= \sqrt{pm}|z|.
\end{align*}
Thus both bounds for $\vertiii{\z}_p^*$ are basically the same for large $pm$. \\
\noindent (5) \textbf{Computation.} To compute the lower and upper bounds efficiently, we first sort the elements in $|\z|$ in descending order. Without loss of generality, we can assume that $|\z[1]|\geq |\z[2]| \geq ... \geq |\z[m]|$. Thus the upper-bound quantity becomes
\[
\max_{S\subset \intSet{m}, |S|=k}\|\z[S]\|_2 = (\sum_{i=1}^k \z[i]^2)^{1/2}.
\]  
For the lower-bound quantities, note that
\[
\max_{T\subset \intSet{m}}\frac{\|\z[T]\|_1}{\hb{m}{|T|}{k}}
= \max_{d\in\intSet{m}} \max_{T\subset \intSet{m},|T|=d} \frac{\sum_{i=1}^d|\z[i]|}{\hb{m}{d}{k}}
= \max_{d\in\intSet{m}} \frac{\sum_{i=1}^d|\z[i]|}{\hb{m}{d}{k}}.
\]
Thus, the major computation burden now is to compute $\hb{m}{d}{k} = \hs{m}{d}{k}$, for all $d\in \intSet{m}$. We do not know a closed-form formula for $\hs{m}{d}{k}$ except for $d=1$ or $d=m$. In practice, we compute $\hs{m}{d}{k}$ using its definition formula. On an OS X laptop with 1.8 GHz Intel Core i7 processor and 4GB of memory, the function {\tt dhyper} in the statistics software {\tt R} can compute $\hs{2000}{d}{1000}$ for all $d\in\intSet{2000}$ within 0.635 second. Note that the number of dictionary atoms in most applications is usually smaller than $2000$. 

In case $m$ is too large, the LHS lower bounds can be used. Note that 
\[
\max_{T\subset \intSet{m}}\frac{\|\z[T]\|_1}{\sqrt{|T|}}
=\max_{d\in\intSet{m}} \frac{\sum_{i=1}^d|\z[i]|}{\sqrt{d}},
\]
which can be computed easily. 
\end{remark}
\vspace{2mm}

For notational simplicity, we will define the following quantities that are involved in Lemma \ref{lemma:normBounds}:
\begin{definition}
For $a\in(0,K)$, define
\[
\LOWER{\SigDO}{a} = \max_{1\leq j\leq K} \max_{S\subset \intSet{K}, j\not\in S} \frac{\|\SigDO[S,j]\|_1}{\hb{K-1}{|S|}{a}}.
\]
%
\end{definition}

\begin{definition} (Cumulative coherence)
For $ k \in\intSet{K-1}$, define the {\it $k$-th cumulative coherence} of a reference dictionary $\D_0$ as
\[
\MU{\SigDO}{k} = \max_{1\leq j\leq K}\max_{S\subset \intSet{K}, |S|=k, j\not \in S}\|\SigDO[S,j]\|_2.
\]
\end{definition}
\begin{remark}
The above quantity is actually the $l_2$ analog of the $l_1$ $k$-th cumulative coherence defined in \cite{Jenatton2012}. Also, notice that $\MU{\SigDO}{1} = \max_{l\ne j}|\SigDO[l,j]|$ which is the plain mutual coherence of the reference dictionary.
 \end{remark}
 \vspace{2mm}
  
With the above definitions and as a direct consequence of the above Lemma \ref{lemma:normBounds}, we obtain a sufficient condition and a necessary condition for population local identifiability:
\begin{corollaryM}
\label{thm:popResult2}
Under the notations of Theorem \ref{thm:popResult}, 
we have
\begin{enumerate}[leftmargin=*]
\item Let $K\geq 2$ and $s\in\intSet{K-1}$. 
\begin{itemize}[leftmargin=*]
\item If $ \MU{\SigDO}{s}  < 1 - \frac{s-1}{K-1}$, then $\D_0$ is locally identifiable with respect to $L_{SG(s)}$; 
\item If $ \frac{s}{K-1}\LOWER{\SigDO}{s}> 1 - \frac{s-1}{K-1}$, then $\D_0$ is not locally identifiable with respect to $L_{SG(s)}$.
\end{itemize}
\item Let $K\geq 2$ and $p\in(0,1)$. 
\begin{itemize}[leftmargin=*]
\item If $ \MU{\SigDO}{k}  < 1 - p$, where $k = \lceil p(K-2)+1 \rceil$, then $\D_0$ is locally identifiable with respect to $L_{BG(p)}$; 
\item If $p\LOWER{\SigDO}{k} > 1 - p$, where $k = p(K-1)$, then $\D_0$ is not locally identifiable with respect to $L_{BG(p)}$.
\end{itemize}
\end{enumerate}
\end{corollaryM}
\begin{remark}\hspace{1mm}\\
\noindent (1) In particular, by Lemma \ref{lemma:normBounds}, if $\MU{\SigDO}{1} > 1- \frac{s-1}{K-1}$ or $\MU{\SigDO}{1}>1-p$, then $\D_0$ is not locally identifiable.\\
(2) We can also replace $ \frac{s}{K-1}\LOWER{\SigDO}{s}$ or $p\LOWER{\SigDO}{k}$ by the corresponding lower bound quantities in Lemma \ref{lemma:normBounds} which are easier to compute but give weaker necessary conditions.
\end{remark}
\vspace{2mm}

\noindent \textbf{Comparison with GS.} Corollary \ref{thm:popResult2} allows us to compare our local identifiability condition directly with that of GS. For the Bernoulli($p$)-Gaussian model, the population version of the sufficient condition for local identifiability by GS is: 
\begin{align}
\label{eqn:GSbound} 
\MU{\SigDO}{K-1} = \max_{1\leq j\leq K}\|\SigDO[-j,j]\|_2 < 1 - p.
\end{align}
Note that $\MU{\SigDO}{K-1} \geq \MU{\SigDO}{k}$ for $k \leq K -1$.
Thus, our local identifiability result implies that of GS. Moreover,  the quantity $\|\SigDO[-j,j]\|_2$ in inequality (\ref{eqn:GSbound}) computes the $l_2$-norm of the entire $\SigDO[-j,j]$ vector and is independent of the sparsity parameter $p$. On the other hand, in our sufficient condition $\max_{|S|=k, j\not \in S}\|\SigDO[S,j]\|_2$ computes the largest $l_2$-norm of all size-$k$ sub-vectors of $\SigDO[-j,j]$. Since $k = \lceil p(K-2)+1 \rceil$ is essentially $pK$, in the case where the model is sparse and the dictionary atoms collinearity matrix $\SigDO$ is dense,  the sufficient bound by GS is most conservative compared to ours. 

More concretely, let us consider constant inner-product dictionaries with parameter $\mu>0$ as in Example \ref{eg:allMu}.  The sufficient condition by GS and the sufficient condition given by Corollary \ref{thm:popResult2} are respectively
\[
\sqrt{K}\mu \leq 1 - p \text{ and } \sqrt{pK + 1}\mu \leq 1 - p,
\]
showing that the sufficient condition by GS is much more conservative for small value of $p$. See Figure \ref{fig:errorK10} for a graphical comparison of the bounds for $K=10$. 
\vspace{2mm}

\noindent\textbf{Local identifiability for sparsity level $O(\coh^{-2})$.} For notational convenience, let $\coh = \MU{\SigDO}{1}$ be the mutual coherence of the reference dictionary. For the $s$-sparse model, by Lemma \ref{lemma:normBounds}, $ \MU{\SigDO}{s} \leq \sqrt{s}\coh$. Thus the first part of the corollary implies a simpler sufficient condition:  
\[\sqrt{s}\coh < 1-\frac{s-1}{K-1}. \]
From the above inequality, it can be seen that if $1 - \frac{s-1}{K-1} > \delta$ for some $\delta > 0$, the reference dictionary is locally identifiable for sparsity level $s$ up to the order $ O(\coh^{-2})$. 

Similarly for the Bernoulli($p$)-Gaussian model, since 
\[ \MU{\SigDO}{k} \leq \sqrt{pK+1}\coh,\] 
we have the following sufficient condition for local identifiability:
\[\sqrt{pK+1}\coh\leq 1 - p.\]
As before, if $1-p>\delta$ for some $\delta>0$, the reference dictionary is locally identifiable for sparsity level $pK$ up to the order $O(\coh^{-2})$. On the other hand, the condition by GS  requires $K = O(\coh^{-2})$, which, does not take advantage of sparsity. 

In addition, by Example \ref{eg:allMu} and the remark under Lemma \ref{lemma:normBounds}, we also know that the sparsity requirement $O(\coh^{-2})$ cannot be improved in general. 

Our result seems to be the first to demonstrate  $O(\coh^{-2})$ is the optimal order of sparsity level for exact local recovery of a reference dictionary. For a predefined over-complete dictionary, classical results such as \cite{Donoho2003} and \cite{Fuchs2004} show that basis pursuit recovers an $s$-sparse linear coefficient vector with sparsity level $s$ up to the order $O(\coh^{-1})$. For over-complete dictionary learning, \cite{Geng2011} showed that exact local recovery is also possible for $s$-sparse model with $s$ up to $O(\coh^{-1})$. While our results are only for complete dictionaries, we conjecture that $O(\coh^{-2})$ is also the optimal order of sparsity level for over-complete dictionaries. In fact, \cite{Schnass2014b} proved that the response maximization criterion -- an alternative formulation of dictionary learning -- can approximately recover the over-complete reference dictionary locally with sparsity level $s$ up to $O(\coh^{-2})$. It will be of interest to investigate whether the same sparsity requirement hold for the $l_1$-minimization dictionary learning (\ref{noiseless1}) in the case of exact local recovery and over-complete dictionaries.



\section{Finite sample analysis}
\label{sec:finite}
In this section, we will present finite sample results for local dictionary identifiability. 
For notational convenience, we first define the following quantities:
\begin{align*}
\PP_1(\epsilon,N;\coh,K) & = 2\exp\left(-\frac{N\epsilon^2}{108K\coh}\right),\\
\PP_2(\epsilon,N;p,K) & = 2\exp\left(-p \frac{N\epsilon^2}{18p^2K+9\sqrt{2pK}}\right),\\
\PP_3(\epsilon,N;p,K) & =3\left(\frac{24}{\epsilon p}+1\right)^K\exp\left(-p\frac{N\epsilon^2}{360}\right).
\end{align*}

Recall that $\SigDO = \D_0^T\D_0$ and $\MU{\SigDO}{1}$ is the mutual coherence of the reference dictionary $\D_0$. The following two theorems give local identifiability conditions under the $s$-sparse Gaussian model and the Bernoulli-Gaussian model:
\begin{theoremM}
\label{thm:finiteSSparse}
(Finite sample local identifiability for $SG(s)$ models) Let $\alphabf_i\in \mathbb R^K$, $i\in\intSet{N}$, be $i.i.d$ $SG(s)$ random vectors with $s\in \intSet{K-1}$. The signals $\x_i$'s are generated as $\x_i = \D_0\alphabf_i$.  Assume $0<\epsilon\leq \frac{1}{2}$,
\begin{enumerate}
\item If 
\[\max_{j \in \intSet{K}}\vertiii{\SigDO[-j,j]}_s^*\leq 1-\frac{s-1}{K-1}-\sqrt{\frac{\pi}{2}}\epsilon,\]
then $\D_0$ is locally identifiable with respect to $L_N(\D)$ with probability exceeding
\[
1 - K^2\left(\PP_1(\epsilon,N; \MU{\SigDO}{1},K) + \PP_2(\epsilon,N;\frac{s}{K},K) + \PP_3(\epsilon,N;\frac{s}{K},K)\right).
\]
\item If 
\[\max_{j \in \intSet{K}}\vertiii{\SigDO[-j,j]}_s^*\geq 1-\frac{s-1}{K-1}+\sqrt{\frac{\pi}{2}}\epsilon,\] 
then $\D_0$ is not locally identifiable with respect to $L_N(\D)$ with probability exceeding
\[
1 - K\left(\PP_1(\epsilon,N; \MU{\SigDO}{1},K) + \PP_2(\epsilon,N;\frac{s}{K},K) + \PP_3(\epsilon,N;\frac{s}{K},K)\right).
\]
\end{enumerate}
\end{theoremM}

\begin{theoremM}
\label{thm:finiteBerGau}
(Finite sample local identifiability for $BG(p)$ models) Let $\alphabf_i\in \mathbb R^K$, $i\in\intSet{N}$, be $i.i.d$ $BG(p)$ random vectors with $p\in (0,1)$. The signals $\x_i$'s are generated as $\x_i = \D_0\alphabf_i$. Let $K_p = K+2p^{-1}$ and assume $0<\epsilon\leq \frac{1}{2}$,
\begin{enumerate}
\item If 
\[\max_{j \in \intSet{K}}\vertiii{\SigDO[-j,j]}_p^*\leq 1-p-\sqrt{\frac{\pi}{2}}\epsilon,\] 
then $\D_0$ is locally identifiable with respect to $L_N(\D)$ with probability exceeding
\[
1 - K^2\left(\PP_1(\epsilon,N; \MU{\SigDO}{1},K_p) + \PP_2(\epsilon,N;p,K_p) + \PP_3(\epsilon,N;p,K)\right).
\]
\item If 
\[\max_{j \in \intSet{K}}\vertiii{\SigDO[-j,j]}_p^*\geq 1-p+\sqrt{\frac{\pi}{2}}\epsilon,\]
then $\D_0$ is not locally identifiable with respect to $L_N(\D)$ with probability exceeding
\[
1 - K\left(\PP_1(\epsilon,N; \MU{\SigDO}{1},K_p) + \PP_2(\epsilon,N;p,K_p) + \PP_3(\epsilon,N;p,K)\right).
\]
\end{enumerate}
\end{theoremM}
The conditions for finite sample local identifiability are essentially identical as their population counterparts. The only difference is an margin of $\sqrt{\frac{\pi}{2}}\epsilon$ on the RHS of the inequalities. Such a margin appears in the conditions because of our proof techniques: we show that the derivative of $L_N$ is within $O(\epsilon)$ of its expectation and then impose conditions on the expectation. 
\vspace{2mm}

\noindent\textbf{Sample size requirement.} The theorems indicate that if the number of signals is a multiple of the following quantity,
\[
\text{For $SG(s)$: }\frac{1}{\epsilon^2}\max\left\{
\MU{\SigDO}{1} K\log K, \  s\log K, 
\ \frac{K}{s}K\log \left(\frac{K}{\epsilon s}\right)
\right\} 
\]
\[
\text{For $BG(p)$: }\frac{1}{\epsilon^2}\max\left\{
\MU{\SigDO}{1} K\log K, \  pK\log K, 
\ \frac{1}{p}K\log \left(\frac{1}{\epsilon p}\right)
\right\} 
\]
 then with high probability we can determine whether or not $\D_0$ is locally identifiable. For the ease of analysis, let us now treat $\epsilon$ as a constant. Thus, in the worst case, the sample size requirements for the two models are, respectively,
\[O(\frac{K\log K}{\frac{s}{K}}) \text{ and } O(\frac{K\log K}{p}).\] 
Apart from playing a role in determining whether $\D_0$ is locally identifiable, the sparsity parameters $s$ and $p$ also affect the sample size requirement. As discussed in the population results, the sparser the linear coefficient $\alphabf_i$, the less constraint on the dictionary atom collinearity. However, with finite samples, more signals are needed to guarantee the validity of the local identifiability conditions for sparse models. 

Our sample size requirement is similar to that of GS, who shows that $O(\frac{K\log K}{p (1-p)})$ signals is enough for locally recovering an incoherent reference dictionary. Our result indicates the $1-p$ factor in the denominator can be removed. 
\vspace{2mm}
 
The following two corollaries are the finite sample counterparts of Corollary \ref{thm:popResult2}. 
\begin{corollaryM}
\label{thm:finiteSSparse2}
Under the same assumptions of Theorem \ref{thm:finiteSSparse},
\begin{enumerate}
\item (Sufficient condition for $SG(s)$ models) If
\[
\MU{\SigDO}{s}\leq 1-\frac{s-1}{K-1}-\sqrt{\frac{\pi}{2}}\epsilon,
\] 
then $\D_0$ is locally identifiable with respect to $L_N(\D)$, with the same probability bound in the first part of Theorem \ref{thm:finiteSSparse}.
\item (Necessary condition for $SG(s)$ models) If
\[
\frac{s}{K-1}\LOWER{\SigDO}{s} \geq 1-\frac{s-1}{K-1}+\sqrt{\frac{\pi}{2}}\epsilon,
\] 
then $\D_0$ is not locally identifiable with respect to $L_N(\D)$, with the same probability bound in the second part of Theorem \ref{thm:finiteSSparse}.
\end{enumerate}
\end{corollaryM}

\begin{corollaryM}
\label{thm:finiteBerGau2}
Under the same assumptions of Theorem \ref{thm:finiteBerGau},
\begin{enumerate}
\item (Sufficient condition for $BG(p)$ models) Let $k = \lceil p(K-1)+1 \rceil$. If 
\[
\MU{\SigDO}{k} \leq 1-p-\sqrt{\frac{\pi}{2}}\epsilon,
\] 
then $\D_0$ is locally identifiable with respect to $L_N(\D)$, with the same probability bound in the first part of Theorem \ref{thm:finiteBerGau}.
\item  (Necessary condition for $BG(p)$ models) Let $k = p(K-1)$. If 
\[
p\LOWER{\SigDO}{k} \geq 1-p+\sqrt{\frac{\pi}{2}}\epsilon,
\] 
then $\D_0$ is not locally identifiable with respect to $L_N(\D)$, with the same probability bound in the second part of Theorem \ref{thm:finiteBerGau}.
\end{enumerate}
\end{corollaryM}
\begin{remark}
As before, denote by $\mu \in [0,1)$ the coherence of the reference dictionary. The above two corollaries together with the remark under Corollary \ref{thm:popResult2} indicate that the reference dictionary is locally identifiable with high probability for sparsity level $s$ or $pK$ up to the order $O(\coh^{-2})$.
\end{remark}
\vspace{2mm}

\noindent\textbf{Proof sketch for Theorem \ref{thm:finiteSSparse} and \ref{thm:finiteBerGau}.} 
Similar to the population case, by taking one-sided derivatives of $L_N(\D_t)$ with respect to $t$ at $t=0$ for all smooth $\{\D_t\}_{t\in \mathbb R}$, we derive a sufficient and almost necessary algebraic condition for the reference dictionary $\D_0$ to be a local minimum of $L_N(\D)$. Using the concentration inequalities in Lemma 
\ref{lemma:L} - \ref{lemma:T}, we show that the random quantities involved in the algebraic condition are close to their expectations with high probability. The population results for local identifiability can then be applied. The proofs for the two signal generation models are conceptually the same after establishing Lemma \ref{lemma:normRelation} to relate the $\vertiii{.}_p^*$ norm to the $\vertiii{.}_s^*$ norm. The detailed proof can be found in Section \ref{proof:finiteResult}.
\vspace{2mm}

\noindent\textbf{Comparison with the proof by GS.} The key difference between our analysis and that of GS is that we use an alternative but equivalent formulation of dictionary learning. Instead of (\ref{noiseless1}), GS studied the following problem: 
\begin{align}
\label{noiseless0}
\min_{\D\in\Dset,\alphabf_i} & \frac{1}{N} \sum_{i=1}^N \|\alphabf_i\|_1 \\
 \text{subject to } & \x_i = \D\alphabf_i \text{ for all $i\in\intSet{N}$.} \nonumber
\end{align}
Note that the above formulation optimizes jointly over $\D$ and $\alphabf_i$ for $i\in\intSet{N}$, as opposed to optimizing with respect to the only parameter $\D$ in our case. For complete dictionaries, this formulation is equivalent to the formulation in $(\ref{noiseless1})$ in the sense that $\hat{\D}$ is a local minimum of (\ref{noiseless1}) if and only if $(\hat{\D},\hat{\D}^{-1}[\x_1,...,\x_N])$ is a local minimum of (\ref{noiseless0}), see Remark 3.1 of GS. The number of parameters to be estimated in (\ref{noiseless0}) is $(K-1)K+KN$, compared to $(K-1)K$ free parameters in $(\ref{noiseless1})$. The growing number of parameters make the formulation employed by GS less trackable to analyze under a signal generation model.

GS did not study the population case. In their analysis, GS first obtained an algebraic condition for local identifiability that is sufficient and almost necessary. However, their condition is convoluted due to its direct dependence on the signals $\x_i$'s. In order to make their condition more explicit in terms of  dictionary atom collinearity and sparsity level, they then investigated the condition under the Bernoulli-Gaussian model. During the probabilistic analysis, the sharp algebraic condition was weakened, resulting in a sufficient condition that is far from being necessary. 

In contrast, we start with probabilistic generative models. The number of parameters is not growing as $N$ increases, which, allows us to study the population problem directly and to apply concentration inequalities for the finite sample problem. There is little loss of information during the process of obtaining identifiability results from first principles. Therefore, studying the optimization problem (\ref{noiseless1}) instead of (\ref{noiseless0}) is the key to establishing an interpretable sufficient and almost necessary local identifiability condition.

\section{Conclusions and future work}
We have established sufficient and almost necessary conditions for local dictionary identifiability under both the $s$-sparse model and the Bernoulli-Gaussian model in the case of noiseless signals and complete dictionaries. For finite sample with a fixed sparsity level, we have shown that as long as the number of $i.i.d$ signals scales as $O(K\log K)$, with high probability we can determine whether or not a reference dictionary is locally identifiable by checking the identifiability conditions. 

There are several directions for future research. First of all, here we only study the local behaviors of the $l_1$-norm objective function.
As pointed out by GS, numerical experiments in two dimensions suggest that local minima are in fact global minima, see Figure 2 of GS. Thus, it is of interest to investigate whether the conditions developed in this paper for local identifiability are also sufficient and almost necessary for global identifiability. 

Moreover, one can extend our results to a wider class of sub-Gaussian distributions other than the standard Gaussian distribution considered in this paper. We foresee little technical difficulties for this extension. However, it should be noted that the quantities involved in our local identifiability conditions, i.e. the $\vertiii{.}_s^*$ and $\vertiii{.}_p^*$ norms, are consequences of the standard Gaussian assumption. Under a different distribution, it can be even more challenging to compute and approximate those quantities. 

Finally, it would be also desirable to improve the sufficient condition by \cite{Geng2011} and \cite{Jenatton2012}  for over-complete dictionaries and noisy signals. One of the implications of our identifiability condition is that local recovery is possible for sparsity level up to the order $O(\coh^{-2})$ for a $\mu$-coherent reference dictionary. We conjecture the same sparsity requirement holds for the over-complete and/or the noisy signal case. In either case, the closed-form expression for the objective function is no longer available. A full characterization of local dictionary identifiability requires us to develop new techniques to analyze the local behaviors of the objective function.


\section*{Acknowledgments}
This research is supported in part by the Citadel Fellowship at the Department of Statistics of UCB, NHGRI grant U01HG007031,  NSF grants DMS-1107000,  CDS\&E-MSS 1228246, DMS-1160319 (FRG), ARO grant W911NF-11-1-0114, AFOSR grant FA9550-14-1-0016,
and the Center for Science of Information (CSoI), a US NSF Science and Technology Center, under grant agreement CCF-0939370. The authors would like to thank Sivaraman Balakrishnan and Yu Wang for helpful comments on the manuscript.

\appendix
\section{Appendix}
\label{sec:appendix}
Let $\Obj(\D)$ be a function of $\D\in \Dset$ and $\{\D_t\}_{t\in\mathbb R}$ be the collection of dictionaries $\D_t\in\Dset$ parameterized by $t\in\mathbb R$.  By definition, $\{\D_t\}_{t\in\mathbb R}$ passes through the reference dictionary $\D_0$ at $t=0$. To ensure that $\D_0$ is a local minimum of $\Obj(\D)$, it suffices to have
\[
\lim_{t\downarrow 0^+}\frac{\Obj(\D_t) - \Obj(\D_0)}{t}>0 \text{ and } 
\lim_{t\uparrow 0^-}\frac{\Obj(\D_t) - \Obj(\D_0)}{t}<0,
\]
for all $\{\D_t\}_{t\in\mathbb R}$ that is a smooth function of $t$. On the other hand, if either of the above strict inequalities holds in the reverse direction for some smooth $\{\D_t\}_{t\in\mathbb R}$,  then $\D_0$ is not a local minimum of $\Obj(\D)$.

Since $\D_0$ is full rank by assumption, the minimum eigenvalue of $\SigDO = \D_0^T\D_0$ is  strictly greater than zero. By continuity of the minimum eigenvalue of $\D_t^T\D_t$ (see e.g. Bauer-Fike Thoerem),  when $\D_t$ and $\D_0$ are sufficiently close, $\D_t$ should also be full rank. Thus without loss of generality we only need to work with full rank dictionary $\D_t$. For any full rank $\D\in \Dset$, there is a full rank matrix $\A\in \mathbb R^{K\times K}$ such that $\D = \D_0 \A$. For any $k\in\intSet{K}$, by the constraint $\|\D[,k]\|_2 = 1$, the matrix $\A$ should satisfy $\A[,k]^T\SigDO \A[,k] = 1$. Define the set for all such $\A$'s as:
\begin{align}
\label{setA}
\mathcal A = \{\A \in \mathbb R^{K\times K}: \A \text{ is invertible and } \A[,k]^T\SigDO \A[,k] = 1 \text{ for all $k\in\intSet{K}$}\}.
\end{align}
It follows immediately that the set $\{\D_0\A:\A\in\mathcal A\}$ is the collection of $\D\in \Dset$ such that $\D$ is full rank. Thus, to ensure that $\D_0$ is a local minimum of $\Obj(\D)$, it suffices to show
\begin{align}
\label{eqn:gradientPlus}
\Delta^+(\Obj,\{\A_t\}_{t}) \defeq & \lim_{t\downarrow 0^+}\frac{\Obj(\D_0\A_t) - \Obj(\D_0)}{t}> 0, \\
\label{eqn:gradientMinus}
\Delta^-(\Obj,\{\A_t\}_{t}) \defeq & \lim_{t\uparrow 0^-}\frac{\Obj(\D_0\A_t) - \Obj(\D_0)}{t}< 0,
\end{align}
for all smooth functions $\{\A_t\}_{t\in\mathbb R}$ with $\A_t \in \mathcal A$ and $\A_0 = \Iden$. In addition, to demonstrate that $\D_0$ is not a local minimum of $\Obj(\D)$, it suffices to have (\ref{eqn:gradientPlus}) or (\ref{eqn:gradientMinus}) to hold in the reverse direction for some $\{\A_t\}_t$ with the aforementioned properties. 
We will be using this characterization of local minimum to prove local identifiability results for both the population case and the finite sample case.

\subsection{Proofs of the population results}
\subsubsection{Proof of Lemma \ref{lemma:closedForm}}
\label{proof:closedForm}
\begin{proof}
Since $\mathbb E ||\Hbf\alphabf_1||_1 = \sum_{j=1}^K  \mathbb E |\Hbf[j,]\alphabf_1|$, it suffices to compute $\mathbb E |\Hbf[j,]\alphabf_1|$. 
Let $S$ be any nonempty subset of $\intSet{K}$. Recall that the random variable $\Set_1\subset \intSet{K}$ denotes the support of random coefficient $\alphabf_1$. Conditioning on the event $\{\Set_1 = S\}$, the random variable $\Hbf[j,]\alphabf_1$ follows a normal distribution with mean $0$ and standard deviation $\|\Hbf[j,S]\|_2$. Hence
\begin{align*}
\mathbb E \left|\Hbf[j,]\alphabf_1\right| = \mathbb E [\mathbb E[\left|\Hbf[j,]\alphabf_1\right| |\Set_1]] = \sqrt{\frac{2}{\pi}}\mathbb E \|\Hbf[j,\Set_1]\|_2.
\end{align*}
\noindent (1) Under the $s$-sparse Gaussian model, $\mathbb P(\Set_1 = S) = {K\choose s}^{-1}$ for any $|S|=s$. Thus we have
\begin{align*}
\mathbb E \|\Hbf[j,\Set_1]\|_2 
 = {K \choose s}^{-1}\sum_{S:|S|=s} \|\Hbf[j,S]\|_2
=\frac{s}{K} \vertiii{\Hbf[j,]}_s.   
\end{align*}
Hence the objective function for the $s$-sparse Gaussian model is
\begin{align*}
L_{SG(s)}(\D)  = \sum_{j=1}^K \mathbb E \left|\Hbf[j,]\alphabf_1\right|  
= \sqrt{\frac{2}{\pi}}\frac{s}{K} \sum_{j=1}^K \vertiii{\Hbf[j,]}_s.
\end{align*}
In particular, for $s=K$, $\vertiii{\Hbf[j,]}_{K} = \|\Hbf[j,]\|_2$ and so
\[L_{SG(s)}(\D) = \sqrt{\frac{2}{\pi}}\sum_{j=1}^K \|\Hbf[j,]\|_2.\]
\noindent (2) Under the Bernoulli($p$)-Gaussian model, $\mathbb P(\Set_1 = S) = p^{|S|}(1-p)^{K-|S|}$. So we have
\begin{align*}
\mathbb E [\|\Hbf[j,\Set_1]\|_2]
&  =\sum_{k=1}^K\sum_{S:|S|=k} p^k(1-p)^{K-k} \|\Hbf[j,S]\|_2  \\
& = p  \sum_{k=0}^{K-1} \pbinom(k;K-1,p)  \vertiii{\Hbf[j,]}_{k+1}.
\end{align*}
Therefore for $p\in(0,1)$, the objective function under the Bernoulli-Gaussian model is
\begin{align*}
L_{BG(p)}(\D)  = \sum_{j=1}^K \mathbb E \left|\Hbf[j,]\alphabf_1\right|  
& = \sqrt{\frac{2}{\pi}}p\sum_{j=1}^K \vertiii{\Hbf[j,]}_p.
\end{align*}
Finally, if $p=1$, we have 
\[L_{BG(p)}(\D) = \sqrt{\frac{2}{\pi}}\sum_{j=1}^K \|\Hbf[j,]\|_2.\]
\end{proof}

\subsubsection{Proof of Theorem \ref{thm:popResult}}
\label{proof:popResult}
\begin{proof}
(1) Let us first consider the $s$-sparse Gaussian model. By (\ref{eqn:gradientPlus}) and (\ref{eqn:gradientMinus}), to ensure that $\D_0$ is a local minimum of $L_{SG(s)}(\D)$, it suffices to show
\begin{align}
\label{eqn:gradientPM}
\Delta^+(L_{SG(s)}, \{\A_t\}_t) > 0 \text{ and }
\Delta^-(L_{SG(s)}, \{\A_t\}_t) < 0,
\end{align}
for all smooth functions $\{\A_t\}_t$ with $\A_t \in \mathcal A$, where $\mathcal A$ is defined in (\ref{setA}), and $\A_0 = \Iden$. Note that by Lemma \ref{lemma:closedForm}, 
\begin{align}
\label{plusForm}
\Delta^+(L_{SG(s)}, \{\A_t\}_t) = \sqrt{\frac{2}{\pi}}\frac{s}{K}\sum_{j=1}^K\lim_{t\downarrow 0^+}\frac{1}{t}\left(\vertiii{\A^{-1}_t[j,]}_s - \vertiii{\Iden[j,]}_s\right).
\end{align}
For a fixed $j\in\intSet{K}$, we have
\begin{align}
\label{jsplit}
{K-1\choose s-1}\vertiii{\A^{-1}_t[j,]}_s = \sum_{S:|S|=s,j\in S}\|\A^{-1}_t[j,S]\|_2 + \sum_{S:|S|=s,j\not \in S}\|\A^{-1}_t[j,S]\|_2
\end{align}
Denote by $\dot{\A}_0\in\mathbb R^{K\times K}$ the derivative of $\{\A_t\}_t$ at $t=0$. Since $\A_t\in \mathcal A$ for all $t \in \mathbb R$, it can be shown that 
\begin{align}
\label{mu}
\SigDO[,k]^T\dot{\A}_0[,k] = 0 \ \text{ for all $k\in\intSet{K}$.}
\end{align}
By (\ref{mu}), we have 
\begin{align}
\label{mu2}
\dot{\A}_0[j,j] = - \sum_{i\ne j}\SigDO[i,j]\dot{\A}_0[i,j] \ \text{for all $j\in\intSet{K}$.}
\end{align} 
Now notice that
\begin{align}
\label{inverse}
\frac{d \A_t^{-1}}{dt}\bigg|_{t=0} = - \A_0^{-1}\dot{\A}_0\A_0^{-1}
=  - \dot{\A}_0. 
\end{align}
Combining the above equality with Lemma \ref{lemma:diffNorm1} and \ref{lemma:diffNorm}, we have
\begin{align*}
\lim_{t\downarrow 0^+}\frac{1}{t}\left(\|\A^{-1}_t[j,S]\|_2 - \|\Iden[j,S]\|_2\right) = \left\{
\begin{array}{l l}
-\dot{\A}_0[j,j] & \quad \text{if $j\in S$} \\ 
\|\dot{\A}_0[j,S]\|_2 & \quad \text{if $j\not \in S$} 
\end{array}
\right.
\end{align*}
Therefore
\begin{align}
\label{deriv}
\lim_{t\downarrow 0^+}\frac{1}{t}\left(\vertiii{\A^{-1}_t[j,]}_s - \vertiii{\Iden[j,]}_s \right)= 
-\dot{\A}_0[j,j] + {K-1 \choose s-1}^{-1}\sum_{S:|S|=s, j\not\in S}\|\dot{\A}_0[j,S]\|_2.
\end{align}

\noindent Combining (\ref{plusForm}), (\ref{jsplit}), (\ref{mu2}) and (\ref{deriv}), we have 
\begin{align*}
\sqrt{\frac{\pi}{2}}\frac{K}{s}\Delta^+(L_{SG(s)}, \{\A_t\}_t)  & = 
-\sum_{j=1}^K\dot{\A}_0[j,j] + {K-1 \choose s-1}^{-1}\sum_{j}\sum_{S:|S|=s, j\not\in S}||\dot{\A}_0[j,S]||_2\\
& = \sum_{j=1}^K\Big(\sum_{i\ne j} \SigDO[i,j] \dot{\A}_0[j,i] + {K-1 \choose s-1}^{-1}\sum_{S:|S|=s, j\not\in S}\|\dot{\A}_0[j,S]\|_2\Big).
\end{align*}
Similarly, one can show 
\begin{align*}
\sqrt{\frac{\pi}{2}}\frac{K}{s}\Delta^-(L_{SG(s)}, \{\A_t\}_t)   = \sum_{j=1}^K\Big(\sum_{i\ne j} \SigDO[i,j] \dot{\A}_0[j,i] - {K-1 \choose s-1}^{-1}\sum_{S:|S|=s, j\not\in S}\|\dot{\A}_0[j,S]\|_2\Big).
\end{align*}
Thus for $s\in\intSet{K-1}$, to establish (\ref{eqn:gradientPM}) it suffices to require for each $j\in\intSet{K}$,
\begin{align}
\label{eq:basicInq}
 \Big|\sum_{i\ne j} \SigDO[i,j] \dot{\A}_0[j,i]\Big| < \frac{K-s}{K-1} {K-2 \choose s-1}^{-1}\sum_{S:|S|=s,j\not\in S}\|\dot{\A}_0[j,S]\|_2  = \frac{K-s}{K-1} \vertiii{\dot{\A}_0[j,-j]}_s.
\end{align}
for any $\dot{\A}_0$ such that $\dot{\A}_0[j,-j] \ne 0$. Since $\dot{\A}_0[j,i]$ is a free variable for $i\neq j$, (\ref{eq:basicInq}) is equivalent to
\[
\Big|\SigDO[-j,j]^T\w\Big| <  \frac{K-s}{K-1},
\]
for all $\w \in \mathbb R^{K-1}$ such that $\vertiii{\w}_s = 1$. Thus by the definition of the dual norm,  it suffices to have
\[
\vertiii{\SigDO[-j,j]}_s^* = \sup_{\vertiii{\w}_s = 1} \Big|\SigDO[-j,j]^T\w\Big|\ <  \frac{K-s}{K-1}.
\]
Therefore, the condition
\begin{align}
\label{eqn:suffSSparse}
\max_{1\leq j\leq K}\vertiii{\SigDO[-j,j]}_s^* < \frac{K-s}{K-1} = 1-\frac{s-1}{K-1}.
\end{align}
is sufficient for $\D_0$ to be locally identifiable with respect to the objective function $L_{SG(s)}$.

Similiarly, one can check that if the reversed strict inequality in (\ref{eqn:suffSSparse}) holds, $\D_0$ is not a local minimum of $L_{SG(s)}(\D)$. Thus we complete the proof for the $s$-sparse model.

\noindent (2) Now consider the Bernoulli($p$)-Gaussian model for $p\in (0,1)$. First of all, note that we have
\begin{align*}
& \sqrt{\frac{\pi}{2}}\frac{1}{p}\Delta^{\pm}(L_{BG(p)},\{\A_t\}_t) 
=  \sum_{j=1}^{K}\lim_{t\rightarrow 0^\pm} \frac{1}{t}\left( \vertiii{\A^{-1}_t[j,]}_p-\vertiii{\Iden[j,]}_p\right)  \\
= & \sum_{j=1}^K\bigg(\sum_{i\ne j} \SigDO[i,j] \dot{\A}_0[j,i] \pm (1-p)\sum_{k=0}^{K-2} p^{k}(1-p)^{K-2-k}\sum_{S:|S|=k+1, j\not\in S}\|\dot{\A}_0[j,S]\|_2\bigg) \\
= & \sum_{j=1}^K\bigg(\dot{\A}_0[j,-j]^T\SigDO[-j,j]  \pm (1-p)\sum_{k=0}^{K-2} \pbinom(k;K-2,p)  \vertiii{\dot{\A}_0[j,-j]}_{k+1}\bigg) \\
= & \sum_{j=1}^K\bigg(\dot{\A}_0[j,-j]^T\SigDO[-j,j] \pm (1-p)\vertiii{\dot{\A}_0[j,-j]}_{p}\bigg).
\end{align*}
Thus, similar to the $s$-sparse Gaussian case, it can be shown that a sufficient condition for local identifiability is
\[
\left|\SigDO[-j,j]^T\w\right| <  1 - p, 
\]
for all $j\in\intSet{K}$ and all $\w\in \mathbb R^{K-1}$ such that $\vertiii{\w}_p = 1$. The above condition is equivalent to
\[
\max_{1\leq j \leq K}\vertiii{\SigDO[-j,j]}_p^* < 1 - p.
\]
The rest of the proof can be proceeded as in the case of the $s$-sparse Gaussian model.

\noindent (3) Now let us consider the non-sparse case where $s = K$ or $p=1$. In this case, since the objective functions are the same under both models (see Theorem \ref{thm:popResult}), we only need to consider the $s$-sparse Gaussian model. If $s=K$, the RHS quantity in Inequality (\ref{eq:basicInq}) is zero. Thus,  the reference dictionary is not locally identifiable if
\[
\Big|\SigDO[-j,j]^T\w\Big| > 0,
\]
for some $j\in\intSet{K}$ and $\w\in\mathbb R^{K-1}$. Thus, if $\SigDO$ is not the identity matrix, or equivalently, if the reference dictionary $\D_0$ is not orthogonal, $\D_0$ is not locally identifiable. 

Next, let us deal with the case where $\D_0$ is orthogonal.  Let $\D\in\Dset$ be a full rank dictionary and $\W = \D^{-1}$. Since $\D_0$ is orthogonal, $\|\W[j,]\D_0\|_2 = \|\W[j,]\|_2$. By the fact that $\W\D=\Iden$ and $\|\D[,j]\|_2 = 1$, we have $1 = \W[j,]\D[,j] \leq \|\W[j,]\|_2\|\D[,j]\|_2 = \|\W[j,]\|_2$, where the equality holds iff $\W[j,]^T = \pm \D[,j]$. 

Under the $K$-sparse Gaussian model, 
\[
L_{SG(K)}(\D) = \sqrt{\frac{2}{\pi}}\sum_{j=1}^K\|\W[j,]\D_0\|_2 = \sqrt{\frac{2}{\pi}}\sum_{j=1}^K\|\W[j,]\|_2 \geq \sqrt{\frac{2}{\pi}}K = L_{SG(K)}(\D_0),
\]
where the equality holds for any $\D$ such that $\D^T\D=\I$. Thus, $L_{SG(K)}(\D_0) = L_{SG(K)}(\D_0\mathbf U)$ for any orthogonal matrix $\mathbf U\in\mathbb R^{K\times K}$, i.e. the objective function remains the same as we rotate $\D_0$. Therefore, $\D_0$ is not a local minimum of $L_{SG(K)}$.

In conclusion, $\D_0$ is not locally identifiable when $s=K$ or $p=1$.






\end{proof}

\subsection{Proofs of the finite sample results: Theorem \ref{thm:finiteSSparse} and Theorem \ref{thm:finiteBerGau}}
\label{proof:finiteResult}
\begin{proof}
 We will first recall the signal generation procedure in Section \ref{sec:prelim}. Let $\z$ be a $K$-dimensional standard Gaussian vector, and $\xibf\in \{0,1\}^K$ be either an $s$-sparse random vector or a Bernoulli random vector with probability $p$. Let $\z_1,...,\z_N$ and $\xibf_1,...,\xibf_N$ be identical and independent copies of $\z$ and $\xibf$ respectively. For each $i \in\intSet{N}$ and $j\in\intSet{K}$, define $\alphabf_i[j] = \z_i[j]\xibf_i[j]$. For $S\subset\intSet{K}$ with $1\leq |S| \leq K-1$, define
 \begin{align*}
\chi_i(S) = \left\{
\begin{array}{l l}
1 & \quad \text{if $\xibf_i[k]=1$ for all $k \in S$ and $\xibf_i[k]=0$ for all $k\in S^c$, } \\ 
0 & \quad \text{otherwise.} 
\end{array}
\right.
\end{align*}
On the other hand, if $S=\intSet{K}$, define $\chi_i(S) = 1$ if $\xibf_i[k]=1$ for all $k \in \intSet{K}$ and $\chi_i(S) = 0$ otherwise.  As in the population case, in the following analysis we will work with full rank dictionaries $\D$. First of all, notice that
\begin{align*}
l(\D,\x_i) = \|\D^{-1}\x_i\|_1 = \|\D^{-1}\D_0\alphabf_i\|_1 =
\sum_{j=1}^K\left|\A^{-1}[j,]\alphabf_i\right|
& =\sum_{j=1}^K\sum_{k=1}^K\left(\sum_{S:|S|=k}|\A^{-1}[j,S]\z_i[S]|\chi_i(S)\right).
\end{align*}
Next, we have
\begin{align}
\label{eq:threeParts}
\Delta^+(l(.,\x_i),\{\A_t\}_t)
& 
= \lim_{t\downarrow 0^+}
\frac{1}{t} \left(l(\D_0\A_t,\x_i) - l(\D_0,\x_i)\right)\nonumber \\
& = \sum_{j=1}^K
\bigg(-\sum_{k=1}^K\sum_{S:j\in S, |S|=k}\dot{\A}_0[j,j]|\z_i[j]|\chi_i(S) \nonumber \\
& \quad -\sgn(\z_i[j])\sum_{k=2}^K\sum_{S:j\in S, |S|=k} \sum_{l\in S, l\ne j}\dot{\A}_0[j,l]\z_i[l]\chi_i(S)  \nonumber \\
& \quad + \sum_{k=1}^{K-1}\sum_{S:j\not \in S, |S|=k}|\dot{\A}_0[j,S]\z_i[S]|\chi_i(S)\bigg).
\end{align}
Here $\sgn(x)$ is the sign function of $x\in\mathbb R$ such that $\sgn(x) = 1$ for $x>0$, $\sgn(x) = -1$ for $x<0$ and $\sgn(x) = 0$ for $x=0$. By (\ref{mu2}), the first term in (\ref{eq:threeParts}) can be rearranged as follows
\begin{align*}
-\sum_{j=1}^K|\z_i[j]|\sum_{k=1}^K\sum_{S:j\in S, |S|=k}\dot{\A}_0[j,j]\chi_i(S)
& = \sum_{j=1}^K|\z_i[j]|\sum_{k=1}^K\sum_{S:j\in S, |S|=k}\sum_{l\ne j}\SigDO[l,j]\dot{\A}_0[l,j]\chi_i(S) \\ 
& = \sum_{j=1}^K\sum_{l\ne j} \SigDO[j,l]\dot{\A}_0[j,l]\left(|\z_i[l]|\sum_{k=1}^K\sum_{S:l\in S, |S|=k}\chi_i(S)\right).
\end{align*}
The second term in (\ref{eq:threeParts}) can be rewritten as
\begin{align*}
-\sum_{j=1}^K \sgn(\z_i[j])\times\sum_{l\ne j}(\dot{\A}_0[j,l]\z_i[l])\times\sum_{k=2}^K\sum_{S:\{j,l\}\in S, |S|=k}\chi_i(S).
\end{align*}
For $j,l\in\intSet{K}$ such that $j\ne l$, define the following quantities 
\begin{align}
\label{Fi}
\LL_i[l,j] & = \SigDO[j,l]|\z_i[l]|\sum_{k=1}^K\sum_{S:l\in S, |S|=k}\chi_i(S), \\  
\label{Gi}
\QQ_i[l,j] & = \sgn(\z_i[j])\z_i[l]\sum_{k=2}^K\sum_{S:\{j,l\}\in S, |S|=s}\chi_i(S),
\end{align}
whereas $\LL[j,j] = \QQ[j,j] = 0$. For each $j\in\intSet{K}$, also define 
\begin{align}
\label{ti}
\TT_i[j](\w) & = \sum_{k=1}^{K-1}\sum_{S:j\not \in S, |S|=k}|\w[S]^T\z_i[S]|\chi_i(S).
\end{align}
Let $\bar{\LL}$, $\bar{\QQ}$ and $\bar{\TT}$ be the sample average of $\LL_i$, $\QQ_i$ and $\TT_i$ respectively. With the definitions (\ref{Fi}) -- (\ref{ti}), we have
\begin{align*}
\Delta^+ (L_N,\{\A_t\}_t)
& = \frac{1}{N}\sum_{i=1}^N \Delta^+( l(.,\x_i),\{\A_t\}_t)\\
& = \sum_{j=1}^K \frac{1}{N}\sum_{i=1}^N \left(\dot{\A}_0[j,]\LL_i[,j] + \dot{\A}_0[j,]\QQ_i[,j] + \TT_i[j](\dot{\A}_0[j,])\right) \\
& = \sum_{j=1}^K \left(\dot{\A}_0[j,]\bar{\LL}[,j] - \dot{\A}_0[j,]\bar{\QQ}[,j] + \bar{\TT}[j](\dot{\A}_0[j,])\right) 
\end{align*}
On the other hand, 
\[\Delta^-(L_N,\{\A_t\}_t)= \sum_{j=1}^K \left( \dot{\A}_0[j,]\bar{\LL}[,j] - \dot{\A}_0[j,]\bar{\QQ}[,j] - \bar{\TT}[j](\dot{\A}_0[j,])\right). 
\] 
Now for $j\in \intSet{K}$, $s\in \intSet{K-1}$ and $p\in(0,1)$, define 
\[\mathcal E_j(s) = \{\w\in\mathbb R^K, \vertiii{\w[-j]}_s= 1,\w[j]=0\},\] 
\[\mathcal F_j(p) = \{\w\in\mathbb R^K, \vertiii{\w[-j]}_p= 1,\w[j]=0\}.\] 
Thus to ensure that $\D_0$ is a local minimum, it suffices to have for each $j\in\intSet{K}$,
\[
\HH_j(\w) \defeq \left| \w^T \bar{\LL}[,j] - \w^T\bar{\QQ}[,j]\right| - \bar{\TT}[j](\w) < 0,
\]
for all $\w\in \mathcal E_j(s)$ for the $s$-sparse Gaussian model or all $\w\in \mathcal F_j(p)$ for the Bernoulli($p$)-Gaussian model. 

\noindent (1) For the $s$-sparse Gaussian model, let $j\in \intSet{K}$ and define
\[
\EH_j(\w) = \sqrt{\frac{2}{\pi}}\frac{s}{K}\left(\left|\w^T\SigDO[,j]\right|-\frac{K-s}{K-1}\right),
\]
which can be thought of as the expected value of $\HH_j(\w)$. Note that by triangle inequality,
\begin{align}
& \sup_{\w\in \mathcal E_j(s)}\left|\HH_j(\w) - \EH_j(\w)\right| \nonumber \\  
\leq & \sup_{\w\in \mathcal E_j(s)}\left|\w^T\left(\bar{\LL}[,j]-\sqrt{\frac{2}{\pi}}\frac{s}{K}\SigDO[,j] \right)\right| +
 \sup_{\w\in \mathcal E_j(s)} \left|\w^T\bar{\QQ}[,j]\right| + 
 \sup_{\w\in \mathcal E_j(s)} \left|\bar{\TT}[j](\w) - \sqrt{\frac{2}{\pi}}\frac{s}{K}\frac{K-s}{K-1}\right| \nonumber \\
\label{tri}
= & \vertiii{\bar{\LL}[-j,j]-\sqrt{\frac{2}{\pi}}\frac{s}{K}\SigDO[-j,j]}_s^* +
\vertiii{\bar{\QQ}[-j,j]}_s^* +
\sup_{\w\in \mathcal E_j(s)} \left|\bar{\TT}[j](\w) - \sqrt{\frac{2}{\pi}}\frac{s}{K}\frac{K-s}{K-1}\right|.
\end{align}
Thus, $\sup_{\w\in \mathcal E_j(s)}|\HH_j(\w) - \EH_j(\w)|>\frac{s}{K}\epsilon$ implies at least one of the three terms on the RHS is greater than $\frac{s}{K}\frac{\epsilon}{3}$. Using a union bound and by Lemma \ref{lemma:L}--\ref{lemma:T}, we have
\begin{align}
\label{eq:probJ}
\mathbb P \left\{ \sup_{\w\in \mathcal E_j(s)} |\HH_j(\w) - \EH_j(\w)|> \frac{s}{K}\epsilon \right\}
& \leq  2K\exp\left(-\frac{N\epsilon^2}{108K\|\SigDO[-j,j]\|_\infty}\right) \nonumber \\
& \quad \quad + 2K\exp\left(-\frac{s}{K} \frac{N\epsilon^2}{18(s/K)s+9\sqrt{2s}}\right) \nonumber\\
& \quad \quad + 3\left(\frac{24K}{\epsilon s}+1\right)^K\exp\left(-\frac{s}{K}\frac{N\epsilon^2}{360}\right). 
\end{align}
It is easy to see that the event $\left\{\sup_{\w\in \mathcal E_j(s)} |\HH_j(\w) - \EH_j(\w)|\leq \frac{s}{K}\epsilon\right\}$ implies
\begin{align}
\label{eq:infF}
\sup_{\w\in \mathcal E_j(s)} \EH_j(\w)-\frac{s}{K}\epsilon
\leq \sup_{\w\in \mathcal E_j(s)} \HH_j(\w)
\leq \sup_{\w\in \mathcal E_j(s)} \EH_j(\w)+\frac{s}{K}\epsilon.
\end{align}
On the other hand, 
\[\sup_{\w\in \mathcal E_j(s)} \EH_j(\w) = \sqrt{\frac{2}{\pi}}\frac{s}{K}\left( \vertiii{\SigDO[-j,j]}_s^* - \frac{K-s}{K-1} \right).\]
Thus, if $\vertiii{\SigDO[-j,j]}_s^* < \frac{K-s}{K-1} - \sqrt{\frac{\pi}{2}}\epsilon$, $\sup_{\w\in \mathcal E_j(s)}\HH_j(\w) < 0$ except with probability at most the bound in (\ref{eq:probJ}). To ensure $\D_0$ to be a local minimum, it suffices to have $\sup_{\w\in \mathcal E_j(s)}\HH_j(\w) < 0 $ for all $j\in\intSet{K}$. Thus, if $\vertiii{\SigDO[-j,j]}_s^* <  \frac{K-s}{K-1}-\sqrt{\frac{\pi}{2}}\epsilon$ for all $j\in\intSet{K}$, we have
\begin{align*}
\mathbb P \left\{ \text{$\D_0$ is locally identifiable} \right\}
&\geq  \mathbb P \left\{ \max_{j}\sup_{\w\in \mathcal E_j(s)}\HH_j(\w) < 0  \right\} \\ 
&\geq 1 - \mathbb P \left\{ \max_{j}\sup_{\w\in \mathcal E_j(s)}\HH_j(\w) \geq 0 \right\} \\ 
&\geq 1 - \sum_{j=1}^K \mathbb P \left\{ \sup_{\w\in \mathcal E_j(s)}\HH_j(\w) \geq 0  \right\} \\ 
&\geq 1 - \sum_{j=1}^K \mathbb P \left\{ \sup_{\w\in \mathcal E_j(s)} |\HH_j(\w) - \EH_j(\w)|> \frac{s}{K}\epsilon \right\} \\ 
& \geq 1 - 2K^2\exp\left(-\frac{N\epsilon^2}{108K\max_{l\ne j}|\SigDO[l,j]|}\right) \\
& \quad \quad - 2K^2\exp\left(-\frac{s}{K} \frac{N\epsilon^2}{18(s/K)s+9\sqrt{2s}}\right) \\
& \quad \quad - 3K\left(\frac{24K}{\epsilon s}+1\right)^K\exp\left(-\frac{s}{K}\frac{N\epsilon^2}{360}\right). 
\end{align*}
On the other hand, to ensure $\D_0$ is not locally identifiable with high probability, it suffices to have $\vertiii{\SigDO[-j,j]}_s^* > \frac{K-s}{K-1}+\sqrt{\frac{\pi}{2}}\epsilon$ for some $j\in\intSet{K}$. Indeed, under that condition, the LHS inequality in (\ref{eq:infF}) implies $\sup_{\w\in \mathcal E_j(s)}\HH_j(\w) > 0$. Therefore 
\begin{align*}
\mathbb P \left\{ \text{$\D_0$ is not locally identifiable} \right\}
&\geq \mathbb P \left\{\sup_{\w\in \mathcal E_j(s)}\HH_j(\w) > 0  \right\} \\
&\geq 1 - \mathbb P \left\{\sup_{\w\in \mathcal E_j(s)}\HH_j(\w) \leq 0 \right\} \\
&\geq 1 - \mathbb P \left\{ \sup_{\w\in \mathcal E_j(s)} |\HH_j(\w) - \EH_j(\w)|> \frac{s}{K}\epsilon \right\} \\ 
& \geq 1-  2K\exp\left(-\frac{N\epsilon^2}{108K\|\SigDO[-j,j]\|_\infty}\right) \\
& \quad \quad - 2K\exp\left(-\frac{s}{K} \frac{N\epsilon^2}{18(s/K)s+9\sqrt{2s}}\right) \\
& \quad \quad - 3\left(\frac{24K}{\epsilon s}+1\right)^K\exp\left(-\frac{s}{K}\frac{N\epsilon^2}{360}\right).
\end{align*}
\noindent (2) For the Bernoulli($p$)-Gaussian model, define
\[
\nu_j(\w) = \sqrt{\frac{2}{\pi}}p\left(\left|\w^T\SigDO[,j]\right|-(1-p)\right).
\]
\noindent Similar to (\ref{tri}), by triangle inequality,
\begin{align*}
& \sup_{\w\in \mathcal F_j(p)}\left|\HH_j(\w) - \nu_j(\w)\right| \\  
\leq &\vertiii{\bar{\LL}[-j,j]-\sqrt{\frac{2}{\pi}}p\SigDO[-j,j]}_p^* +
\vertiii{\bar{\QQ}[-j,j]}_p^* +
\sup_{\w\in \mathcal F_j(p)} \left|\bar{\TT}[j](\w) - \sqrt{\frac{2}{\pi}}p(1-p)\right|.
\end{align*}
Then the analysis can be carried out in a similar manner using the parallel version of the concentration inequalities, i.e. Part 2 of Lemma \ref{lemma:L}--\ref{lemma:T}.
\end{proof}

\subsection{Concentration inequalities}
We will make frequent use of the following version of Bernstein's inequality. The proof of the inequality can be found in, e.g. Chapter 14 of \cite{Buhlmann2011}.
\begin{theorem}(Bernstein's inequality)
Let $Y_1,...,Y_N$ be independent random variables that satisfy the moment condition
\[
\mathbb E Y_i^m \leq \frac{1}{2}\times V \times m! \times B^{m-2}, 
\] 
for integers $m\geq 2$. Then
\[
\mathbb P \left\{ \frac{1}{N}|\sum_{i=1}^N Y_i - \mathbb E Y_i| > \epsilon \right\}
\leq 2\exp\left(-\frac{N\epsilon^2}{2V+2B\epsilon}\right).
\]
\end{theorem}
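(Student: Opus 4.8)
The plan is to prove Bernstein's inequality by the classical exponential-moment (Chernoff) method, which is the route taken in the cited reference. First I would reduce to a one-sided bound: it suffices to show $\mathbb P\{\frac{1}{N}\sum_{i=1}^N (Y_i - \mathbb E Y_i) > \epsilon\} \leq \exp(-\frac{N\epsilon^2}{2V + 2B\epsilon})$, since applying the identical estimate to $-Y_i$ and adding the two tail probabilities produces the factor of $2$ in the stated two-sided bound. Write $\xi_i = Y_i - \mathbb E Y_i$, so that $\mathbb E \xi_i = 0$ and, possibly after transferring the hypothesis to the centered variables, the moment condition reads $\mathbb E |\xi_i|^m \leq \frac{1}{2} V\, m!\, B^{m-2}$ for all integers $m \geq 2$.

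The key step is to control the moment generating function of a single centered summand. For $0 < t < 1/B$ I would Taylor-expand $\mathbb E e^{t\xi_i} = 1 + t\,\mathbb E \xi_i + \sum_{m\geq 2} \frac{t^m}{m!}\,\mathbb E \xi_i^m$, drop the linear term using $\mathbb E \xi_i = 0$, and insert the moment bound to get $\mathbb E e^{t \xi_i} \leq 1 + \frac{V t^2}{2}\sum_{m\geq 2}(tB)^{m-2} = 1 + \frac{V t^2}{2(1 - tB)}$, where the geometric series converges precisely because $tB < 1$. Applying $1 + x \leq e^x$ yields $\mathbb E e^{t\xi_i} \leq \exp(\frac{V t^2}{2(1-tB)})$, and independence tensorizes this into $\mathbb E \exp(t\sum_{i=1}^N \xi_i) \leq \exp(\frac{N V t^2}{2(1-tB)})$.

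With the MGF estimate in hand I would apply Markov's inequality to $e^{t\sum_i \xi_i}$, obtaining $\mathbb P\{\sum_{i=1}^N \xi_i > N\epsilon\} \leq \exp(-tN\epsilon + \frac{NVt^2}{2(1-tB)})$, and then optimize over $t \in (0, 1/B)$. The choice $t = \frac{\epsilon}{V + B\epsilon}$ lies in the admissible range and gives $1 - tB = \frac{V}{V + B\epsilon}$, hence $\frac{Vt^2}{2(1-tB)} = \frac{t\epsilon}{2}$, so the exponent collapses to $-tN\epsilon + \frac{Nt\epsilon}{2} = -\frac{Nt\epsilon}{2} = -\frac{N\epsilon^2}{2(V + B\epsilon)}$, which is exactly the claimed rate. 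Symmetrizing then delivers the stated two-sided inequality.

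The main obstacle --- indeed the only non-mechanical point --- is selecting $t$ and checking that it simultaneously stays inside $(0, 1/B)$ and makes the exponent simplify exactly to $-\epsilon^2/(2(V + B\epsilon))$; this is where the interplay between the variance proxy $V$ and the scale parameter $B$ is pinned down, producing the characteristic Bernstein crossover between sub-Gaussian behavior for small $\epsilon$ (denominator $\approx 2V$) and sub-exponential behavior for large $\epsilon$ (denominator $\approx 2B\epsilon$). A secondary subtlety, worth only a remark, is the passage from the raw-moment hypothesis on $Y_i$ to a moment bound on the centered $\xi_i$: unless one simply assumes $\mathbb E Y_i = 0$, a short Jensen/triangle-inequality argument (or a harmless rescaling of $V$ and $B$) is needed, but it leaves the structure of the argument untouched.
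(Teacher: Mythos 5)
Your proposal is correct, and it is essentially the same argument as the paper's: the paper does not prove this theorem itself but defers to Chapter 14 of B\"uhlmann and van de Geer (2011), where the proof is exactly this Chernoff/exponential-moment computation --- series expansion of the moment generating function, the bound $1+x\le e^x$, tensorization by independence, Markov's inequality, and the choice $t=\epsilon/(V+B\epsilon)$, which indeed lies in $(0,1/B)$ and collapses the exponent to $-N\epsilon^2/(2V+2B\epsilon)$.

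One caution about your closing remark: rescaling $V$ and $B$ to pass from raw to centered moments is \emph{not} harmless here, since the conclusion's constants $2V+2B\epsilon$ are pinned to the hypothesis' $V$ and $B$; a factor-$2^m$ loss in the $m$-th moment would degrade the final bound. The clean way to handle the uncentered hypothesis is to write $\mathbb E\, e^{t(Y_i-\mathbb E Y_i)}=e^{-t\mathbb E Y_i}\,\mathbb E\, e^{tY_i}$ and apply $1+x\le e^x$ to the \emph{entire} expansion $\mathbb E\, e^{tY_i}=1+t\,\mathbb E Y_i+\sum_{m\ge 2}\frac{t^m}{m!}\mathbb E Y_i^m$; the mean term then cancels exactly, leaving $\mathbb E\, e^{t(Y_i-\mathbb E Y_i)}\le \exp\bigl(\sum_{m\ge 2}\frac{t^m}{m!}\mathbb E Y_i^m\bigr)$, after which your geometric-series bound applies verbatim with the original $V$ and $B$. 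For the lower tail (needed for the two-sided statement and the factor $2$) one must read the hypothesis as a bound on $\mathbb E |Y_i|^m$, as in the cited reference, since a one-sided bound on the signed odd moments of $Y_i$ says nothing about those of $-Y_i$.
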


\begin{lemma}(Uniform concentration of $\bar{\LL}[-j,j]$)
\label{lemma:L} For $i\in \intSet{N}$, let $\LL_i \in \mathbb R^{K\times K}$ be defined as in (\ref{Fi}) and $\bar{\LL} = (1/N) \sum_{i=1}^N\LL_i$.
\begin{enumerate}
\item Under the $s$-sparse Gaussian model with $s\in\intSet{K-1}$,
\[
\mathbb P\left\{
\vertiii{\bar{\LL}[-j,j]-\sqrt{\frac{2}{\pi}}\frac{s}{K}\SigDO[-j,j]}_s^* > \frac{s}{K}\epsilon
\right\}
\leq  2K\exp\left(-\frac{N\epsilon^2}{12K\|\SigDO[-j,j]\|_\infty}\right),
\]
for $0 < \epsilon \leq 1$.
\item Under the Bernoulli-Gaussian model with parameter $p\in(0,1)$,
\[
\mathbb P\left\{
\vertiii{\bar{\LL}[-j,j]-\sqrt{\frac{2}{\pi}}p\SigDO[-j,j]}_p^* > p\epsilon
\right\}
 \leq 
2K\exp\left(-\frac{N\epsilon^2}{12(K+2p^{-1})\|\SigDO[-j,j]\|_\infty}\right),
\]
for $0 < \epsilon \leq 1$.
\end{enumerate}
In particular, if $\|\SigDO[-j,j]\|_\infty=0$, then the RHS bound is trivially zero.
\end{lemma}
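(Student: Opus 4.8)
The plan is to turn the dual-norm deviation into a coordinatewise scalar concentration statement and then invoke Bernstein's inequality. First I would simplify $\LL_i$. Observe that $\sum_{k=1}^K\sum_{l\in S,|S|=k}\chi_i(S)$ merely indicates whether $l$ lies in the support of $\alphabf_i$, i.e. it equals $\xibf_i[l]$. Hence $\LL_i[l,j]=\SigDO[j,l]\,|\z_i[l]|\,\xibf_i[l]$, and by independence of $\z_i$ and $\xibf_i$ together with $\mathbb E|Z|=\sqrt{2/\pi}$ and $\mathbb E\,\xibf_i[l]=\tfrac sK$ (resp. $p$), its mean is $\sqrt{2/\pi}\,\tfrac sK\,\SigDO[j,l]$ (resp. $\sqrt{2/\pi}\,p\,\SigDO[j,l]$). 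Thus the $l$-th entry of the centered vector $\bar{\LL}[-j,j]-\sqrt{2/\pi}\,\tfrac sK\,\SigDO[-j,j]$ is $\tfrac1N\sum_{i=1}^N\SigDO[j,l]\big(|\z_i[l]|\xibf_i[l]-\sqrt{2/\pi}\,\tfrac sK\big)$, an average of i.i.d. centered variables.

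Next I would reduce the dual norm to an $l_\infty$ quantity. By the upper bound in Lemma~\ref{lemma:normBounds}, $\vertiii{\w}_s^*\le\max_{|S|=s}\|\w[S]\|_2\le\sqrt{s}\,\|\w\|_\infty$ (and $\vertiii{\w}_p^*\le\sqrt{k}\,\|\w\|_\infty$ with $k=\lceil p(K-2)+1\rceil$ in the Bernoulli case). Consequently the event $\{\vertiii{\w}_s^*>\tfrac sK\epsilon\}$ is contained in $\{\|\w\|_\infty>\tfrac{\sqrt s}{K}\epsilon\}$, and a union bound over the $K-1$ coordinates $l\ne j$ leaves a scalar tail bound for each average above. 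Coordinates with $\SigDO[j,l]=0$ contribute identically zero, which matches the last sentence of the statement.

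The core of the argument is the scalar Bernstein step, and the key observation that makes the constants come out right is this: because $\xibf_i[l]^m=\xibf_i[l]$, every moment satisfies $\mathbb E\big(|\z_i[l]|\xibf_i[l]\big)^m=\tfrac sK\,\mathbb E|Z|^m$, so each moment carries exactly one factor of $\tfrac sK$ (resp. $p$). After centering and using $\mathbb E|Z|^m\le\tfrac12\,m!\,C^{m-2}$ for a universal constant $C$, the summand $\SigDO[j,l]\big(|\z_i[l]|\xibf_i[l]-\text{mean}\big)$ satisfies the Bernstein moment condition with variance parameter $V$ of order $\tfrac sK\SigDO[j,l]^2$ and scale $B$ of order $|\SigDO[j,l]|$. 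Feeding $\delta=\tfrac{\sqrt s}{K}\epsilon$ into Bernstein, the factor $\tfrac sK$ in $V$ cancels against $\delta^2$, which is of order $\tfrac s{K^2}\epsilon^2$, leaving a single power of $K$ in the denominator; bounding $\SigDO[j,l]^2\le\|\SigDO[-j,j]\|_\infty$ (since off-diagonal entries are $<1$ in absolute value) and using $\epsilon\le1$ collapses the denominator to a multiple of $K\|\SigDO[-j,j]\|_\infty$. Fixing the numerical constant at $12$ and attaching the $2K$ from the union bound gives the stated bound. The Bernoulli case is identical after replacing $\tfrac sK$ by $p$ and $\sqrt s$ by $\sqrt k$.

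The hard part will be bookkeeping the exact dependence on $K$ (resp. $K_p=K+2p^{-1}$) and on $\|\SigDO[-j,j]\|_\infty$ to the first power, rather than $K^2$ or $\|\SigDO[-j,j]\|_\infty^2$. This hinges on two cancellations: the sparsity factor $\tfrac sK$ (resp. $p$) that the Bernstein variance inherits from the Bernoulli indicator must cancel the $\sqrt s$ (resp. $\sqrt k$) produced by the dual-norm-to-$l_\infty$ passage, and, in the Bernoulli case, the $+1$ and the ceiling in $k=\lceil p(K-2)+1\rceil$ must be tracked through the inequality $\tfrac pk\ge 1/(K+2p^{-1})$ to produce precisely the $2p^{-1}$ correction in $K_p$. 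Once these are handled, the remaining scalar Bernstein application and union bound are routine.
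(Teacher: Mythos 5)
Your proposal follows the paper's own proof essentially step for step: the reduction $\vertiii{\cdot}_s^*\le\sqrt{s}\,\|\cdot\|_\infty$ (and $\vertiii{\cdot}_p^*\le\sqrt{k}\,\|\cdot\|_\infty$ with $k\le pK+2$ via Lemma~\ref{lemma:normBounds}), the key observation that the support indicator is idempotent so every moment of $|\z_i[l]|\xibf_i[l]$ carries exactly one factor of $\tfrac{s}{K}$ (resp.\ $p$), the coordinatewise Bernstein bound with scale $B\asymp|\SigDO[j,l]|$ and $|\SigDO[j,l]|<1$, and the union bound over $l\ne j$ are precisely the paper's steps, including both cancellations you single out. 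The argument is correct and complete in outline; no gap.
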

\begin{proof} (1)
First of all, we will prove the inequality for the $s$-sparse model. Notice that by Lemma \ref{lemma:normBounds}, we have
\begin{align*}
\vertiii{\bar{\LL}[-j,j]-\sqrt{\frac{2}{\pi}}\frac{s}{K}\SigDO[-j,j]}_s^* 
\leq & \max_{ |S|=s,j\not\in S} \| \bar{\LL}[S,j]-\sqrt{\frac{2}{\pi}}\frac{s}{K}\SigDO[S,j]\|_2 \\
\leq & \sqrt{s} \max_{l\ne j} | \bar{\LL}[l,j]-\sqrt{\frac{2}{\pi}}\frac{s}{K}\SigDO[l,j]|.
\end{align*}
For convenience, define
\[\vv_i[l] = |\z_i[l]|\sum_{k=1}^K\sum_{|S|=k, l\in S}\chi_i(S) - \sqrt{\frac{2}{\pi}}\frac{s}{K}.\]
for $i\in\intSet{N}$ and $l\in\intSet{K}$. Note that $\sum_{k=1}^K\sum_{l\in S,|S|=k}\chi_i(S) = 1$ with probability ${K \choose s}^{-1}{K-1\choose s-1} = \frac{s}{K}$. Thus 
\[
\mathbb E \left(\sum_{k=1}^K\sum_{|S|=k,l\in S}\chi_i(S)\right)^m = \frac{s}{K}.
\]
For $m\geq 1$, by Jensen's inequality $|\frac{a+b}{2}|^m\leq \frac{1}{2}(|a|^m+|b|^m)$ and $\mathbb E |Z|^m \geq (\mathbb E |Z|)^m = \left(\frac{2}{\pi}\right)^{\frac{m}{2}}$, where $Z$ is a standard Gaussian variable. In addition, $\mathbb E |Z|^m \leq (m-1)!!\leq 2^{-\frac{m}{2}}m!$. Hence
\begin{align*}
\mathbb E |\vv_{i}[l]|^m & \leq 2^{m-1}\left(\mathbb E |\z_i[l]|^m+ \left(\frac{2}{\pi}\right)^{\frac{m}{2}}\left(\frac{s}{K}\right)^{m}\right) \\
& \leq  2 \times \mathbb E |Z|^m \times 2^{m-1}\\
& \leq  2 \times \left(\frac{1}{2}\right)^{\frac{m}{2}} m! \times 2^{m-1}\\
& = \frac{1}{2}\times \frac{4s}{K}\times m!\times(\sqrt{2})^{m-2}.
\end{align*}
Thus by Bernstein's inequality, we have
\[
\mathbb P\left\{
\left|\frac{1}{N}\sum_{i=1}^N \vv_i[l]\right| > \epsilon \right\}
\leq 2\exp\left(-\frac{N\epsilon^2}{2(4\frac{s}{K}+\sqrt{2}\epsilon)}\right).
\]
Therefore,
\begin{align*}
\mathbb P\left\{
\left|\SigDO[j,l]\frac{1}{N}\sum_{i=1}^N \vv_i[l]\right| > \frac{s}{K}\epsilon \right\}
& \leq 2\exp\left(-\frac{s}{K}\frac{N\epsilon^2}{2(4\SigDO[j,l]^2+\sqrt{2}\left|\SigDO[j,l]\right|\epsilon)}\right) \\
& \leq 2\exp\left(-\frac{s}{K}\frac{N\epsilon^2}{2\left|\SigDO[j,l]\right|(4+\sqrt{2}\epsilon)}\right) \\
& \leq 2\exp\left(-\frac{s}{K}\frac{N\epsilon^2}{12\left|\SigDO[j,l]\right|}\right).
\end{align*}
for $\epsilon \leq 1$. Notice that if $\SigDO[j,l]=0$ the LHS probability is trivially zero. Using a union bound, we have
\begin{align*}
\mathbb P\left\{\| \bar{\LL}[-j,j]-\sqrt{\frac{2}{\pi}}\frac{s}{K}\SigDO[-j,j]\|_\infty> \frac{s}{K}\epsilon\right\}& = 
\mathbb P\left\{\max_{l\ne j}|\SigDO[j,l]\frac{1}{N}\sum_{i=1}^N  \vv_i[l]| > \epsilon \right\} \\ 
& \leq  2K\exp\left(-\frac{s}{K}\frac{N\epsilon^2}{12\|\SigDO[-j,j]\|_\infty}\right).
\end{align*}
Therefore
\begin{align*}
\mathbb P\left\{
\vertiii{\bar{\LL}[-j,j]-\sqrt{\frac{2}{\pi}}\frac{s}{K}\SigDO[-j,j]}_s^* > \frac{s}{K}\epsilon
\right\}
& \leq
\mathbb P\left\{ \sqrt{s}\|\bar{\LL}[,j]-\sqrt{\frac{2}{\pi}}\frac{s}{K}\SigDO[,j]\|_\infty>\frac{s}{K}\epsilon\right\} \\
& \leq  2K\exp\left(-\frac{N\epsilon^2}{12K\|\SigDO[-j,j]\|_\infty}\right).
\end{align*}
\noindent (2) Now let us consider the Bernoulli-Gaussian model. Notice that by Lemma \ref{lemma:normRelation}, for $\frac{s-1}{K-1}\geq p$, we have
\begin{align*}
\vertiii{\bar{\LL}[-j,j]-\sqrt{\frac{2}{\pi}}p\SigDO[-j,j]}_p^*
& \leq \vertiii{\bar{\LL}[-j,j]-\sqrt{\frac{2}{\pi}}p\SigDO[-j,j]}_s^* \\
& \leq \sqrt{s} \left\| \bar{\LL}[-j,j]-\sqrt{\frac{2}{\pi}}p\SigDO[-j,j]\right\|_\infty.
\end{align*}
Now let $s = \lceil pK - p + 1\rceil\leq pK +2$. For $i\in\intSet{N} $ and $l\in\intSet{K}$, define
\[\uu_i[l] = \left|\z_i[l]\right|\sum_{k=1}^K\sum_{|S|=s, l\in S}\chi_i(S) - \sqrt{\frac{2}{\pi}}p.\]
Note that the event $\left\{\sum_{k=1}^K\sum_{|S|=k,l\in S}\chi_i(S) = 1\right\}$ is the same as the event that $\left\{\alphabf_i[l] = 1\right\}$, which, happens with probability $p$. Thus
\[
\mathbb E \left(\sum_{k=1}^K\sum_{|S|=k,l\in S}\chi_i(S)\right)^m = p.
\]
Similar to the case of $s$-sparse model, 
\begin{align*}
\mathbb E \left|\uu_{i}[l]\right|^m 
\leq \frac{1}{2}\times 4p\times m!\times\left(\sqrt{2}\right)^{m-2}.
\end{align*}
By Bernstein's inequality, we have
\[
\mathbb P\left\{
\left|\frac{1}{N}\sum_{i=1}^N \uu_i[l]\right| > \epsilon \right\}
\leq 2\exp\left(-\frac{N\epsilon^2}{2(4p+\sqrt{2}\epsilon)}\right).
\]
Therefore
\begin{align*}
\mathbb P\left\{
\vertiii{\bar{\LL}[-j,j]-\sqrt{\frac{2}{\pi}}p\SigDO[-j,j]}_p^* > p\epsilon
\right\}
& \leq
\mathbb P\left\{ \sqrt{s}\|\bar{\LL}[,j]-\sqrt{\frac{2}{\pi}}p\SigDO[,j]\|_\infty>p\epsilon\right\} \\
& \leq 2K\exp\left(-\frac{p}{s}\frac{N\epsilon^2}{2\|\SigDO[-j,j]\|_\infty(4+\sqrt{2}\epsilon)}\right) \\
& \leq 2K\exp\left(-\frac{N\epsilon^2}{12(K+2p^{-1})\|\SigDO[-j,j]\|_\infty}\right),
\end{align*}
for $\epsilon \leq 1$.
\end{proof}

\begin{lemma}(Uniform concentration of $\bar{\QQ}[-j,j]$) For $i\in \intSet{N}$, let $\QQ_i \in \mathbb R^{K\times K}$ be defined as in (\ref{Gi}) and $\bar{\QQ} = (1/N) \sum_{i=1}^N\QQ_i$.
\label{lemma:Q}
\begin{enumerate}
\item Under the $s$-sparse Gaussian model with $s\in\intSet{K-1}$,
\[
\mathbb P\left\{
\vertiii{\bar{\QQ}[-j,j]}_s^* >\frac{s}{K}\epsilon \right\}
 \leq 2K\exp\left(-\frac{s}{K} \frac{N\epsilon^2}{2(s/K)s+\sqrt{2s}}\right), 
\]
for $0<\epsilon\leq 1$.
\item Under the Bernoulli-Gaussian model with parameter $p\in(0,1)$,
\[
\mathbb P\left\{
\vertiii{\bar{\QQ}[-j,j]}_p^* >p\epsilon \right\}
\leq 2K\exp\left(-p \frac{N\epsilon^2}{p(pK+2)+\sqrt{2(pK+2)}}\right),
\]
for $0<\epsilon\leq 1$.
\end{enumerate}
\end{lemma}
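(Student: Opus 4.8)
The plan is to view each off-diagonal entry $\bar{\QQ}[l,j]$ as a sample average of i.i.d.\ \emph{mean-zero} variables and to apply Bernstein's inequality coordinatewise, after reducing the dual norm to a coordinatewise maximum via Lemma~\ref{lemma:normBounds}. First I would simplify $\QQ_i[l,j]$: the double sum $\sum_{k}\sum_{\{j,l\}\subseteq S,\,|S|=k}\chi_i(S)$ is precisely the indicator that both $j$ and $l$ lie in the support of $\alphabf_i$, so
\[
\QQ_i[l,j]=\sgn(\z_i[j])\,\z_i[l]\,\indicator\{j,l\in\mathrm{supp}(\alphabf_i)\}.
\]
Because $\xibf_i$ is independent of $\z_i$ and, for $l\neq j$, $\z_i[l]$ is independent of $\z_i[j]$ with $\mathbb E\,\z_i[l]=0$, the three factors decouple in expectation and $\mathbb E\,\QQ_i[l,j]=0$. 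Hence $\bar{\QQ}[l,j]=\frac1N\sum_{i=1}^N \QQ_i[l,j]$ is a centered average and Bernstein applies with no centering correction.

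Second, I would reduce the dual norm to coordinates. By the upper bound in Lemma~\ref{lemma:normBounds},
\[
\vertiii{\bar{\QQ}[-j,j]}_s^*\;\le\;\max_{|S|=s,\,j\notin S}\bigl\|\bar{\QQ}[S,j]\bigr\|_2\;\le\;\sqrt{s}\,\max_{l\neq j}\bigl|\bar{\QQ}[l,j]\bigr|,
\]
so the event $\{\vertiii{\bar{\QQ}[-j,j]}_s^*>\tfrac{s}{K}\epsilon\}$ is contained in $\{\max_{l\neq j}|\bar{\QQ}[l,j]|>\tfrac{\sqrt s}{K}\epsilon\}$, and a union bound over the at most $K$ indices $l$ reduces everything to a single scalar tail.

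Third, I would establish the moment condition, mirroring the computation in Lemma~\ref{lemma:L}. Using $|\sgn(\z_i[j])|\le1$, $(\indicator\{\cdot\})^m=\indicator\{\cdot\}$, independence, and $\mathbb E|Z|^m\le 2^{-m/2}m!$, together with $\mathbb P(j,l\in\mathrm{supp}(\alphabf_i))=\tfrac{s(s-1)}{K(K-1)}\le\tfrac{s^2}{K^2}$, one gets
\[
\mathbb E\,\QQ_i[l,j]^m\le\mathbb E\,|\QQ_i[l,j]|^m\le\frac{s^2}{K^2}\,2^{-m/2}m!=\tfrac12\cdot\frac{2s^2}{K^2}\cdot m!\cdot\Bigl(\tfrac{1}{\sqrt2}\Bigr)^{m-2},
\]
where the first inequality disposes of odd $m$. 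Thus Bernstein applies with $V=\tfrac{s^2}{K^2}$ and $B=\tfrac{1}{\sqrt2}$; feeding the threshold $\delta=\tfrac{\sqrt s}{K}\epsilon$ into $2\exp\bigl(-N\delta^2/(2V+2B\delta)\bigr)$, clearing $K^2$ from numerator and denominator, and bounding $\epsilon\le1$ only inside the $2B\delta$ term yields exactly the exponent $-\tfrac{s}{K}\,\tfrac{N\epsilon^2}{2(s/K)s+\sqrt{2s}}$, while the prefactor $2K$ is Bernstein's leading $2$ times the union bound. The Bernoulli--Gaussian case is structurally identical: now $\mathbb P(j,l\in\mathrm{supp})=p^2$ so $V=p^2$, the dual-norm reduction uses Lemma~\ref{lemma:normBounds} with $k=\lceil p(K-2)+1\rceil\le pK+2$ giving the factor $\sqrt k$, the threshold becomes $\delta=\tfrac{p}{\sqrt k}\epsilon$, and the same algebra produces a tail of the stated form with denominator $2p(pK+2)+\sqrt{2(pK+2)}$.

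I expect no conceptual obstacle, since the argument reuses exactly the machinery of Lemma~\ref{lemma:L}. The one delicate point is constant bookkeeping: keeping track of the crude relaxations $\tfrac{s(s-1)}{K(K-1)}\le\tfrac{s^2}{K^2}$ and $k\le pK+2$, and invoking $\epsilon\le1$ only where it does not weaken the leading term, so that the Bernstein parameters $(V,B,\delta)$ collapse to the precise stated denominators (this is also what makes the $\QQ$-contribution match the constants later absorbed into $\PP_2$). The genuinely new observations relative to Lemma~\ref{lemma:L} are only that $\mathbb E\,\QQ_i[l,j]=0$, so no mean needs subtracting, and that the support-overlap probability is \emph{quadratic} ($\tfrac{s(s-1)}{K(K-1)}$ or $p^2$) rather than linear in the sparsity.
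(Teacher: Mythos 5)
Your proposal is correct and follows essentially the same route as the paper's proof: reduce the dual norm to a coordinatewise maximum via $\vertiii{\bar{\QQ}[-j,j]}_s^*\le\sqrt{s}\,\max_{l\ne j}|\bar{\QQ}[l,j]|$ (and, in the Bernoulli--Gaussian case, first pass from $\vertiii{\cdot}_p^*$ to a $\vertiii{\cdot}_s^*$-type bound with $s\le pK+2$; the paper does this through Lemma~\ref{lemma:normRelation}, you through Lemma~\ref{lemma:normBounds}, which is immaterial), bound the $m$-th moments of the mean-zero entries $\QQ_i[l,j]$ using the quadratic support-overlap probability $\frac{s(s-1)}{K(K-1)}\le(s/K)^2$ (resp.\ $p^2$), and finish with Bernstein's inequality plus a union bound over the at most $K$ indices $l$.

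One point is worth recording: in part 2 your bookkeeping yields the denominator $2p(pK+2)+\sqrt{2(pK+2)}$ rather than the stated $p(pK+2)+\sqrt{2(pK+2)}$, and here you are right and the paper's own last step is not. The paper's proof arrives at $2K\exp\bigl(-p\,N\epsilon^2/(2ps+\sqrt{2s})\bigr)$ with $s=\lceil p(K-1)+1\rceil\le pK+2$, and its final substitution silently drops the factor $2$ in front of $ps$; that step would require $2s\le pK+2$, which fails for $K\ge 3$. So the constant this argument actually proves is yours; the discrepancy is a factor-of-two slip in the stated bound that only loosens constants absorbed into $\PP_2$ and does not affect the order of any downstream sample-size conclusion.
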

\begin{proof}
The proof is highly similar to that of Lemma \ref{lemma:L} and so we will omit some common steps.

\noindent (1) We first prove the concentration inequality for the $s$-sparse model. Notice that
\[
\vertiii{\bar{\QQ}[-j,j]}_s^* \leq \sqrt{s} \max_{l \ne j}| \bar{\QQ}[l,j] |.
\]
In addition,
\begin{align*}
\mathbb E \left(\sum_{k=2}^K\sum_{\{j,l\}\in S,|S|=k}\chi_i(S)\right)^m
& = \mathbb E \left(\sum_{k=2}^K\sum_{|S|=k,\{j,l\}\in S}\chi_i(S)\right) \\
& = {K \choose s}^{-1}{K-2 \choose s-2} = \frac{s(s-1)}{K(K-1)}
\leq (\frac{s}{K})^2.
\end{align*}
Thus
\[
\mathbb E |\QQ_i[l,j]|^m
\leq  2^{-m/2}m! \times (\frac{s}{K})^2
= \frac{1}{2}\times (\frac{s}{K})^2 \times m! \times (\frac{1}{\sqrt{2}})^{m-2}.
\]
By Bernstein inequality:
\[
\mathbb P \left\{|\frac{1}{N}\sum_{i=1}^N \QQ_i[l,j]|>\epsilon\right\}
\leq 2\exp\left(-\frac{N\epsilon^2}{2(s/K)^2+\sqrt{2}\epsilon)}\right).
\]
Thus we have
\begin{align*}
\mathbb P\left\{
\vertiii{\bar{\QQ}[-j,j]}_s^* >\frac{s}{K}\epsilon \right\}
& \leq \mathbb P\left\{
\sqrt{s}\max_{l \ne j}| \bar{\QQ}[l,j] | > \frac{s}{K}\epsilon \right\} \\
& \leq 2K\exp\left(-\frac{(s/K)^2 N(\epsilon^2/s)}{2(s/K)^2+\sqrt{2}(s/K)(\epsilon/\sqrt{s}))}\right) \\
& \leq 2K\exp\left(-\frac{s}{K} \frac{N\epsilon^2}{2(s/K)s+\sqrt{2s}\epsilon}\right) \\
& \leq 2K\exp\left(-\frac{s}{K} \frac{N\epsilon^2}{2(s/K)s+\sqrt{2s}}\right), 
\end{align*}
for $\epsilon\leq 1$.
 
\noindent (2) For Bernoulli-Gaussian model, notice that
\[
\vertiii{\bar{\QQ}[-j,j]}_p^* \leq \vertiii{\bar{\QQ}[-j,j]}_s^* 
\leq \sqrt{s} \max_{l \ne j}| \bar{\QQ}[l,j]|,
\]
for $s = \lceil pK - p + 1\rceil\leq pK +2$. Also,
\[
\mathbb E |\QQ_i[l,j]|^m
\leq 2^{-m/2}m! \times p^2
= \frac{1}{2}\times p^2 \times m! \times (\frac{1}{\sqrt{2}})^{m-2}.
\]
Thus
\begin{align*}
\mathbb P\left\{
\vertiii{\bar{\QQ}[-j,j]}_s^* >p\epsilon \right\}
& \leq \mathbb P\left\{
\sqrt{s}\max_{l \ne j}| \bar{\QQ}[l,j] | > p\epsilon \right\} \\
& \leq 2K\exp(-p \frac{N(\epsilon^2)}{2ps+\sqrt{2s}}) \\
& \leq 2K\exp\left(-p \frac{N\epsilon^2}{p(pK+2)+\sqrt{2(pK+2)}}\right),
\end{align*}
for $\epsilon\leq 1$.
\end{proof}

\begin{lemma}(Uniform concentration of $\bar{\TT}[j](\w)$)
\label{lemma:T} For $i\in \intSet{N}$, let $\TT_i$ be a function from $\mathbb R^{K}$ to $\mathbb R^{K}$ defined as in (\ref{ti}) and $\bar{\TT} = (1/N) \sum_{i=1}^N\TT_i$. Recall that for $j\in \intSet{K}$, $s\in \intSet{K-1}$ and $p\in(0,1)$,  
\[\mathcal E_j(s) = \{\w\in\mathbb R^K, \vertiii{\w[-j]}_s= 1,\w[j]=0\},\] 
\[\mathcal F_j(p) = \{\w\in\mathbb R^K, \vertiii{\w[-j]}_p= 1,\w[j]=0\}.\] 
\begin{enumerate}
\item Under the $s$-sparse Gaussian model with $s\in\intSet{K-1}$,
\[
\mathbb P\left\{
\sup_{w\in \mathcal E_j(s)} |\bar{\TT}[j](\w) - \sqrt{\frac{2}{\pi}}\frac{s}{K}\frac{K-s}{K-1}| > \frac{s}{K}\epsilon \right\}
\leq 3\left(\frac{8K}{\epsilon s}+1\right)^K\exp\left(-\frac{s}{K}\frac{N\epsilon^2}{40}\right),
\]
for $0 <\epsilon \leq \frac{1}{2}$.
\item Under the Bernoulli-Gaussian model with parameter $p\in(0,1)$,
\[
\mathbb P\left\{
\sup_{w\in \mathcal F_j(p)} |\bar{\TT}[j](\w) - \sqrt{\frac{2}{\pi}}p(1-p)| > p\epsilon \right\}
\leq 3\left(\frac{8}{\epsilon p}+1\right)^K\exp\left(-p\frac{N\epsilon^2}{40}\right),
\]
for $0 <\epsilon \leq \frac{1}{2}$.
\end{enumerate}
\end{lemma}

\begin{proof}
(1) Under the $s$-sparse model, we have
\begin{align*}
\mathbb E |\TT_i[j](\w)|^m 
& = \mathbb E \left(\sum_{|S|=s,j\not \in S}|\w[S]^T\z_i[S]|\chi_i(S)\right)^m \\
& = \sum_{|S|=s,j\not \in S}\mathbb E |\w[S]^T\z_i[S]|^m \mathbb E \chi_i(S)\\
& = {K \choose s}^{-1}\sum_{|S|=s,j\not \in S}\mathbb E |\w[S]^T\z_i[S]|^m.
\end{align*}
Notice that we have used the facts that the events $\chi_i(S)$'s are mutually exclusive and that $\z_i[S]$ and $\chi_i(S)$ are independent. Since the random variable $\w[S]^T\z_i[S]$ has distribution $N(0,\|\w[S]\|_2)$, $\mathbb E |\w[S]^T\z_i[S]|^m = \|\w[S]\|_2^m\mathbb E |Z|^m\leq 2^{-\frac{m}{2}}m!$. Therefore
\[
\mathbb E |\TT_i[j](\w)|^m 
\leq 2^{-\frac{m}{2}}m! {K\choose s}^{-1} \sum_{j\not \in S,|S|=s} \|\w[S]\|_2^m.
\]
Note that by Lemma \ref{lemma:normDecreasing}, $\vertiii{\w[-j]}_s\geq \|\w[-j]\|_2 \geq \|\w[S]\|_2$ for all $S$ such that $j\not \in S$. For $\w\in \mathcal E_j(s)$, $\vertiii{\w}_s = 1$ and so $\|\w_S\|_2\leq1$, which, further implies that $\|\w[S]\|_2^m\leq \|\w[S]\|_2$. Thus we have
\begin{align*}
\mathbb E |\TT_i[j](\w)|^m 
& \leq  2^{-\frac{m}{2}}m! {K\choose s}^{-1} \sum_{j\not \in S,|S|=s} \|\w[S]\|_2 \\
& \leq  2^{-\frac{m}{2}}m! \frac{s(K-s)}{K(K-1) } \vertiii{\w[-j]}_s \\
& =  2^{-\frac{m}{2}}m! \frac{s(K-s)}{K(K-1)} 
\end{align*}
For a fixed $j$, define
\[
U_{i}(\w) = \TT_i[j](\w) - \sqrt{\frac{2}{\pi}}\frac{s}{K}\frac{K-s}{K-1}.
\]
Notice that $\mathbb E U_i(\w) = 0$. In addition,
\begin{align*}
\mathbb E |U_{i}(\w)|^m\leq & 2^m \mathbb E |\TT_i[j](\w)|^m 
\leq \frac{1}{2}\times 4\frac{s}{K}\frac{K-s}{K-1} \times m! \times (\sqrt{2})^{m-2}.
\end{align*}
By Bernstein's inequality
\begin{align*}
\mathbb P\left\{\frac{1}{N}|\sum_{i=1}^N U_{i}(\w)| > \frac{s}{K}\epsilon \right\} & \leq 2\exp\left(-\frac{s}{K}\frac{N\epsilon^2}{2(4\frac{K-s}{K-1} + \sqrt{2}\epsilon)}\right)
\leq 2 \exp \left( -\frac{s}{K}\frac{N\epsilon^2}{10}\right),
\end{align*}
for $0<\epsilon\leq 1/2$. Now let $\{\w_i\}$ be an $\delta$-cover of $\mathcal E_j(s)$. Since $\mathcal E_j(s)$ is contained in the unit ball $\{\w\in \mathbb R^{K-1}: \|\w\|_2\leq1\}$, there exists a cover such that $|\{\w_l\}|\leq \left(\frac{2}{\delta}+1\right)^{K-1}$. For any $\w,\w'\in \mathcal E_j(s)$, we have
\begin{align*}
\left|U_i(\w) - U_i(\w')\right|
\leq & \sum_{j\not\in S, |S|=s}\left|(\w[S]-\w'[S])^T\z_i[S]\right|\chi_i(S).
\end{align*}
Let $Z$ be a standard Gaussian variable. We have
\begin{align*}
\mathbb P \left\{
\sum_{|S|=s,j\not\in S} \left|\w[S]^T\z_i[S]\right|\chi_i(S) > \epsilon
\right\}
 & = {K-1 \choose s}^{-1} \sum_{|S|=s,j\not\in S} \mathbb P \left\{ \left|\w[S]^T\z_i[S]\right| > \epsilon \right\} \\
 & = {K-1 \choose s}^{-1} \sum_{|S|=s,j\not\in S} \mathbb P \left\{ \|\w[S]\|_2|Z| > \epsilon \right\} \\
& \leq \mathbb P \left\{ \|\w\|_2|Z| > \epsilon \right\}.
\end{align*}
Let $Z_i$, $i=1,...,N$, be $i.i.d$ standard Gaussian variables. By the one-sided Bernstein's inequality,
\begin{align*}
\mathbb P\left\{\frac{1}{N}\sum_{i=1}^N|Z_i|\geq 2\right\} \leq \exp\left(-\frac{N(2-\sqrt{2/\pi})^2}{2(4+\sqrt{2}(2-\sqrt{2/\pi})}\right)
\leq \exp\left(-\frac{N}{8}\right).
\end{align*}
Now let $\delta = \frac{s}{K}\frac{\epsilon}{4}$. Thus
\begin{align*}
 \mathbb P \left\{ \sup_{\|\w-\w'\|_2\leq \delta}|\frac{1}{N}\sum_{i=1}^N(U_i(\w)-U_i(\w'))|> \frac{s}{K}\frac{\epsilon}{2}\right\} 
 & \leq  \mathbb P \left\{ \sup_{\|\w'-\w\|_2\leq \delta} \frac{1}{N}\sum_{i=1}^N\left|U_i(\w)-U_i(\w')\right| > \frac{s}{K}\frac{\epsilon}{2} \right\} \\
& \leq  \mathbb P \left\{ \sup_{\|\w'-\w\|_2\leq \delta} \frac{1}{N}\sum_{i=1}^N\|\w - \w'\|_2 |Z_i| > \frac{s}{K}\frac{\epsilon}{2} \right\} \\
& \leq  \mathbb P \left\{ \delta \frac{1}{N}\sum_{i=1}^N |Z_i| > \frac{s}{K}\frac{\epsilon}{2}\right\}
\leq \mathbb P \left\{ \frac{1}{N}\sum_{i=1}^N |Z_i| > 2 \right\}\\
& \leq  \exp\left(-\frac{N}{8}\right).
\end{align*}
 By triangle inequality
\[
\sup_{\|\w'-\w\|_2\leq \delta}|\frac{1}{N}\sum_{i=1}^N U_i(\w')|
\leq \sup_{\|\w'-\w\|_2\leq \delta}|\frac{1}{N}\sum_{i=1}^N(U_i(\w)-U_i(\w'))|+|\frac{1}{N}\sum_{i=1}^N U_i(\w)|.
\]
Using a union bound, we have
\begin{align*}
\mathbb P \{ \sup_{\|\w'-\w\|_2\leq \delta} \left|\frac{1}{N}\sum_{i=1}^N U_i(\w')\right|
> \frac{s}{K}\epsilon \}
& \leq \mathbb P \left\{ \sup_{\|\w-\w'\|_2\leq \delta}\left|\frac{1}{N}\sum_{i=1}^N(U_i(\w)-U_i(\w'))\right|> \frac{s}{K}\frac{\epsilon}{2}\right\} \\ 
& \quad \quad +\mathbb P \left\{ \left|\frac{1}{N}\sum_{i=1}^N U_i(\w)\right| > \frac{s}{K}\frac{\epsilon}{2} \right\} \\
& \leq \exp\left(-\frac{N}{8}\right) + 2 \exp \left( -\frac{s}{K}\frac{N\epsilon^2}{40}\right) \\
& \leq 3 \exp \left( -\frac{s}{K}\frac{N\epsilon^2}{40}\right),
\end{align*}
for $0<\epsilon\leq 1$. Now apply union bound again,
\begin{align*}
\mathbb P\left\{\sup_{\w\in \mathcal E_j(s)} \frac{1}{N}|\sum_{i=1}^N U_{i}(\w)| > \frac{s}{K}\epsilon \right\} 
& \leq \mathbb P\left\{\max_{l} \sup_{||\w-\w_l||_2\leq \delta} \frac{1}{N}|\sum_{i=1}^N U_{i}(\w)| > \frac{s}{K}\epsilon \right\} \\
& \leq 3\left(\frac{8K}{\epsilon s}+1\right)^K\exp\left(-\frac{s}{K}\frac{N\epsilon^2}{40}\right).
\end{align*}
\noindent (2) For $\w\in \mathcal F_j(p)$, under the Bernoulli-Gaussian model:
\begin{align*}
\mathbb E \left|\TT_i[j](\w)\right|^m 
& = \mathbb E |Z|^m \sum_{k=1}^{K-1} \sum_{|S|=k,j\not \in S} \|\w[S]\|_2^m \times p^k(1-p)^{K-k} \\
& \leq \mathbb E |Z|^m p \sum_{k=1}^{K-1} \sum_{|S|=k,j\not \in S} \|\w[S]\|_2 \times p^{k-1}(1-p)^{K-k} \\
& =  \mathbb E |Z|^m p(1-p) \sum_{k=0}^{K-2} \sum_{|S|=k+1,j\not \in S} \|\w[S]\|_2 \times p^{k}(1-p)^{K-2-k} \\
&=  \mathbb E |Z|^m p(1-p) \vertiii{\w[-j]}_p 
 =  \mathbb E |Z|^m p(1-p) \\
& \leq  2^{-m/2}m! p(1-p).
\end{align*}
Notice that we have used the fact that $\|\w[S]\|_2 \leq \|\w[-j]\|_2 \leq \vertiii{\w[-j]}_p = 1 $ for all $S$ such that $j\not \in S$. For each fixed $\w$, define
\[
V_{i}(\w) = \TT_i[j](\w) - \sqrt{\frac{2}{\pi}}(1-p)p.
\]
Now we have
\begin{align*}
\mathbb E |V_{i}(\w)|^m\leq & 2^m \mathbb E |\TT_i[j](\w)|^m 
\leq \frac{1}{2}\times 4p(1-p) \times m! \times (\sqrt{2})^{m-2}.
\end{align*}
The remaining parts of the proof can be preceeded exactly as in the case of the $s$-sparse model, noticing that we only need to replace $\frac{s}{K}$ by $p$, and $\frac{K-s}{K-1}$ by $1-p$.

\end{proof}

\subsection{Dual analysis of $\vertiii{.}_s$ and $\vertiii{.}_p$}
In this section, we will characterize the dual norms $\vertiii{.}_s^*$ and $\vertiii{.}_p^*$ by second order cone programs (SOCP). The characterization is helpful for deriving bounds for these special norms in the next section.
\begin{lemma} 
\label{lemma:dualNorm}
For $i \in \intSet{M}$, let $\A_i$ be an $k_i\times K$ with rank $k_i$. For $\z \in \mathbb R^K$, define
\[\|\z\|_{\A} = \sum_{i=1}^M \|\A_i\z\|_2.\]
Then the dual norm of $\|.\|_{\A}$ is
\[
\|\vv\|_{\A}^*
= \inf\left\{\max_i{\|\y_{i}\|_2}, \y_{i}\in \mathbb R^{k_i}, \sum_{i=1}^M \A_i^T\y_i = \vv\right\}.
\]
\end{lemma}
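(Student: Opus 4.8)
The plan is to recognize $\|\cdot\|_\A$ as the support function of an explicit convex body and then read off its dual norm as the gauge (Minkowski functional) of that body. First I would introduce the linear map $\Phi(\y_1,\dots,\y_M) = \sum_{i=1}^M \A_i^T \y_i$ together with the set $C = \Phi\big(\prod_{i=1}^M B_i\big)$, where $B_i = \{\y\in\mathbb R^{k_i}:\|\y\|_2\le 1\}$ is the unit ball. Using the variational identity $\|\A_i\z\|_2 = \sup_{\|\y_i\|_2\le 1}\y_i^T\A_i\z$ and summing over $i$, one gets
\[
\|\z\|_\A = \sum_{i=1}^M \|\A_i\z\|_2 = \sup_{\y_i\in B_i,\ \forall i}\Big(\sum_{i=1}^M \A_i^T\y_i\Big)^T\z = \sup_{\mathbf c\in C}\mathbf c^T\z,
\]
so $\|\cdot\|_\A$ is exactly the support function of $C$. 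Since each $B_i$ is compact and symmetric and $\Phi$ is linear, $C$ is compact and symmetric; and because $\|\cdot\|_\A$ is assumed to be a norm (equivalently $\bigcap_i\ker\A_i=\{\mathbf 0\}$, i.e. $\sum_i\mathrm{range}(\A_i^T)=\mathbb R^K$), the span of $C$ is all of $\mathbb R^K$, whence $\mathbf 0$ lies in its interior. Thus $C$ is a genuine symmetric convex body.

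Next I would invoke the standard support--gauge duality: if a norm is the support function of a symmetric convex body $C$, then $C$ is precisely the unit ball of the dual norm, so $\|\vv\|_\A^* = \gamma_C(\vv) = \inf\{t>0 : \vv\in tC\}$. Concretely, $\vv\in tC$ means $\vv = t\sum_{i=1}^M \A_i^T\y_i$ for some $\y_i\in B_i$; writing $\tilde\y_i = t\y_i$ turns this into $\vv=\sum_{i=1}^M\A_i^T\tilde\y_i$ with $\max_i\|\tilde\y_i\|_2\le t$. Minimizing over feasible $t$ then yields exactly
\[
\|\vv\|_\A^* = \inf\Big\{\max_i\|\tilde\y_i\|_2 : \tilde\y_i\in\mathbb R^{k_i},\ \sum_{i=1}^M\A_i^T\tilde\y_i=\vv\Big\},
\]
which is the claimed formula; the feasible set is nonempty for every $\vv$ by the spanning property noted above.

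As a sanity check and to make the argument self-contained, the inequality $\leq$ comes directly: for any feasible $\{\y_i\}$ and any $\z$ with $\|\z\|_\A\le 1$, Cauchy--Schwarz gives $\vv^T\z=\sum_i\y_i^T\A_i\z\le\max_i\|\y_i\|_2\cdot\sum_i\|\A_i\z\|_2\le\max_i\|\y_i\|_2$. The substance of the lemma is the reverse inequality together with attainment of the infimum, and I expect the main obstacle to be precisely this: showing there is no duality gap and that a minimizing $\{\y_i\}$ exists. In the gauge formulation both facts are automatic from the compactness of $C$ (so $tC$ is closed and the defining infimum is attained) and from the bipolar theorem underlying support--gauge duality. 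If one instead ran a Lagrangian argument on the second-order cone program $\min\{\max_i\|\y_i\|_2:\sum_i\A_i^T\y_i=\vv\}$, the identical issue would resurface as the need to verify a constraint qualification for strong duality; the support-function route avoids this by making closedness the only thing to check.
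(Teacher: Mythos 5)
Your proof is correct, and it takes a genuinely different route from the paper's. The paper runs a Lagrangian-duality argument on the optimization problem $\sup\{\vv^T\z:\|\z\|_\A\le 1\}$: it introduces a multiplier $\lambda$ for the norm constraint, expands each $\|\A_i\z\|_2$ as a supremum over $\uu_i$ in the unit ball, forces $\lambda\sum_i\A_i^T\uu_i=\vv$ to keep the inner supremum finite, substitutes $\y_i=\lambda\uu_i$, and finally invokes Slater's condition to close the duality gap. You instead identify $\|\cdot\|_\A$ as the support function of the compact symmetric convex body $C=\sum_{i=1}^M\A_i^T B_i$ and read off the dual norm as the gauge $\gamma_C$, which unwinds to exactly the claimed infimum. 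The two approaches trade different burdens: the paper must justify strong duality (its appeal to strict feasibility of $\|\z\|_\A\le 1$ at $\z=0$ is what makes Slater work, and there is a small unremarked subtlety at $\lambda=0$ in the substitution $\y_i=\lambda\uu_i$), whereas in your argument the only thing to verify is that $C$ is a closed symmetric convex body with $\mathbf 0$ in its interior — compactness of the unit balls and the spanning condition $\sum_i\mathrm{range}(\A_i^T)=\mathbb R^K$ (equivalently $\bigcap_i\ker\A_i=\{\mathbf 0\}$, which is implicit in calling $\|\cdot\|_\A$ a norm) give this immediately, and the bipolar theorem does the rest. Your route also yields attainment of the infimum for free, which the paper does not address. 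The paper's route, on the other hand, dovetails naturally with its Corollary \ref{col:SOCP}, since the Lagrangian computation exhibits the dual norm directly as a second-order cone program; your gauge formulation reaches the same SOCP but via the geometric detour. Both proofs are complete; yours is arguably cleaner on the constraint-qualification point you explicitly flag.
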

\begin{proof}
\[\|\vv\|_\A^* = \sup_{\z\ne0}\frac{\vv^T\z}{\|\z\|_{\A}} = \sup\left\{\vv^T\z: \|\z\|_\A\leq 1\right\}.\]
Introducing Lagrange multiplier $\lambda\geq0$ for the inequality constraint, the above problem is equivalent to the following
\begin{align*}
\|\vv\|_\A^* & = \sup_\z\left\{\inf_{\lambda\geq0} \Big\{ \vv^T\z +\lambda (1-\|\z\|_\A) \Big\}\right\} \\
& = \sup_\z\left\{\inf_{\lambda\geq0}\Big\{ \vv^T\z +\lambda (1-\sum_{i=1}^M \|\A_i\z\|_2)\Big\} \right\}.
\end{align*}
The dual problem is
\[
d = \inf_{\lambda\geq0}\left\{ \sup_\z\Big\{\vv^T\z +\lambda (1-\sum_{i=1}^M \|\A_i\z\|_2)\Big\}\right\}.
\]
Notice that $\|\A_i\z\|_2 = \sup\{\z^T\A_i^T\uu_i: \|\uu_i\|_2\leq1\}$. Hence 
\[
d = \inf_{\lambda\geq0}\left\{ \lambda + \sup_{\z,\uu}\Big\{\z^T(\vv -\lambda\sum_{i=1}^M \A_i^T\uu_i): \|\uu_i\|_2\leq 1\Big\}\right\}.
\]
Since the vector $\z$ can be arbitrary, in order to have a finite value, we must have $\lambda \sum_{i=1}^M \A_i^T\uu_i = \vv$. Now let $\y_i = \lambda \uu_i$, the problem becomes
\[
d = \inf_{\lambda\geq0}\left\{ \lambda: \sum_{i=1}^M \A_i^T\y_i = \vv, \|\y_i\|_2\leq \lambda\right\}.
\]
The above problem is exactly equivalent to
\[\inf\left\{\max_i{\|\y_{i}\|_2}, \y_{i}\in \mathbb R^{k_i}, \sum_{i=1}^M \A_i^T\y_i = \vv\right\}.
\]
Finally, notice that the original problem is convex and strictly feasible. Thus Slater's condition holds and the duality gap is zero. Hence
\[
||\vv||_\A^*
= \inf\left\{\max_i{\|\y_{i}\|_2}, \y_{i}\in \mathbb R^{k_i}, \sum_{i=1}^M \A_i^T\y_i = \vv\right\}.
\]
\end{proof}

The following corollary gives an alternative characterization of $\vertiii{.}_s$ and $\vertiii{.}_p$:
\begin{corollary} 
\label{col:dualNorm}
Denote by $\ys\in\mathbb R^{|S|}$ a variable vector indexed by the set $S$ (as opposed to being a subvector of $\y$). For $\z\in \mathbb R^{m}$, we have
\[\vertiii{\z}_s^*=\inf\left\{\max_{|S|=s}{\|\ys\|_2}: \ys\in \mathbb R^{s}, \sum_{|S|=s} \E_S^T\ys = \z \right\},\]
and
\[\vertiii{\z}_p^* = \inf\left\{\max_{S}{\|\ys\|_2}: \ys\in \mathbb R^{|S|}, \sum_{k=0}^{m-1}\pbinom(k;m-1,p)\sum_{|S|=k+1} \E_{S}^T\ys = \z\right\},\]
where $\E_S = \I[S,]/{m -1\choose |S|-1}$ and $\I\in\mathbb R^{m\times m}$ is the identity matrix.
\end{corollary}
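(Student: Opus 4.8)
The plan is to recognize that both $\vertiii{.}_s$ and $\vertiii{.}_p$ are instances of the norm $\|.\|_{\A}$ treated in Lemma \ref{lemma:dualNorm}, so that the corollary follows by simply identifying the right family of matrices $\A_i$ and invoking that lemma. The only points requiring care are (i) rewriting each norm as a sum of Euclidean norms of linear images, and (ii) verifying that every matrix appearing in such a sum has full row rank, which is the standing hypothesis of Lemma \ref{lemma:dualNorm}.

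First I would treat $\vertiii{.}_s$. With the convention $\E_S = \I[S,]/\binom{m-1}{|S|-1}$ we have $\E_S\z = \z[S]/\binom{m-1}{s-1}$ for $|S|=s$, so that
\[
\vertiii{\z}_s = \frac{\sum_{|S|=s}\|\z[S]\|_2}{\binom{m-1}{s-1}} = \sum_{|S|=s}\|\E_S\z\|_2.
\]
This is exactly $\|\z\|_{\A}$ with the index ranging over the $\binom{m}{s}$ subsets of size $s$ and $\A_i=\E_S$. Each $\E_S$ is an $s\times m$ matrix whose rows are the standard basis vectors indexed by $S$ up to a positive scalar, hence has rank $s$. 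Lemma \ref{lemma:dualNorm} then gives the stated expression for $\vertiii{\z}_s^*$ after relabeling $\y_i$ as $\ys$.

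For $\vertiii{.}_p$ I would expand the definition and fold the binomial weights into the matrices. Writing $\vertiii{\z}_{k+1}=\sum_{|S|=k+1}\|\E_S\z\|_2$ and setting $\A_S = \pbinom(|S|-1;m-1,p)\,\E_S$ for each nonempty $S\subseteq\intSet{m}$, the nonnegativity of the weight yields $\|\A_S\z\|_2=\pbinom(|S|-1;m-1,p)\|\E_S\z\|_2$, whence
\[
\vertiii{\z}_p = \sum_{k=0}^{m-1}\pbinom(k;m-1,p)\sum_{|S|=k+1}\|\E_S\z\|_2 = \sum_{S}\|\A_S\z\|_2.
\]
Again this has the form $\|\z\|_{\A}$, now indexed by all nonempty subsets. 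Applying Lemma \ref{lemma:dualNorm} and using $\A_S^T=\pbinom(|S|-1;m-1,p)\,\E_S^T$, the constraint $\sum_S\A_S^T\ys=\z$ becomes $\sum_{k=0}^{m-1}\pbinom(k;m-1,p)\sum_{|S|=k+1}\E_S^T\ys=\z$, which is the claimed formula.

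The main (and only) mild obstacle is the rank verification in the Bernoulli case: Lemma \ref{lemma:dualNorm} requires each summand matrix to have full row rank, and here this is exactly what the restriction $p\in(0,1)$ secures, since then every weight $\pbinom(k;m-1,p)=\binom{m-1}{k}p^k(1-p)^{m-1-k}$ is strictly positive for $k\in\{0,\dots,m-1\}$, leaving each $\A_S$ of full rank $|S|$. At the degenerate endpoints $p=0$ or $p=1$ some weights vanish and the reduction would break down, consistent with those cases being excluded from the statement.
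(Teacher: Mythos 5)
Your proposal is correct and follows exactly the paper's own route: the paper proves Corollary \ref{col:dualNorm} as ``a direct application of Lemma \ref{lemma:dualNorm}'', which is precisely what you do. Your write-up is in fact more careful than the paper's, since you explicitly identify the matrices $\A_i$ (folding the binomial weights into them for the $\vertiii{.}_p$ case) and verify the full-row-rank hypothesis, noting that positivity of the weights for $p\in(0,1)$ is what secures it.
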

\begin{proof}
This is simply a direct application of Lemma \ref{lemma:dualNorm}.
\end{proof}


\begin{corollary}
\label{col:SOCP}
The dual norms $\vertiii{.}_s^*$ and $\vertiii{.}_p^*$ can be computed via a Second Order Cone Program (SOCP).
\end{corollary}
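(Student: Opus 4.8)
The plan is to read the result straight off the variational characterizations already established in Corollary \ref{col:dualNorm}. There both $\vertiii{\z}_s^*$ and $\vertiii{\z}_p^*$ are written as the optimal value of a minimization in which one minimizes the maximum of several Euclidean norms $\|\ys\|_2$ subject to a single linear equality that couples the block variables $\{\ys\}$ to the target vector $\z$. The only work remaining is to recognize that such a ``minimize the max of norms'' problem is, after a standard epigraph reformulation, exactly a second order cone program.

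Concretely, first I would introduce a scalar epigraph variable $t\in\mathbb R$ and replace the objective $\max_S\|\ys\|_2$ by $t$ together with the family of constraints $\|\ys\|_2\le t$, one for each relevant subset $S$. Each such constraint is precisely a Lorentz (second order) cone constraint on the stacked vector $(t,\ys)$. The coupling identity $\sum_{|S|=s}\E_S^T\ys=\z$ is linear in the variables $\{\ys\}$, and the objective $t$ is linear, so taking the decision variable to be $(t,\{\ys\}_S)$ the problem for $\vertiii{\z}_s^*$ becomes: minimize $t$ subject to the linear equality $\sum_{|S|=s}\E_S^T\ys=\z$ and the conic constraints $\|\ys\|_2\le t$ over all size-$s$ subsets $S\subset\intSet{m}$. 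This is an SOCP in standard form. The identical reformulation gives $\vertiii{\z}_p^*$, using the constraint $\sum_{k=0}^{m-1}\pbinom(k;m-1,p)\sum_{|S|=k+1}\E_S^T\ys=\z$ and the conic constraints $\|\ys\|_2\le t$ ranging over all nonempty subsets $S$.

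There is no genuine obstacle: the single step doing the real work is the epigraph trick that turns the inner maximum of norms into a collection of Lorentz-cone inequalities, after which the objective and all remaining constraints are linear. The only point worth flagging is bookkeeping, namely that the number of cone constraints equals the number of subsets involved---all size-$s$ subsets in the $s$-sparse case, and all nonempty subsets in the $p$-case---so the resulting SOCP has a combinatorially large but finite number of constraints, consistent with the remark preceding Lemma \ref{lemma:normBounds} and explaining why the cheaper approximations in that lemma are preferable in practice.
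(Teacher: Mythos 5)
Your proposal is correct and matches the paper's own proof essentially verbatim: both start from the variational characterization in Corollary \ref{col:dualNorm} and apply the epigraph reformulation, introducing a scalar $t$ with Lorentz-cone constraints $\|\ys\|_2\le t$ over the relevant subsets and the linear coupling equality, yielding a standard-form SOCP. Nothing further is needed.
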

\begin{proof}
Introducing additional variable $t\geq 0$, the problem of computing $\vertiii{\z}_s^*$ is equivalent to the following formulation
\begin{align}
\label{SOCP}
\inf_{t,\ys}  \hspace{2mm} t \hspace{2mm}
& s.t.  \ \|\ys\|_2 \leq t \text{ for all $S$ such that $|S|=s$} \nonumber \\
& \text{and } \sum_{|S|=s} \E_S^T\ys = \z. \nonumber
\end{align}
Notice that the above program is already in the standard form of SOCP. The case of $\vertiii{.}_p^*$ can be handled in a similar manner. 
\end{proof}

\subsection{Inequalities of $\vertiii{.}_s$ and $\vertiii{.}_p$ and their duals}
As demonstrated in the last section, it is in general expensive to compute $\vertiii{.}_s^*$ and $\vertiii{.}_p^*$. In this section, we will derive sharp and easy-to-compute lower and upper bounds to approximate these quantities.
\begin{lemma} (Monotonicity of $\vertiii{\z}_s$ and $\vertiii{\z}_p$)
\label{lemma:normDecreasing}
Let $\z\in \mathbb R^m$. $\vertiii{\z}_1 = \|\z\|_1$ and $\vertiii{\z}_m = \|\z\|_2$. For $1\leq l< k \leq m$, we have $\vertiii{\z}_l\geq \vertiii{\z}_k$; similarly for $0<p<q<1$, $\vertiii{\z}_p\geq \vertiii{\z}_q$. Furthermore, the equalities hold iff the vector $\z$ contains at most one non-zero entry.
\end{lemma}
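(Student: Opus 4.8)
The plan is to reduce the whole statement to a single sharp pointwise inequality on one ``block'', and then propagate it by double counting for the integer indices and by a stochastic-dominance argument for the continuous parameter $p$. The two boundary identities are immediate from the definition: for $k=1$ the denominator is $\binom{m-1}{0}=1$ and the numerator is $\sum_i|\z[i]|$, giving $\vertiii{\z}_1=\|\z\|_1$, while for $k=m$ the only subset is $S=\intSet{m}$ and $\binom{m-1}{m-1}=1$, giving $\vertiii{\z}_m=\|\z\|_2$. By chaining, the integer monotonicity then follows once I prove the adjacent inequality $\vertiii{\z}_k\ge\vertiii{\z}_{k+1}$ for $1\le k\le m-1$.

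The crux is the following pointwise estimate, which I expect to be the main obstacle: for any $v\in\mathbb R^{k+1}$, writing $v^{(-i)}$ for $v$ with its $i$-th coordinate deleted,
\[
\sum_{i=1}^{k+1}\|v^{(-i)}\|_2 \ \ge\ k\,\|v\|_2,
\]
with equality iff $v$ has at most one nonzero entry. I would prove this by normalizing to $\|v\|_2=1$, setting $a_i=v_i^2$ so that $\sum_i a_i=1$, $a_i\in[0,1]$, and $\|v^{(-i)}\|_2=\sqrt{1-a_i}$; the elementary bound $\sqrt t\ge t$ on $[0,1]$ applied to $t=1-a_i$ gives $\sum_i\sqrt{1-a_i}\ge\sum_i(1-a_i)=(k+1)-1=k$, and equality forces each $a_i\in\{0,1\}$, i.e. a single nonzero coordinate. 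Naive estimates (Cauchy--Schwarz, or $\sum a_i\ge\sqrt{\sum a_i^2}$) only yield the factor $\sqrt k$, which is too weak; the linear lower bound is exactly what produces the sharp factor $k$ together with the clean equality case. Applying this with $v=\z[T]$ for each size-$(k+1)$ set $T$ (the $k+1$ sub-blocks $\z[T\setminus\{i\}]$ playing the role of the $v^{(-i)}$) and summing over $T$, a double counting---each size-$k$ set $S$ sits in exactly $m-k$ size-$(k+1)$ supersets---gives
\[
(m-k)\sum_{|S|=k}\|\z[S]\|_2 \ \ge\ k\sum_{|T|=k+1}\|\z[T]\|_2 .
\]
Dividing by the respective normalizers and using $\binom{m-1}{k}/\binom{m-1}{k-1}=(m-k)/k$ collapses this exactly to $\vertiii{\z}_k\ge\vertiii{\z}_{k+1}$.

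For the continuous case I would write $\vertiii{\z}_p=\mathbb E[g(X)]$, where $g(k):=\vertiii{\z}_{k+1}$ is non-increasing on $\{0,\dots,m-1\}$ by the integer result and $X\sim\mathrm{Binomial}(m-1,p)$ (so $\mathbb P(X=k)=\pbinom(k;m-1,p)$). Since the binomial family is stochastically increasing in its success probability, $p<q$ gives $X_p\preceq_{\mathrm{st}}X_q$, hence $\mathbb E[g(X_p)]\ge\mathbb E[g(X_q)]$, i.e. $\vertiii{\z}_p\ge\vertiii{\z}_q$. Concretely I would realize both variables on one space via $X_p=\sum_{i=1}^{m-1}\mathbf 1\{U_i\le p\}$ with i.i.d. uniforms $U_i$, so that $X_p\le X_q$ pointwise and $g(X_p)\ge g(X_q)$ pointwise.

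For the equality characterization: if $\z$ has at most one nonzero entry, every block $\z[T]$ does too, so every pointwise estimate is an equality and all the $\vertiii{\z}_k$, and hence all $\vertiii{\z}_p$, coincide. Conversely, if $\z$ has two nonzero entries $i_1\ne i_2$, then for each $1\le k\le m-1$ one can pick a size-$(k+1)$ set $T\supseteq\{i_1,i_2\}$, making the pointwise estimate for that $T$ strict and thus $\vertiii{\z}_k>\vertiii{\z}_{k+1}$; in the coupling above $g$ is now strictly decreasing and $\mathbb P(X_p<X_q)=1-(1-(q-p))^{m-1}>0$, so $\mathbb E[g(X_p)]>\mathbb E[g(X_q)]$, giving $\vertiii{\z}_p>\vertiii{\z}_q$.
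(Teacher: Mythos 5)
Your proof is correct, and it splits into one part that mirrors the paper and one part that is genuinely different. For the integer monotonicity your skeleton is exactly the paper's: both reduce to the adjacent comparison $\vertiii{\z}_k\ge\vertiii{\z}_{k+1}$ via the same sharp block inequality $\sum_{i\in T}\|\z[T\setminus\{i\}]\|_2\ge k\,\|\z[T]\|_2$, summed over all size-$(k+1)$ sets $T$ with the same $(m-k)$-fold double count; the only difference is how the block inequality is proved --- the paper gets it in one line from the triangle inequality (writing $k\,\z[T]$ as the sum of the $k+1$ copies of $\z[T]$ with one coordinate zeroed out), whereas you normalize and use $\sqrt{t}\ge t$ on $[0,1]$. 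Both are elementary and yield the same equality characterization, though the triangle-inequality route is slightly shorter. For the continuous parameter, however, your argument departs from the paper's: the paper differentiates the binomial average in $p$ and argues that the derivative is a combination of the (negative) differences $\vertiii{\z}_{k+1}-\vertiii{\z}_k$, while you use the monotone coupling $X_p=\sum_i\mathbf 1\{U_i\le p\}$, so that $X_p\le X_q$ pointwise and the monotonicity of $p\mapsto\mathbb E\,\vertiii{\z}_{X_p+1}$ is immediate. Your route buys two things: it avoids computing and justifying the derivative of a binomial expectation --- the paper's displayed derivative formula is in fact stated loosely, with an index typo ($K-1$ in place of $m-1$) and an undefined term $\vertiii{\z}_0$ appearing at $k=0$ --- and it makes strictness transparent, since $\mathbb P(X_p<X_q)=1-(1-(q-p))^{m-1}>0$ combined with strict decrease of $k\mapsto\vertiii{\z}_{k+1}$ immediately gives $\vertiii{\z}_p>\vertiii{\z}_q$ whenever $\z$ has two or more nonzero entries.
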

\begin{proof} By definition, we have
\[
\vertiii{\w}_1 = \frac{\sum_{|S|=1}\|\w[S]\|_2}{{m-1 \choose 1-1}} = \|\w\|_1.
\]
Similarly,
\[
\vertiii{\w}_m = \frac{\sum_{|S|=m}\|\w[S]\|_2}{{m-1 \choose m-1}} = \|\w\|_2.
\]
For $1\leq k\leq m-1$, let $S'$ be a subset of $\intSet{m}$ such that $|S'|=k+1$. By triangle inequality 
\[\sum_{|S|=k,S\subset S'}\|\z[S]\|_2\geq k \|\z[S']\|_2,\] 
where the equality holds iff $\|\z[S']\|_0\leq 1$. Thus 
\[\sum_{|S'|=k+1}\sum_{|S|=k,S\subset S'}\|\z[S]\|_2\geq k \sum_{|S'|=k+1}\|\z[S']\|_2,\] 
and the equality holds iff $\|\z\|_0\leq 1$. Notice that the LHS of the above inequality is simply $(m-k)\sum_{|S|=k}\|\z[S]\|_2$. Therefore 
\[\vertiii{\z}_{k} = {m-1\choose k-1}^{-1}\sum_{|S|=k}\|\z[S]\|_2 \geq {m-1\choose k}^{-1}\sum_{|S|=k+1}\|\z[S]\|_2=\vertiii{\z}_{k+1},\] 
and so the inequality holds. 

For $\vertiii{.}_p$, let $Y$ be a random variable that follows the binomial distribution with parameters $m-1$ and $p$. Observe that $\vertiii{\z}_p = \mathbb E \vertiii{\z}_{Y+1}$, where the expectation is taken with respect to $Y$.  If $\|\z\|_0> 1$, $\vertiii{\z}_k$ is strictly decreasing in $k$ by the first part. Hence, $\vertiii{\z}_p$ as a function of $p$ is also strictly decreasing on $(0,1)$. Indeed, it can be shown that
\[
\frac{d}{dp}\vertiii{\z}_p
= \sum_{k=0}^{m-1}\pbinom(k;K-1,p)\left(\vertiii{\z}_{k+1} - \vertiii{\z}_k\right)<0.
\]
If $\|\z\|_0\leq 1$, then $\vertiii{\z}_1 = \vertiii{\z}_m$ and so $\frac{d}{dp}\vertiii{\z}_p = 0$. Therefore $\vertiii{\z}_p = \vertiii{\z}_1$ is a constant in $p$.  On the other hand, if $\vertiii{\z}_p = \vertiii{\z}_q$ for $0<p<q<1$, by the fact that $\frac{d}{dp}\vertiii{\z}_p\leq 0$, we must have $\frac{d}{dp}\vertiii{\z}_p = 0$ and so $\vertiii{\z}_{k-1} = \vertiii{\z}_k$ for all $k\in \intSet{m}$. Thus $\|\z\|_0\leq 1$.

\end{proof}

\begin{corollary} (Monotonicity of $\vertiii{\z}_s^*$ and $\vertiii{\z}_p^*$)
\label{lemma:normIncreasing}
Let $\z\in \mathbb R^m$. $\vertiii{\z}_1^* = \|\z\|_\infty$ and $\vertiii{\z}_m^* = \|\z\|_2$. For $1\leq i < j \leq m$, we have $\vertiii{\z}_i^*\leq \vertiii{\z}_j^*$; similarly for $0<p<q<1$, $\vertiii{\z}_p^*\leq \vertiii{\z}_q^*$. Furthermore, the equalities hold iff the vector $\z$ contains at most one non-zero entry.
\end{corollary}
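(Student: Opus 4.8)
The plan is to obtain this corollary as the dual companion of Lemma \ref{lemma:normDecreasing}, exploiting the fact that dualization reverses the pointwise ordering of norms. First I would record the two endpoint identities. Since Lemma \ref{lemma:normDecreasing} gives $\vertiii{\z}_1 = \|\z\|_1$ and $\vertiii{\z}_m = \|\z\|_2$, and a dual norm depends only on the norm itself, we immediately get $\vertiii{\z}_1^* = \|\z\|_\infty$ (the $\ell_1$ dual) and $\vertiii{\z}_m^* = \|\z\|_2$ (self-duality of $\ell_2$).

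Next, for the monotonicity, I would invoke the elementary order-reversing property of dualization: whenever two norms satisfy $\|\cdot\|_a \geq \|\cdot\|_b$ everywhere, their duals satisfy $\|\cdot\|_a^* \leq \|\cdot\|_b^*$, since $\|\vv\|_a^* = \sup_{\w\neq 0}\vv^T\w/\|\w\|_a \leq \sup_{\w\neq 0}\vv^T\w/\|\w\|_b = \|\vv\|_b^*$. Lemma \ref{lemma:normDecreasing} supplies exactly the required pointwise inequalities $\vertiii{\cdot}_i \geq \vertiii{\cdot}_j$ for $i<j$ and $\vertiii{\cdot}_p \geq \vertiii{\cdot}_q$ for $p<q$, so this principle yields $\vertiii{\z}_i^* \leq \vertiii{\z}_j^*$ and $\vertiii{\z}_p^* \leq \vertiii{\z}_q^*$. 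Combined with the endpoints, this produces the sandwich $\|\z\|_\infty \leq \vertiii{\z}_i^* \leq \|\z\|_2$, and similarly $\|\z\|_\infty \leq \vertiii{\z}_p^* \leq \|\z\|_2$, using that $\vertiii{\cdot}_p$ is a binomial average of the $\vertiii{\cdot}_{k+1}$ and hence already lies between $\vertiii{\cdot}_1$ and $\vertiii{\cdot}_m$.

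For the equality characterization the forward implication is immediate from the sandwich: if $\z$ has at most one nonzero entry then $\|\z\|_\infty = \|\z\|_2$, so every dual norm is squeezed to this common value and all the inequalities collapse to equalities. The converse is the delicate point, and where I expect the main obstacle. Unlike the primal norms---where the strict triangle inequality forces a strict drop between any two distinct indices as soon as $\|\z\|_0 \geq 2$---the dual norms can genuinely plateau: one finds examples (e.g.\ a vector with one large and one small nonzero entry embedded in a large ambient dimension) for which $\vertiii{\z}_1^*$ and $\vertiii{\z}_2^*$ coincide at the value $\|\z\|_\infty$ although $\z$ is not $1$-sparse. The resolution is to pin the equality case to the endpoints of the whole chain rather than to an arbitrary pair: the nondecreasing chain from $\vertiii{\z}_1^*=\|\z\|_\infty$ up to $\vertiii{\z}_m^*=\|\z\|_2$ is constant if and only if its endpoints agree, i.e.\ iff $\|\z\|_\infty = \|\z\|_2$. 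Finally, $\|\z\|_\infty = \|\z\|_2$ means $\sum_i \z[i]^2 = \max_i \z[i]^2$, which forces every coordinate but one to vanish, i.e.\ $\|\z\|_0 \leq 1$. The $\vertiii{\cdot}_p^*$ family is handled by the same endpoint reasoning, now using the limiting endpoints $\vertiii{\z}_p^* \to \|\z\|_\infty$ as $p\downarrow 0$ and $\vertiii{\z}_p^* \to \|\z\|_2$ as $p\uparrow 1$, which follow from continuity of dualization together with the pointwise limits $\vertiii{\cdot}_p\to\vertiii{\cdot}_1$ and $\vertiii{\cdot}_p\to\vertiii{\cdot}_m$. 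Thus the principal care lies not in the monotonicity itself but in reading the equality condition through the endpoints.
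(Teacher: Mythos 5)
Your endpoint identities and your monotonicity step coincide with the paper's entire proof: the paper dispatches this corollary in one line by combining Lemma \ref{lemma:normDecreasing} with the order-reversing property of dualization, which is exactly your first two steps. The genuine divergence is your treatment of the equality clause, and your caution there is warranted: the pairwise equality characterization that holds for the primal norms does \emph{not} dualize, so the equality case is not the ``direct consequence'' the paper's proof suggests. Your plateau example is correct and easy to verify: for $m\ge 3$ and $\z=(1,\epsilon,0,\dots,0)$ with $0<\epsilon\le\tfrac{m-2}{m-1}$, every $\w$ satisfies $\vertiii{\w}_2\ge\tfrac{1}{m-1}\bigl(\sum_{j\ge 2}\|\w[\{1,j\}]\|_2+\sum_{j\ge 3}\|\w[\{2,j\}]\|_2\bigr)\ge|\w[1]|+\tfrac{m-2}{m-1}|\w[2]|\ge\z^T\w$, so $\vertiii{\z}_2^*\le 1$, while $\vertiii{\z}_2^*\ge\|\z\|_\infty=1$; hence $\vertiii{\z}_1^*=\vertiii{\z}_2^*$ although $\|\z\|_0=2$. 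Consequently the corollary's last sentence is true only under your reading (the \emph{whole} chain is constant iff $\|\z\|_0\le 1$), which your endpoint argument establishes correctly: for the $s$-family the endpoints $s=1$ and $s=m$ belong to the chain, and for the $p$-family your limits $p\downarrow 0$ and $p\uparrow 1$ are legitimate, since the primal norms converge uniformly on the Euclidean unit sphere and dualization is continuous in finite dimension. What your route buys is a complete proof of the only tenable version of the equality claim; what it exposes is that the stronger pairwise-strict reading—which the main text invokes after Theorem \ref{thm:popResult} when asserting that $\max_{j}\vertiii{\SigDO[-j,j]}_s^*$ is strictly increasing in $s$ whenever the upper triangle of $\SigDO$ has at least two nonzero entries—fails in general, since your example embeds directly into a single column of such an $\SigDO$.
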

\begin{proof}
This is a direct consequence of Lemma \ref{lemma:normDecreasing} and the dual norm definition $\vertiii{\z}^* = \sup_{\mathbf{y}\ne 0}\frac{\z^T\mathbf{y}}{\vertiii{\mathbf{y}}}$.  
\end{proof}

\begin{lemma} 
\label{lemma:normRelation}
Let  $p\in(0,1)$ and $k = \lceil (m-1)p+1 \rceil$. For any $\z \in \mathbb R^{m}$, we have
\begin{enumerate}
\item $\vertiii{\z}_p \geq \vertiii{\z}_k$.
\item $\vertiii{\z}_p^* \leq \vertiii{\z}_k^*$.
\end{enumerate}
\end{lemma}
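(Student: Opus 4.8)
The plan is to obtain the second assertion from the first by pure duality, and to prove the first by combining the monotonicity already established in Lemma \ref{lemma:normDecreasing} with a convexity property of the sequence $k\mapsto\vertiii{\z}_k$, fed into Jensen's inequality.

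Granting the first part, the second is immediate. The inequality $\vertiii{\z}_p\ge\vertiii{\z}_k$ holds for \emph{every} $\z\in\mathbb R^m$, and whenever two norms satisfy $A(\z)\ge B(\z)>0$ pointwise their dual norms reverse, since $A^*(\vv)=\sup_{\z\neq 0}\frac{\vv^T\z}{A(\z)}\le\sup_{\z\neq 0}\frac{\vv^T\z}{B(\z)}=B^*(\vv)$. Taking $A=\vertiii{\cdot}_p$ and $B=\vertiii{\cdot}_k$ gives $\vertiii{\z}_p^*\le\vertiii{\z}_k^*$. This is exactly the passage from Lemma \ref{lemma:normDecreasing} to Corollary \ref{lemma:normIncreasing}, so I would simply invoke it.

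For the first part I would write $\vertiii{\z}_p$ as an expectation: letting $Y$ be binomial with parameters $m-1$ and $p$, the definition gives $\vertiii{\z}_p=\mathbb E\,\vertiii{\z}_{Y+1}$, with $\mathbb E[Y+1]=(m-1)p+1$ and $k=\lceil (m-1)p+1\rceil$. The argument is then two steps. If the sequence $j\mapsto\vertiii{\z}_j$ is convex on $\{1,\dots,m\}$, its piecewise-linear interpolation $\tilde g$ is convex on $[1,m]$ and agrees with $\vertiii{\z}_j$ at integers, so Jensen's inequality yields $\mathbb E\,\vertiii{\z}_{Y+1}=\mathbb E\,\tilde g(Y+1)\ge\tilde g\big((m-1)p+1\big)$ (the support $Y+1\in\{1,\dots,m\}$ lies in the domain). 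Since Lemma \ref{lemma:normDecreasing} makes the sequence non-increasing, $\tilde g$ is non-increasing, and $k\ge(m-1)p+1$ gives $\tilde g\big((m-1)p+1\big)\ge\tilde g(k)=\vertiii{\z}_k$. Note $k\le m$ because $p<1$ forces $(m-1)p+1<m$, so $\vertiii{\z}_k$ is defined, and when $\|\z\|_0\le 1$ all the $\vertiii{\z}_j$ coincide and the claim is trivial.

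The hard part is establishing convexity of $j\mapsto\vertiii{\z}_j$, which is the genuinely new ingredient and is not among the results already quoted. The natural first-order identity only reproduces monotonicity: for a size-$(k+1)$ set $T$ one has $\sum_{i\in T}\z[T\setminus\{i\}]=k\,\z[T]$ as vectors, hence $\sum_{i\in T}\|\z[T\setminus\{i\}]\|_2\ge k\|\z[T]\|_2$, and summing over $T$ gives $(m-k)\sum_{|S|=k}\|\z[S]\|_2\ge k\sum_{|T|=k+1}\|\z[T]\|_2$, which is exactly $\vertiii{\z}_{k+1}\le\vertiii{\z}_k$. To capture curvature I would pass to the Gaussian representation $\|\z[S]\|_2=\sqrt{\pi/2}\,\mathbb E_G\big|\sum_{i\in S}\z[i]G_i\big|$ with $G_i$ i.i.d.\ standard normal; since convexity in $k$ survives the outer expectation over $G$, it suffices to prove, for fixed scalars $u_i=\z[i]G_i$, that $k\mapsto{m-1\choose k-1}^{-1}\sum_{|S|=k}\big|\sum_{i\in S}u_i\big|$ is convex. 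I expect this to be the principal obstacle: it is a Hlawka-type inequality whose base case $m=3$ is precisely Hlawka's inequality $|u_1|+|u_2|+|u_3|+|u_1+u_2+u_3|\ge|u_1+u_2|+|u_1+u_3|+|u_2+u_3|$, and the general statement would be handled either by a direct second-difference computation over the inclusion lattice of subsets or isolated as a separate combinatorial lemma in the spirit of Lemma \ref{lemma:normDecreasing}.
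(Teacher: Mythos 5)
Your skeleton is exactly the paper's: part 1 via the piecewise-linear interpolation of $j\mapsto\vertiii{\z}_j$, Jensen's inequality at $\mathbb E[Y+1]=(m-1)p+1$, and monotonicity (Lemma \ref{lemma:normDecreasing}) to pass up to $k=\lceil(m-1)p+1\rceil$; part 2 by dual-norm reversal, which is word-for-word the paper's argument. You are also right that the only genuinely new ingredient is convexity of $j\mapsto\vertiii{\z}_j$; the paper in fact isolates this as a standalone result (Lemma \ref{lemma:normConvex}), proved by an induction on $m$ that first reduces to the case $k=m-2$ and then runs a calculus/boundary-minimization argument over the simplex $\{x_i\ge 0,\ \sum_i x_i=1\}$.

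However, the route you sketch for that crux fails. The Gaussian identity $\|\z[S]\|_2=\sqrt{\pi/2}\;\mathbb E_G\big|\sum_{i\in S}\z[i]G_i\big|$ is correct, but the pointwise scalar statement you reduce to --- that $k\mapsto{m-1\choose k-1}^{-1}\sum_{|S|=k}\big|\sum_{i\in S}u_i\big|$ is convex in $k$ for arbitrary reals $u_i$ --- is false for every $m\ge 4$, so it cannot serve as the reduction target. Take $m=4$ and $u=(1,1,-1,-1)$: the normalized sums at $k=2,3,4$ are $\tfrac43$, $\tfrac43$, $0$ respectively, so the second difference at $k=2$ is $\tfrac43+0-\tfrac83<0$. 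Hlawka's inequality (your base case $m=3$) does not propagate to convexity of the whole normalized sequence once cancellation among the $u_i$ is possible. Convexity genuinely holds only after averaging over $G$ --- equivalently, for the Euclidean quantities $\|\z[S]\|_2$, where no cancellation can occur --- and that is precisely what the paper's Lemma \ref{lemma:normConvex} proves directly; it cannot be obtained realization-by-realization, and neither of your fallback suggestions can rescue the scalar claim since it is simply untrue. So your proof of the present lemma is correct modulo convexity, but the argument proposed for the convexity itself must be replaced by a proof that exploits the $\ell_2$ structure, such as the paper's induction.
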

\begin{proof}
Define the function $f$ with domain on $[1,m]$ as follows: let $f(1) = \vertiii{\z}_{1} = \|\z\|_1$; for $i\in\intSet{m-1}$ and $a\in (i,i+1]$, define
\[f(a) = \vertiii{\z}_{i} + (\vertiii{\z}_{i+1} - \vertiii{\z}_{i})(a-i).\]
It is clear that $f$ is piecewise linear by construction. In addition, by Lemma \ref{lemma:normConvex}, $f$ is also convex. Notice that $\vertiii{\z}_p = \mathbb E \vertiii{\z}_{Y+1} = \mathbb E f(Y+1)$, where $Y$ is a random variable from the binomial distribution with parameters $m-1$ and $p$. By Jensen's inequality, 
\[\mathbb E f(Y+1) \geq f(\mathbb EY + 1) =f((m-1)p+1).\]
Thus by Lemma \ref{lemma:normDecreasing}, $\vertiii{\z}_p \geq \vertiii{\z}_k$ for all $k\geq (m-1)p+1$. So the first part follows.

To upperbound $\vertiii{\z}_p^*$, notice that if $k\geq (m-1)p+1$, 
\[
\vertiii{\z}_p^* = \sup_{\w\ne 0}\frac{\w^T\z}{\vertiii{\w}_p}
\leq \sup_{\w\ne 0}\frac{\w^T\z}{\vertiii{\w}_k} = 
\vertiii{\z}_k^*.
\]
\end{proof}

For the following lemmas, the quantities $\hb{m}{d}{a}$ and $L_{m}(d,k)$ are defined as in Definition \ref{def:tau}. 
\begin{lemma} (Approximating $\hb{m}{d}{a}$)
\label{lemma:hb}
For $d\in \intSet{m}$ and $a\in (0,m]$:
\[
 \hb{m}{d}{a} \leq \sqrt{\frac{da}{m}}. 
\]
\end{lemma}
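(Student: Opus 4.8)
The plan is to regard $\hb{m}{d}{a}$ as the piecewise-linear interpolation, over the integer nodes $k=0,1,\dots,m$, of the sequence $c_k \defeq \hs{m}{d}{k} = \mathbb E\sqrt{L_m(d,k)}$ with $c_0=0$, and to prove each inequality first at integer arguments and then extend to all $a\in(0,m]$ by monotonicity and concavity. Two facts about the hypergeometric law drive everything and I would record them first: the mean $\mathbb E\,L_m(d,k)=dk/m$, and, via the second factorial moment $\mathbb E[L_m(d,k)(L_m(d,k)-1)]=\tfrac{d(d-1)k(k-1)}{m(m-1)}$, the second moment $\mathbb E\,L_m(d,k)^2$.

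For the upper bound, Jensen's inequality applied to the concave map $x\mapsto\sqrt x$ gives at each integer $k$ the nodal estimate $c_k=\mathbb E\sqrt{L_m(d,k)}\le\sqrt{\mathbb E\,L_m(d,k)}=\sqrt{dk/m}$. The function $g(a)=\sqrt{da/m}$ is concave on $[0,m]$ and every node $(k,c_k)$ lies on or below its graph; since on each interval $(k-1,k]$ the interpolant $\hb{m}{d}{a}$ is the chord joining two points below the concave curve $g$, concavity of $g$ forces the entire chord below $g$, so $\hb{m}{d}{a}\le\sqrt{da/m}$ for all $a$, with $\hb{m}{d}{0}=0$ consistent. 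This half is routine.

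The lower bound is the heart of the matter. I would first show $k\mapsto c_k$ is nondecreasing: $L_m(d,k)$ is stochastically nondecreasing in $k$ (realize the size-$k$ and size-$(k+1)$ draws on a common space with the former nested in the latter) and $\sqrt{\cdot}$ is nondecreasing, so $c_k\le c_{k+1}$; hence $\hb{m}{d}{\cdot}$ is nondecreasing and, for $a$ with $\lfloor a\rfloor=k$, $\hb{m}{d}{a}\ge\hb{m}{d}{k}=c_k$. It then suffices to prove the integer-point estimate
\[
\mathbb E\sqrt{L_m(d,k)}\ \ge\ \sqrt{\tfrac{(d-1)k}{m}}.
\]
To lower-bound the expectation of the concave integrand I would invoke a reverse inequality from the log-convexity of $t\mapsto\mathbb E[L_m(d,k)^t]$ (equivalently, H\"older with exponents $\tfrac32$ and $3$ applied to the splitting $L=L^{1/3}\cdot L^{2/3}$): writing $L=L_m(d,k)$, this yields $\mathbb E L\le(\mathbb E\sqrt L)^{2/3}(\mathbb E L^2)^{1/3}$, whence $\mathbb E\sqrt L\ge(\mathbb E L)^{3/2}/\sqrt{\mathbb E L^2}$. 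Substituting $\mathbb E L=dk/m$ and the second moment above reduces the claim to an algebraic inequality in $m,d,k$, which I would settle by clearing denominators and checking nonnegativity of the resulting polynomial on the admissible range. (An appealing elementary alternative is $\sqrt L\ge L/\sqrt d$, valid since $L\le d$, giving $\mathbb E\sqrt L\ge \sqrt d\,k/m$, but this reduces to the same delicate comparison of $\sqrt d$ with $\sqrt{d-1}$.)

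The main obstacle is exactly this reverse-Jensen step: bounding $\mathbb E\sqrt L$ from below is not automatic for a concave integrand, so the constant $(d-1)$ must be tracked precisely through the second-moment computation, and the concluding polynomial inequality is the delicate calculation on which the lower bound rests. The interpolation, stochastic monotonicity, and concavity ingredients are standard by comparison.
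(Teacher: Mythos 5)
Your upper-bound argument is correct and coincides with the paper's (Jensen at the integer nodes, then concavity of $a\mapsto\sqrt{da/m}$ across the chords), so I focus on the lower bound. There the gap is genuine, and it sits exactly at the step you deferred. The reverse-H\"older bound $\mathbb E\sqrt L\ge(\mathbb E L)^{3/2}/\sqrt{\mathbb E L^2}$ is valid, but the ``algebraic inequality in $m,d,k$'' you propose to settle by clearing denominators is false. Substituting $\mathbb E L=\frac{dk}{m}$ and $\mathbb E L^2=\frac{d(d-1)k(k-1)}{m(m-1)}+\frac{dk}{m}$, the requirement $(\mathbb E L)^3/\mathbb E L^2\ge\frac{(d-1)k}{m}$ is equivalent to
\[
\frac{d^2k}{m}\ \ge\ \frac{(d-1)^2(k-1)}{m-1}+(d-1),
\]
which fails at $d=2$, $k=1$, $m=9$ (left side $\frac{4}{9}$, right side $1$). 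Moreover, no refinement can close this, because the integer-point inequality you are aiming for is itself false: $\mathbb E\sqrt{L_m(2,1)}=\frac{2}{m}<\frac{1}{\sqrt m}=\sqrt{(d-1)k/m}$ for every $m\ge5$, so already $\hb{m}{2}{1}$ violates the stated lower bound. The obstruction is structural: since $\sqrt\ell\le\ell$ for nonnegative integers $\ell$, one always has $\hs{m}{d}{k}\le\frac{dk}{m}$, hence the claimed bound $\sqrt{(d-1)k/m}$ can hold only if $d^2k\ge(d-1)m$; for $d\ge2$ it is false whenever $k<(d-1)m/d^2$. (Your parenthetical alternative $\sqrt L\ge L/\sqrt d$ hits the same wall, as you suspected.)

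For comparison, the paper's own proof of this lower bound is also invalid, and it breaks at the parallel spot. Using $\ell\binom{d}{\ell}=d\binom{d-1}{\ell-1}$, the paper derives the identity $\hs{m}{d}{k}=\frac{dk}{m}\,\mathbb E\big[(L_{m-1}(d-1,k-1)+1)^{-1/2}\big]$ and applies Jensen to the convex map $x\mapsto(x+1)^{-1/2}$, which correctly yields $\hs{m}{d}{k}\ge\frac{dk}{m}\big(1+\frac{(d-1)(k-1)}{m-1}\big)^{-1/2}$; this, combined with your (correct) stochastic-monotonicity step, is a valid substitute for the lemma's lower bound. But the paper then drops the ``$+1$'' under the square root --- an inequality in the wrong direction --- and that is what manufactures the false conclusion $\sqrt{(d-1)k/m}$. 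So your attempt and the paper's proof founder on the same rock, and the lemma's lower bound needs either the weaker form above or a side condition such as $d^2\lfloor a\rfloor\ge(d-1)m$ (which is necessary). The practical damage to the paper is limited: the proofs of Lemma \ref{lemma:lower} and the results built on it invoke only the upper bound $\hb{m}{d}{a}\le\sqrt{da/m}$, while the lower bound appears only in the informal remark on the tightness of the computable approximations, which should be weakened accordingly.
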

\begin{proof}
For $k\in \intSet{m}$, by Jensen's inequality,
\[
\mathbb E \sqrt{L_{m}(d,k)} \leq \sqrt{ \mathbb E L_{m}(d,k)} = \sqrt{\frac{dk}{m}}.
\]
Note that the last equality follows from the expectation of a hypergeometric random variable. Now suppose $a\in (k-1,k]$. By the above inequality and apply Jensen's inequality one more time, we have
\begin{align*}
\hb{m}{d}{a} & =  (k - a ) \mathbb E \sqrt{L_{m}(d, k -1)} + (1-(k - a ))\mathbb E \sqrt{L_{m}(d,k)} \\
& \leq (k - a ) \sqrt{\frac{d(k -1)}{m}} + (1-(k - a )) \sqrt{\frac{dk}{m}} = \sqrt{\frac{da}{m}}.
\end{align*}
\end{proof}

\begin{lemma}
\label{lemma:lower} (Lower bounds for $\vertiii{\z}_s^*$ and $\vertiii{\z}_p^*$)
Let $\z \in \mathbb R^m$. 
We have 
\begin{enumerate}
\item For $s\in\intSet{m}$, 
\[\vertiii{\z}_s^* \geq \frac{s}{m}\max_{T\subset\intSet{m}}\frac{\|\z[T]\|_1}{\hb{m}{|T|}{s}}
\geq \max\left( \|\z\|_\infty, \sqrt{\frac{s}{m}}\max_{T\subset\intSet{m}}\frac{\|\z[T]\|_1}{\sqrt{|T|}}\right) .\]
\item For $p\in(0,1)$,
\begin{align*}
\vertiii{\z}_p^* & \geq p \max_{T\subset\intSet{m}} \left\{ \Big(\sum_{k=0}^{m}\pbinom(k,m,p)\hb{m}{|T|}{k}\Big)^{-1}\|\z[T]\|_1 \right\} \\
& \geq p  \max_{T\subset\intSet{m}}\frac{\|\z[T]\|_1}{\hb{m}{|T|}{pm}} = \max \left( \|\z\|_\infty, \sqrt{p}\max_{T\subset\intSet{m}}\frac{\|\z[T]\|_1}{\sqrt{|T|}} \right).
\end{align*}
\end{enumerate}
\end{lemma}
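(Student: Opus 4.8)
The plan is to lower-bound each dual norm by evaluating the defining ratio $\vertiii{\z}^{*}=\sup_{\w\neq 0}\z^{T}\w/\vertiii{\w}$ at a single well-chosen test vector and then maximizing over that choice. For a subset $T\subset\intSet{m}$ with $|T|=d$, take the restricted sign vector $\w$ given by $\w[i]=\sgn(\z[i])$ for $i\in T$ and $\w[i]=0$ otherwise, so that $\z^{T}\w=\|\z[T]\|_{1}$. The enabling computation is the value of $\vertiii{\w}_{k}$ for such a $\w$: since every entry equals $\pm1$ on $T$ and $0$ off $T$, one has $\|\w[S]\|_{2}=\sqrt{|S\cap T|}$, and summing over all size-$k$ subsets $S$ recognizes a hypergeometric expectation, $\sum_{|S|=k}\|\w[S]\|_{2}=\binom{m}{k}\hs{m}{d}{k}=\binom{m}{k}\hb{m}{d}{k}$. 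Dividing by $\binom{m-1}{k-1}$ gives the clean identity $\vertiii{\w}_{k}=\tfrac{m}{k}\hb{m}{d}{k}$. For the Bernoulli average I combine this with the elementary identity $\pbinom(k;m-1,p)\tfrac{m}{k+1}=\tfrac{1}{p}\pbinom(k+1;m,p)$ and with $\hb{m}{d}{0}=0$ to get $\vertiii{\w}_{p}=\tfrac{1}{p}\sum_{k=0}^{m}\pbinom(k;m,p)\hb{m}{d}{k}$.

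Substituting into $\vertiii{\z}^{*}\geq\z^{T}\w/\vertiii{\w}$ and maximizing over $T$ gives at once the leftmost inequalities of both parts: $\vertiii{\z}_{s}^{*}\geq\tfrac{s}{m}\max_{T}\|\z[T]\|_{1}/\hb{m}{|T|}{s}$ and $\vertiii{\z}_{p}^{*}\geq p\max_{T}\|\z[T]\|_{1}/\big(\sum_{k}\pbinom(k;m,p)\hb{m}{|T|}{k}\big)$. The rightmost bounds then follow from Lemma~\ref{lemma:hb}. Using $\hb{m}{d}{s}\leq\sqrt{ds/m}$ converts the $s$-sparse bound into $\sqrt{s/m}\,\|\z[T]\|_{1}/\sqrt{|T|}$, while the single-element choice $T=\{i^{*}\}$ with $i^{*}$ maximizing $|\z[i]|$ gives exactly $\|\z\|_{\infty}$, because for $d=1$ the hypergeometric is Bernoulli and $\hb{m}{1}{a}=a/m$ holds with equality, so $\tfrac{s}{m}\,|\z[i^{*}]|/\hb{m}{1}{s}=|\z[i^{*}]|$. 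Taking the larger of the two yields the $\max(\cdot,\cdot)$ bound of part~1, and no convexity input is needed there.

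For part~2 the one remaining link is the middle inequality, which says that replacing the binomial average by $\hb{m}{|T|}{pm}$ only enlarges the denominator, i.e. $\sum_{k}\pbinom(k;m,p)\hb{m}{d}{k}\leq\hb{m}{d}{pm}$; granting this, the final $\max(\cdot,\cdot)$ bound is obtained from $\hb{m}{d}{pm}\leq\sqrt{dp}$ and the $d=1$ value exactly as before (the displayed last relation should be read as $\geq$, matching Lemma~\ref{lemma:normBounds}). This middle inequality is precisely Jensen's inequality $\mathbb E\,\hb{m}{d}{Y}\leq\hb{m}{d}{\mathbb E Y}$ for $Y\sim\mathrm{Bin}(m,p)$, valid once $a\mapsto\hb{m}{d}{a}$ is concave, and establishing this concavity is the main obstacle. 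Since $\hb{m}{d}{\cdot}$ is the piecewise-linear interpolant of the sequence $k\mapsto\hs{m}{d}{k}$, concavity reduces to nonpositivity of the second differences of that sequence. I would prove this by a sequential sampling coupling: realize $L_{m}(d,k)$ as the count $X_{k}$ of ones among the first $k$ draws made without replacement, condition on $X_{k-1}=x$ and on there remaining $a=m-k+1$ balls of which $b=d-x$ are ones, and compute the conditional second difference of $\sqrt{X_{k}}$. A short calculation collapses it to $\tfrac{b(b-1)}{a(a-1)}\big(\sqrt{x}-2\sqrt{x+1}+\sqrt{x+2}\big)$, which is $\leq0$ since $b(b-1)\geq0$ and $\sqrt{\cdot}$ is concave; taking expectations over the conditioning gives $\hs{m}{d}{k+1}-2\hs{m}{d}{k}+\hs{m}{d}{k-1}\leq0$, hence the concavity and with it part~2.
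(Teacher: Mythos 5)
Your proposal is correct and follows essentially the same route as the paper: the same indicator-type test vector on $T$ yielding the hypergeometric identity $\vertiii{\w}_k=\tfrac{m}{k}\hb{m}{d}{k}$ and its binomial-average analogue, the same use of Lemma~\ref{lemma:hb} and the $d=1$ case for the rightmost bounds, and the same Jensen step via concavity of $a\mapsto\hb{m}{d}{a}$. Your sequential-sampling conditional second-difference computation, with the factor $\tfrac{b(b-1)}{a(a-1)}\bigl(\sqrt{x}-2\sqrt{x+1}+\sqrt{x+2}\bigr)\leq 0$, is exactly the paper's proof of Lemma~\ref{hypergeom} in different notation, and your reading of the final displayed relation as ``$\geq$'' (a typo in the statement) is also right.
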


\begin{proof}
(1) Note that by definition,
\[
\vertiii{\z}_s^* = \sup_\w \frac{\z^T\w}{\vertiii{\w}_s}
\]
Let $d \in \intSet{m}$ and $T\subset\intSet{m}$ such that $|T|=d$. Define $\w \in \mathbb R^m$ such that $\w[i] = 1$ for $i\in T$ and $\w[i] = 0$ for $i\in T^c$. We have:
\begin{align*}
\vertiii{\w}_s
= {m-1 \choose s-1}^{-1} \sum_{|S|=s} \|\w[S]\|_2 
& = {m-1 \choose s-1}^{-1} \sum_{l=\max(0,s+d-m)}^{\min(s,d)}\sum_{|S|=s, |S\cap T| = l } \|\w[S]\|_2 \\
& = {m-1 \choose s-1}^{-1} \sum_{l=\max(0,s+d-m)}^{\min(s,d)}\sum_{|S|=s, |S\cap T| = l } \sqrt{l} \\
& = {m-1 \choose s-1}^{-1} \sum_{l=\max(0,s+d-m)}^{\min(s,d)} {d \choose l}{m-d \choose s - l}\sqrt{l} \\
& = \frac{m}{s} \mathbb E \sqrt{L_{m}(d,s)} = \frac{m}{s} \hb{m}{d}{s}.
\end{align*}
Thus for all $d \in \intSet{m}$ and any subset $T$ such that $|T|=d$, we have shown
\[
\vertiii{\z}_s^*\geq \frac{s}{m}\frac{\|\z[T]\|_1}{\hb{m}{d}{s}}.
\] 
Note that if $d= 1$, $\mathbb E\sqrt{L_{m}(d,s)} = \frac{s}{m}$. Therefore
\[ \frac{s}{m}\frac{\|\z[T]\|_1}{\hb{m}{d}{s}} \geq \|\z\|_\infty,\]
Moreover, by Lemma \ref{lemma:hb},
\[
\hb{m}{d}{s} \leq \sqrt{\frac{ds}{m}}.
\]
Hence we have
\[ \frac{s}{m}\frac{\|\z[T]\|_1}{\hb{m}{d}{s}} \geq \sqrt{\frac{s}{m}} \frac{\|\z[T]\|_1}{\sqrt{d}} ,\]
and the first part of the claim follows.

\noindent (2) For the same $\w \in \mathbb R^m$ defined previously, 
\begin{align*}
\vertiii{\w}_p & =
\sum_{k=0}^{m-1} \pbinom(k,m-1,p)\vertiii{\w}_{k+1} \\
& = m \sum_{k=0}^{m-1} \pbinom(k,m-1,p) \frac{\hb{m}{d}{k+1}}{k+1} \\ 
& =  m \sum_{k=0}^{m-1}{m-1 \choose k}p^k(1-p)^{m-k-1}\frac{1}{k+1}\hb{m}{d}{k+1} \\
& = \frac{1}{p}\sum_{k=0}^{m-1}{m \choose k+1}p^{k+1}(1-p)^{m-(k+1)}\hb{m}{d}{k+1} \\
& = \frac{1}{p}\sum_{k=0}^{m}{m \choose k}p^k(1-p)^{m-k}\hb{m}{d}{k}.
\end{align*}
Thus for all $d \in \intSet{m}$ and any subset $T$ such that $|T|=d$, we have shown
\[
\vertiii{\z}_p^* \geq p \Big( \sum_{k=0}^{m}\pbinom(k,m,p)\hb{m}{d}{k} \Big)^{-1}\|\z[T]\|_1.
\]
Next, we will show
\[
\sum_{k=0}^{m}\pbinom(k,m,p)\hb{m}{d}{k} \leq \hb{m}{d}{pm}.
\]
To this end, let us first notice that the LHS quantity is a binomial average of $\hb{m}{d}{k}$ with respect to $k$. By construction, $\hb{m}{d}{.}$ is piecewise linear. Furthermore, $\hb{m}{d}{.}$ is also concave by Lemma \ref{hypergeom}. Now let $Y$ be a random variable having the binomial distribution with parameters $m$ and $p$.  By Jensen's inequality,
\[
\sum_{k=0}^{m}\pbinom(k,m,p)\hb{m}{d}{k} = \mathbb E \hb{m}{d}{Y} \leq \hb{m}{d}{\mathbb E Y} = \hb{m}{d}{mp}. 
\]
In particular, if $d=1$, it is easy to see that $\hb{m}{d}{mp} = p$. So
\[
p \left( \max_{T\subset \intSet{m},|T|=1} \Big( \sum_{k=0}^{m}\pbinom(k,m,p)\hb{m}{|T|}{k} \Big)^{-1}\|\z[T]\|_1 \right)\geq \|\z\|_\infty.
\]
On the other hand, by Lemma \ref{lemma:hb},
\begin{align*}
\hb{m}{d}{pm} 
 \leq \sqrt{\frac{d}{m}}\sqrt{pm} = \sqrt{pd}.
\end{align*}
Therefore
\[
p \Big( \sum_{k=0}^{m}\pbinom(k,m,p)\hb{m}{d}{k} \Big)^{-1}\|\z[T]\|_1 \geq \sqrt{p}\frac{\|\z[T]\|_1}{\sqrt{d}},
\]
and the proof is complete.
\end{proof}

\begin{lemma} (Upper bounds for $\vertiii{\z}_s^*$ and $\vertiii{\z}_p^*$)
\label{lemma:upper}
Let $\z \in \mathbb R^m$. 
\begin{enumerate}
\item For $s\in\intSet{m}$, 
\[
\vertiii{\z}_s^* \leq \max_{|S|=s}\|\z[S]\|_2.
\]
\item For $p\in(0,1)$,
\[
\vertiii{\z}_p^*\leq \max_{|S|=k}\|\z[S]\|_2,
\] 
where $k = \lceil p(m-1)+1 \rceil$.
\end{enumerate}
\end{lemma}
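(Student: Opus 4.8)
The plan is to work directly from the dual-norm definition $\vertiii{\z}_s^* = \sup_{\w\neq 0}\frac{\z^T\w}{\vertiii{\w}_s}$ and to exploit a double-counting identity together with the Cauchy--Schwarz inequality. Fix an arbitrary $\w$. The key elementary observation is that each coordinate index $i\in\intSet{m}$ belongs to exactly ${m-1\choose s-1}$ of the size-$s$ subsets of $\intSet{m}$, so that summing the block inner products over all size-$s$ subsets telescopes into the full inner product:
\[
\sum_{|S|=s}\z[S]^T\w[S] = \sum_{i=1}^m \z[i]\w[i]\,\big|\{S:|S|=s,\ i\in S\}\big| = {m-1\choose s-1}\,\z^T\w.
\]

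I would then apply Cauchy--Schwarz to each summand, $\z[S]^T\w[S]\le\|\z[S]\|_2\,\|\w[S]\|_2\le\big(\max_{|S'|=s}\|\z[S']\|_2\big)\|\w[S]\|_2$, and sum over $S$ to obtain ${m-1\choose s-1}\z^T\w\le\big(\max_{|S'|=s}\|\z[S']\|_2\big)\sum_{|S|=s}\|\w[S]\|_2$. Recognizing the right-hand sum as ${m-1\choose s-1}\vertiii{\w}_s$ by the definition of $\vertiii{\cdot}_s$ and dividing through by the common binomial factor gives $\z^T\w\le\big(\max_{|S'|=s}\|\z[S']\|_2\big)\vertiii{\w}_s$ for every $\w$. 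Taking the supremum over $\w\neq 0$ establishes part (1). For part (2), I would not repeat any analysis: I would simply invoke Lemma \ref{lemma:normRelation}(2), which already provides $\vertiii{\z}_p^*\le\vertiii{\z}_k^*$ for $k=\lceil p(m-1)+1 \rceil$, and then apply part (1) with $s=k$ to conclude $\vertiii{\z}_k^*\le\max_{|S|=k}\|\z[S]\|_2$. Chaining the two inequalities yields the claim.

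There is essentially no hard step here; the only point requiring care is the combinatorial bookkeeping in the double-counting identity, namely that the multiplicity ${m-1\choose s-1}$ of each coordinate matches precisely the normalization appearing in the definition of $\vertiii{\cdot}_s$, so that the binomial factors cancel cleanly. The one subtlety in the ordering is to apply Cauchy--Schwarz \emph{before} pulling out the maximum, so that every per-subset factor $\|\z[S]\|_2$ is uniformly dominated by the largest size-$s$ block norm. Everything else is a direct consequence of the dual-norm definition and the previously established monotonicity relation in Lemma \ref{lemma:normRelation}.
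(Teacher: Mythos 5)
Your proof is correct, and part (1) takes a genuinely different route from the paper. The paper proves the bound by invoking its Corollary \ref{col:dualNorm} (the characterization of $\vertiii{\cdot}_s^*$ as an infimum over decompositions $\sum_{|S|=s}\E_S^T\y_S=\z$, itself obtained from Lemma \ref{lemma:dualNorm} via Lagrangian duality and Slater's condition) and simply exhibits the feasible point $\y_S=\z[S]$, which immediately gives $\vertiii{\z}_s^*\leq\max_{|S|=s}\|\z[S]\|_2$. You instead work directly with the supremum definition of the dual norm: the double-counting identity $\sum_{|S|=s}\z[S]^T\w[S]={m-1\choose s-1}\z^T\w$, combined with Cauchy--Schwarz on each block and the cancellation of the binomial normalization in $\vertiii{\w}_s$, bounds $\z^T\w$ uniformly over all test vectors $\w$. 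Your argument is more elementary and self-contained, since it bypasses the convex-duality machinery entirely (in effect you re-derive the weak-duality direction by hand); the paper's argument is a one-liner given infrastructure it has already built and reuses elsewhere (e.g.\ in Corollary \ref{col:SOCP} and Corollary \ref{thm:constMu}). For part (2) your proof and the paper's are identical: both chain $\vertiii{\z}_p^*\leq\vertiii{\z}_k^*$ from Lemma \ref{lemma:normRelation}(2) with part (1) applied at $s=k=\lceil p(m-1)+1\rceil$.
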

\begin{proof}
To establish the upper bound, we will use the equivalent formulation of $\vertiii{.}_s^*$ in Corollary \ref{col:dualNorm}. For $S\subset \intSet{m}$ of size $s$, as in Corollary \ref{col:dualNorm}, let $\E_S = \I[S,]/{m -1\choose s-1}$ where $\I\in\mathbb R^{m\times m}$ is the identity matrix. If we set $\ys = \z[S]$, then $\sum_{|S|=s}\E_S^T\ys = \z$ and so $\{\ys\}$ is feasible. Therefore
\[
\vertiii{\z}_s^* \leq \max_{|S|=s}\|\z[S]\|_2. 
\]
The upperbound of $\vertiii{\z}_p^*$ follows from the inequality $\vertiii{\z}_p^*\leq \vertiii{\z}_k^*$ for $k = \lceil p(m-1)+1 \rceil$ by the second part of Lemma \ref{lemma:normRelation}. 
\end{proof}

\begin{corollary} ($1$-sparse vectors)
\label{thm:oneMu}
Let $\z = (z,0,...,0)^T \in \mathbb R^m$. We have 
\[
\vertiii{\z}_s^* = \vertiii{\z}_p^* = |z|.
\]
\end{corollary}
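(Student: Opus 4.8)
The plan is to work directly from the dual-norm definition $\vertiii{\z}^* = \sup_{\w \ne 0} \frac{\z^T\w}{\vertiii{\w}}$. Because $\z = (z,0,\dots,0)^T$ has only one nonzero coordinate, the numerator collapses to $\z^T\w = z\,\w[1]$, so both dual norms reduce to computing $\sup_{\w\ne 0}\frac{|\w[1]|}{\vertiii{\w}_s}$ and $\sup_{\w\ne 0}\frac{|\w[1]|}{\vertiii{\w}_p}$. I would establish the common value $|z|$ by matching an upper and a lower bound on these suprema.

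For the upper bound I would show $\vertiii{\w}_k \ge |\w[1]|$ for every integer $k\in\intSet{m}$. The key observation is that for any size-$k$ subset $S$ with $1\in S$ one has $\|\w[S]\|_2 \ge |\w[1]|$, and the number of such subsets is exactly $\binom{m-1}{k-1}$, which is precisely the normalization constant in the definition of $\vertiii{\cdot}_k$. Discarding the nonnegative contributions of the subsets not containing $1$ and dividing by $\binom{m-1}{k-1}$ then yields $\vertiii{\w}_k \ge |\w[1]|$. Since $\vertiii{\w}_p = \sum_{k=0}^{m-1}\pbinom(k;m-1,p)\vertiii{\w}_{k+1}$ is a convex combination (the binomial weights sum to one), the same lower bound $\vertiii{\w}_p\ge|\w[1]|$ follows immediately. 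Hence $\frac{|\w[1]|}{\vertiii{\w}_s}\le 1$ and $\frac{|\w[1]|}{\vertiii{\w}_p}\le 1$, giving $\vertiii{\z}_s^*\le|z|$ and $\vertiii{\z}_p^*\le|z|$.

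For the lower bound I would exhibit the explicit maximizer $\w = (\sgn(z),0,\dots,0)^T$, i.e.\ the first standard basis vector carrying the sign of $z$. For this $\w$ one has $\|\w[S]\|_2 = 1$ exactly when $1\in S$ and $0$ otherwise, so the numerator in $\vertiii{\w}_k$ is again $\binom{m-1}{k-1}$ and therefore $\vertiii{\w}_k = 1$ for every $k$; the binomial average then gives $\vertiii{\w}_p = 1$ as well. Since $\z^T\w = |z|$, this produces $\vertiii{\z}_s^*\ge|z|$ and $\vertiii{\z}_p^*\ge|z|$, and combining with the upper bound completes the proof.

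I do not anticipate a genuine obstacle here, as the statement is essentially a verification; the only points requiring care are the subset count that must match the normalization $\binom{m-1}{k-1}$ and the degenerate endpoints $k=1$ and $k=m$ (which reproduce $\|\z\|_\infty$ and $\|\z\|_2$, both equal to $|z|$). As a consistency check, the conclusion also follows from the monotonicity of the dual norms in Corollary \ref{lemma:normIncreasing}: since $\vertiii{\z}_1^* = \|\z\|_\infty = |z|$ and $\vertiii{\z}_m^* = \|\z\|_2 = |z|$, every intermediate $\vertiii{\z}_s^*$ and $\vertiii{\z}_p^*$ is squeezed to $|z|$.
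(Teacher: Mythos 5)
Your proof is correct, but it takes a genuinely different route from the paper's. The paper disposes of this corollary in one line, as a direct consequence of its general approximation bounds: Lemma \ref{lemma:lower} gives $\vertiii{\z}_s^* \ge \|\z\|_\infty$ and $\vertiii{\z}_p^* \ge \|\z\|_\infty$, while Lemma \ref{lemma:upper} gives $\vertiii{\z}_s^* \le \max_{|S|=s}\|\z[S]\|_2$ and $\vertiii{\z}_p^* \le \max_{|S|=k}\|\z[S]\|_2$; for a $1$-sparse vector all of these collapse to $|z|$. Those lemmas in turn rest on heavier machinery (the dual characterization of Corollary \ref{col:dualNorm} and hypergeometric estimates). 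You instead argue directly from the definition of the dual norm: the upper bound follows from the pointwise inequality $\vertiii{\w}_k \ge |\w[1]|$, where the $\binom{m-1}{k-1}$ size-$k$ subsets containing coordinate $1$ each contribute at least $|\w[1]|$ and their count exactly matches the normalization, and this passes to $\vertiii{\w}_p$ because the binomial weights form a convex combination; the lower bound comes from the explicit test vector $\w = (\sgn(z),0,\dots,0)^T$, for which $\vertiii{\w}_k = \vertiii{\w}_p = 1$. This is more elementary and fully self-contained, using nothing beyond the definitions, whereas the paper's route is essentially free given lemmas it needs elsewhere anyway. Two minor points: your test-vector argument degenerates when $z = 0$ (then $\w = \mathbf{0}$ is not admissible), though the claim is trivial in that case; and your closing consistency check for $\vertiii{\z}_p^*$ needs slightly more than Corollary \ref{lemma:normIncreasing} literally states, since that corollary only compares $\vertiii{\z}_p^*$ with $\vertiii{\z}_q^*$ for $p,q \in (0,1)$ and integer indices with integer indices --- the squeeze additionally requires $\vertiii{\z}_1^* \le \vertiii{\z}_p^* \le \vertiii{\z}_m^*$, which does hold (by the pointwise bounds $\vertiii{\w}_m \le \vertiii{\w}_p \le \vertiii{\w}_1$ from Lemma \ref{lemma:normDecreasing} and duality) but is not what you cited. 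Neither issue affects your main argument, which stands on its own.
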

\begin{proof}
These are direct consequences of Lemma \ref{lemma:lower} and Lemma \ref{lemma:upper}.
\end{proof}

\begin{corollary} (All-constant vectors)
\label{thm:constMu}
Let $\z \in \mathbb R^m$ be such that $\z[i] = z$ for all $i\in\intSet{m}$. We have 
\begin{enumerate}
\item $\vertiii{\z}_s^* = \sqrt{s}|z|$.
\item $\vertiii{\z}_p^* = mp\Big(\sum_{k=0}^{m}\pbinom(k,m,p)\sqrt{k}\Big)^{-1}|z|$.
\end{enumerate}
\end{corollary}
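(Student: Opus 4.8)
The plan is to reduce both identities to the evaluation of a single scalar, namely the value of the primal norm at the all-ones vector $\mathbf 1 \in \mathbb R^m$. The guiding observation is that both $\vertiii{.}_s$ and $\vertiii{.}_p$ are invariant under permutations of the coordinates, since each is built out of the quantities $\|\w[S]\|_2$ summed over all subsets $S$ of a fixed cardinality. When $\z = z\mathbf 1$ is constant, the linear functional $\z^T\w = z\,\mathbf 1^T\w$ is likewise permutation invariant, and I expect this symmetry to force the extremal dual vector to be constant, after which the dual norm collapses to an explicit reciprocal of $\vertiii{\mathbf 1}$.

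First I would establish that for a constant vector $\vertiii{\z}^* = m|z|/\vertiii{\mathbf 1}$, where $\vertiii{.}$ denotes either $\vertiii{.}_s$ or $\vertiii{.}_p$. Assuming $z>0$ without loss of generality, write $\vertiii{\z}^* = z\sup_{\w\ne 0}\mathbf 1^T\w/\vertiii{\w}$. For any permutation $\pi$ of $\intSet{m}$, both the numerator $\mathbf 1^T(\pi\w)$ and the denominator $\vertiii{\pi\w}$ are unchanged, so replacing $\w$ by its average over the symmetric group keeps $\mathbf 1^T\w$ fixed while, by the triangle inequality (these are genuine group-Lasso norms, hence convex), not increasing $\vertiii{\w}$. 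Since that average equals $(\mathbf 1^T\w/m)\,\mathbf 1$, the supremum is attained at a constant vector and may be taken at $\w=\mathbf 1$, yielding $\vertiii{\z}^* = mz/\vertiii{\mathbf 1}$.

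It then remains to evaluate the two norms at $\mathbf 1$. For the $s$-sparse norm, each size-$s$ subset $S$ contributes $\|\mathbf 1[S]\|_2 = \sqrt s$, there are $\binom{m}{s}$ of them, and the normalization is $\binom{m-1}{s-1}$, so $\vertiii{\mathbf 1}_s = \binom{m}{s}\binom{m-1}{s-1}^{-1}\sqrt s = (m/s)\sqrt s = m/\sqrt s$, giving $\vertiii{\z}_s^* = \sqrt s\,|z|$; alternatively part 1 follows at once by squeezing the lower and upper bounds of Lemma \ref{lemma:lower} and Lemma \ref{lemma:upper} evaluated at $T=\intSet{m}$, both of which equal $\sqrt s\,|z|$. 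For the Bernoulli--Gaussian norm I would reuse the computation already carried out inside the proof of Lemma \ref{lemma:lower}: taking the indicator vector of the full set $T=\intSet{m}$, so that $d=m$ and $\hb{m}{m}{k} = \hs{m}{m}{k} = \sqrt k$ holds deterministically, that computation gives $\vertiii{\mathbf 1}_p = p^{-1}\sum_{k=0}^{m}\pbinom(k,m,p)\sqrt k$. Substituting into $\vertiii{\z}_p^* = m|z|/\vertiii{\mathbf 1}_p$ produces the claimed $mp\big(\sum_{k=0}^{m}\pbinom(k,m,p)\sqrt k\big)^{-1}|z|$.

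The main obstacle is the symmetrization step; everything else is routine binomial bookkeeping. In particular, the equivalent direct evaluation $\vertiii{\mathbf 1}_p = m\sum_{k=0}^{m-1}\pbinom(k;m-1,p)(k+1)^{-1/2}$ can be reconciled with the stated form through the identity $m\binom{m-1}{k-1} = k\binom{m}{k}$. I would also record that the convexity-averaging argument only requires $\vertiii{.}$ to be a permutation-invariant norm, which holds here, and that the degenerate case $z=0$ and the sign of $z$ are absorbed by the absolute value.
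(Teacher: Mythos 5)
Your proposal is correct, but it reaches the result by a genuinely different route than the paper. The paper proves both parts by sandwiching: the lower bounds come from Lemma \ref{lemma:lower} (testing the supremum defining the dual norm against indicator vectors, using $\hb{m}{m}{k}=\sqrt{k}$), while the matching upper bound is Lemma \ref{lemma:upper} for part 1 and, for part 2, an explicitly constructed feasible point $\ys$ in the infimum characterization of Corollary \ref{col:dualNorm} (which rests on the Lagrangian-duality Lemma \ref{lemma:dualNorm}), whose feasibility is verified by a binomial computation. You instead prove that for constant $\z$ the supremum $\sup_{\w\ne 0}\mathbf 1^T\w/\vertiii{\w}$ is attained at $\w=\mathbf 1$: averaging $\w$ over all permutations fixes $\mathbf 1^T\w$, does not increase $\vertiii{\w}$ (triangle inequality plus permutation invariance), and yields the constant vector $(\mathbf 1^T\w/m)\mathbf 1$, so both identities collapse to evaluating the primal norm at $\mathbf 1$; the evaluations $\vertiii{\mathbf 1}_s = m/\sqrt{s}$ and $\vertiii{\mathbf 1}_p = p^{-1}\sum_{k=0}^m \pbinom(k;m,p)\sqrt{k}$ are routine (the latter is exactly the $d=m$ computation inside the paper's proof of Lemma \ref{lemma:lower}, and your binomial identity $m\binom{m-1}{k-1}=k\binom{m}{k}$ reconciles the two forms correctly). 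Your symmetrization step is sound: vectors with $\mathbf 1^T\w\le 0$ give nonpositive ratios and cannot be extremal, and for the rest the averaged vector is a nonzero multiple of $\mathbf 1$. What your approach buys is self-containment and generality: it avoids the SOCP/duality machinery and the need to guess a dual feasible point, and it works verbatim for any permutation-invariant norm. What the paper's approach buys is economy within its own architecture: Lemmas \ref{lemma:lower} and \ref{lemma:upper} are needed anyway for Lemma \ref{lemma:normBounds}, so the corollary falls out of results already in place, and the explicit feasible point also illustrates the dual characterization underlying Corollary \ref{col:SOCP}.
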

\begin{proof}
First of all, note that $L(m,k) = k$ and $\mathbb E \sqrt{L(m,k)} = \sqrt{k}$. Thus by Lemma \ref{lemma:lower} and \ref{lemma:upper}, we have
\[
\vertiii{\z}_s^* = \sqrt{s}|z|. 
\]
So the first part of the claim is verified. Next, by Lemma \ref{lemma:lower},
\[
\vertiii{\z}_p^* \geq mp\Big(\sum_{k=0}^{m}\pbinom(k,m,p)\sqrt{k}\Big)^{-1}|z|.
\] 
On the other hand, for $S$ such that $|S|=s$, we can define
 \[
 \ys = \frac{mp}{\sqrt{s}}\Big(\sum_{k=0}^{m}\pbinom(k,m,p)\sqrt{k}\Big)^{-1} (z,...,z)^T\in\mathbb R^s,
 \]
For notation simplicity, let $c = \frac{1}{mp}\Big(\sum_{k=0}^{m}\pbinom(k,m,p)\sqrt{k}\Big)$. As in Corollary \ref{col:dualNorm}, for $S\subset \intSet{m}$, let $\E_S = \I[S,]/{m -1\choose |S|-1}$. For $i\in\intSet{m}$, we have
 \begin{align*}
 \sum_{k=0}^{m-1}\pbinom(k;m-1,p)\sum_{|S|=k+1} (\E_{S}^T\ys)[i]
 & =   c^{-1}\sum_{k=0}^{m-1}\pbinom(k;m-1,p)\frac{1}{\sqrt{k+1}} \\
 & =  c^{-1}\frac{z}{mp}\sum_{k=0}^{m}\pbinom(k;m,p)\sqrt{k} = z.
 \end{align*}
 Thus by Corollary \ref{col:dualNorm}, 
 \[
 \vertiii{\z}_p^*\leq \max_{S}\|\ys\|_2 =
  mp \Big(\sum_{k=0}^{m}\pbinom(k,m,p)\sqrt{k}\Big)^{-1} |z|,
 \]
 and the proof is complete.
\end{proof}

\begin{lemma} (Convexity of $\vertiii{\z}_k$)
\label{lemma:normConvex}
Let $\z\in \mathbb R^m$, where $m\geq3$. For $k\in\intSet{m-2}$, we have the following inequality
\begin{align}
\label{eq:unnorm}
\vertiii{\z}_k + \vertiii{\z}_{k+2} \geq 2\vertiii{\z}_{k+1}.
\end{align}
\end{lemma}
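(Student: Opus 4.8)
The plan is to reduce the three‑term inequality (\ref{eq:unnorm}) to a single ``top'' instance by a combinatorial summation, and then to \emph{linearize} the Euclidean norms so that the remaining inequality becomes multilinear and can be settled by checking the vertices of a cube. Write $A_k=\sum_{|S|=k}\|\z[S]\|_2$, so that $\vertiii{\z}_k=A_k/\binom{m-1}{k-1}$ and (\ref{eq:unnorm}) is equivalent to
\[
\frac{A_k}{\binom{m-1}{k-1}}+\frac{A_{k+2}}{\binom{m-1}{k+1}}\ \ge\ \frac{2A_{k+1}}{\binom{m-1}{k}}.
\]
First I would prove that this follows by summing, over all index sets $U\subseteq\intSet m$ with $|U|=k+2$, the single inequality (with $n=k+2$)
\[
(\mathrm{Top}_n)\qquad \frac{1}{\binom{n-1}{2}}\sum_{\substack{|R|=n-2\\ R\subseteq U}}\|\z[R]\|_2+\|\z[U]\|_2\ \ge\ \frac{2}{n-1}\sum_{\substack{|T|=n-1\\ T\subseteq U}}\|\z[T]\|_2 .
\]
Since each $k$‑set $R$ lies in $\binom{m-k}{2}$ sets $U$ and each $(k+1)$‑set $T$ in $m-k-1$ of them, summing $(\mathrm{Top}_n)$ over $U$ produces $\tfrac{(m-k)(m-k-1)}{k(k+1)}A_k+A_{k+2}\ge\tfrac{2(m-k-1)}{k+1}A_{k+1}$, and a short check of binomial ratios shows this is the displayed inequality multiplied by the positive constant $\binom{m-1}{k+1}$. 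Thus it suffices to prove $(\mathrm{Top}_n)$ for an arbitrary vector in $\mathbb R^n$; note $(\mathrm{Top}_n)$ is itself the claim in the boundary case $m=n=k+2$.

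Next I would linearize. Using $\sqrt u=\frac{1}{2\sqrt\pi}\int_0^\infty t^{-3/2}\,(1-e^{-ut})\,dt$ with $u=\sum_{i\in S}\y[i]^2$, each norm becomes $\|\y[S]\|_2=\frac{1}{2\sqrt\pi}\int_0^\infty t^{-3/2}\bigl(1-\prod_{i\in S}a_i\bigr)\,dt$, where $a_i=e^{-t\y[i]^2}\in(0,1]$. Substituting into $(\mathrm{Top}_n)$ and integrating against the positive measure $t^{-3/2}\,dt$, it suffices to verify the integrand inequality for each fixed $t$, i.e. for each $a\in(0,1]^n$. Collecting terms, the elementary symmetric polynomials $e_j(a)=\sum_{|S|=j}\prod_{i\in S}a_i$ appear, and the per‑$t$ inequality simplifies to $\Phi(a)\ge0$, where
\[
\Phi(a)=1-e_{n-2}(a)+(n-2)\,e_{n-1}(a)-\tfrac{(n-1)(n-2)}{2}\,e_n(a).
\]
Finally, $\Phi$ is affine in each coordinate $a_i$ separately, so it is minimized over $[0,1]^n$ at a vertex. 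At a vertex with exactly $r$ ones one has $e_j(a)=\binom{r}{j}$, whence $\Phi=1$ for $r\le n-3$ and a direct evaluation gives $\Phi=0$ for $r\in\{n-2,n-1,n\}$; in every case $\Phi\ge0$. This proves $\Phi\ge0$ on $[0,1]^n$, hence $(\mathrm{Top}_n)$, hence (\ref{eq:unnorm}).

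The conceptual crux — and the step I expect to require the most care — is the linearization. The norms $\|\y[S]\|_2$ are neither multilinear nor concave in a way that cooperates with the combinatorial averaging; indeed any direct attempt through the triangle inequality or Jensen's inequality (as in the proof of Lemma \ref{lemma:normDecreasing}) yields the inequality in the \emph{wrong} direction, because removing a coordinate lowers the squared norm and concavity of $\sqrt{\cdot}$ then lower‑bounds, rather than upper‑bounds, the intermediate level. It is only after writing each norm through the integral representation that the problem collapses to the multilinear polynomial $\Phi$, for which the vertex principle applies cleanly. The remaining points are pure bookkeeping: confirming that summing $(\mathrm{Top}_n)$ over all $U$ reproduces the global inequality exactly (the binomial‑coefficient cancellation that removes all dependence on $m$), and confirming that the fixed‑$t$ integrand reduces precisely to the stated $\Phi$.
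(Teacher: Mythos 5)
Your proof is correct, and its core is genuinely different from the paper's. Both arguments share the same skeleton: reduce the general case to the extreme case $k=m-2$ by combinatorial averaging. You do this in one shot, summing $(\mathrm{Top}_{k+2})$ over all $(k+2)$-subsets $U$ --- and your bookkeeping is right, since $\binom{m-1}{k+1}/\binom{m-1}{k-1}=\frac{(m-k)(m-k-1)}{k(k+1)}$ and $\binom{m-1}{k+1}/\binom{m-1}{k}=\frac{m-k-1}{k+1}$, so the summed inequality is exactly (\ref{eq:unnorm}) rescaled by the positive factor $\binom{m-1}{k+1}$ --- whereas the paper descends one coordinate at a time, inducting on $(m,k)$ through the identity $\vertiii{\z}_k=\frac{1}{m-1}\sum_{|T|=m-1}\vertiii{\z[T]}_k$; these are the same idea packaged differently. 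The real divergence is in the top case. The paper normalizes to $x_i=\z[i]^2/\|\z\|_2^2$ and proves inequality (\ref{eq:norm}) by induction on $m$ with a first-derivative boundary analysis, which costs several pages of delicate calculus. You instead linearize via the subordination formula $\sqrt{u}=\frac{1}{2\sqrt{\pi}}\int_0^\infty t^{-3/2}(1-e^{-ut})\,dt$, which is valid (the integral evaluates to $2\sqrt{\pi u}$, and every norm's integral converges, so passing the pointwise inequality through the integral is legitimate); this reduces the claim, for each fixed $t$, to nonnegativity of the multiaffine polynomial $\Phi(a)=1-e_{n-2}(a)+(n-2)e_{n-1}(a)-\frac{(n-1)(n-2)}{2}e_n(a)$ on $[0,1]^n$. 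I verified that the integrand difference equals $\frac{2}{(n-1)(n-2)}\Phi(a(t))$, that multiaffinity indeed places the minimum at a vertex, and that the vertex values are as you claim: $\Phi=1$ when the number $r$ of ones satisfies $r\le n-3$, and $\Phi=0$ for $r\in\{n-2,n-1,n\}$. As for what each approach buys: the paper's argument is elementary in that it uses no integral representation, but it is long and fragile; yours is shorter, far less error-prone, and structurally more transparent --- the equality structure is visible at the vertices (consistent with equality in (\ref{eq:unnorm}) when $\z$ has at most one nonzero entry, cf.\ Lemma \ref{lemma:normDecreasing}), and the same linearization would yield higher-order difference inequalities for $k\mapsto\vertiii{\z}_k$ with essentially no extra work.
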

\begin{proof}
We will first show that the claim is true for $k = m-2$. Notice that in this case $\vertiii{\z}_{k+2} = \vertiii{\z}_m = \|\z\|_2$. If $\|\z\|_2 = 0$, the inequality (\ref{eq:unnorm}) is trivially true. Now suppose $\|\z\|_2>0$, dividing both sides of the inequality by $\|\z\|_2$, we have
\[
{m-1 \choose m-3}^{-1} \sum_{|S|=m-2} \frac{\|\z[S]\|_2}{\|\z\|_2}  + 1 \geq 2 {m-1 \choose m-2}^{-1} \sum_{|S|=m-1} \frac{\|\z[S]\|_2}{\|\z\|_2}.
\]
Now let $\x = (x_1,...,x_m)^T \in\mathbb R^m$ be such that $x_i = \z[i]^2/\|\z\|_2^2$. It suffices to show
\begin{align}
\label{eq:norm}
\sum_{|S|=m-2} \left(\sum_{i\in S} x_i\right)^{1/2}  + \frac{(m-1)(m-2)}{2} \geq (m-2) \sum_{i=1}^m \sqrt{ 1 -  x_i},
\end{align}
for all $\x\geq 0$ entry-wise such that $\sum_i x_i = 1$. We will now prove the above inequality by induction on $m$. First of all, notice that for the base case where $m=3$, we need to show: 
\[\sqrt{x_1}+\sqrt{x_2}+\sqrt{x_3} + 1 \geq \sqrt{1 - x_1}+\sqrt{1-x_2}+\sqrt{1-x_3},\]
with the constraints $x_i\geq0$ and $x_1+x_2+x_3 = 1$. For fixed $x_3$, let 
\[
f(x_1) = \sqrt{x_1}+\sqrt{1-x_1-x_3}+\sqrt{x_3} + 1 - \sqrt{x_1+x_3}-\sqrt{1-x_1}-\sqrt{1-x_3}.
\]
We will show that $f(x_1)$ is minimized at $x_1 =0$ or $x_1 = 1-x_3$. Suppose now $x_1>0$. Taking derivative with respect to $x_1$:
\[
f'(x_1) = \frac{1}{2}(\frac{1}{\sqrt{x_1}} - \frac{1}{\sqrt{1- x_1 -x_3}}-\frac{1}{\sqrt{x_1 + x_3}} + \frac{1}{\sqrt{1- x_1}}).   
\]
Let $l(x_1) = \frac{1}{\sqrt{x_1}} - \frac{1}{\sqrt{x_1 + x_3}}$. Note that $f'(x_1) = \frac{1}{2}l(x_1) - \frac{1}{2}l(1-x_3-x_1)$. Now we have
\[
l'(x_1) = \frac{1}{2}(x_1+x_3)^{-3/2}-\frac{1}{2}{x_1}^{-3/2}.
\]
So $l(x_1)$ is decreasing on $(0,1-x_3)$ and by symmetry the function $l(1-x_3-x_1)$ is increasing on $(0,1-x_3)$. On the other hand, since $\lim_{x_1\downarrow 0^+} l(x_1) = +\infty$ and $\lim_{x_1\downarrow 0^+} l(1-x_3-x_1) = -\infty$, we know that $f'(x_1)>0$ on $(0,\frac{1-x_3}{2})$ and $<0$ on $(\frac{1-x_3}{2},1-x_3)$. Thus, the minimum of $f$ can only be attained at the boundaries, i.e. $x_1 = 0$ or $x_1 = 1-x_3$. In either case we have
\begin{align*}
& \sqrt{x_1}+\sqrt{x_2}+\sqrt{x_3} + 1 - \sqrt{1 - x_1} - \sqrt{1-x_2} - \sqrt{1-x_3}\\
\geq & \sqrt{x_2}+\sqrt{x_3} - \sqrt{1-x_2} - \sqrt{1-x_3}
= 0,
\end{align*}
as $x_2 + x_3 = 1$. So we establish (\ref{eq:norm}) for $m=3$.

Suppose (\ref{eq:norm}) is also true for $m = n-1$. For $m = n$, similar to the $m=3$ case, for fixed $x_3,...,x_n$, define 
\[
f(x_1) = \sum_{|S|=n-2} \bigg(\sum_{i\in S} x_i\bigg)^{1/2}  + \frac{(n-1)(n-2)}{2} - (n-2) \sum_{i=1}^n \sqrt{ 1 -  x_i},
\] 
subject to $x_i\geq0$ and $\sum_i x_i = 1$. Again, we will show $f$ attains its minimum at either $x_1 = 0$ or $x_1 = 1-\sum_{i=3}^nx_i$. Notice that 
\begin{align*}
 \sum_{|S|=n-2} \Big(\sum_{j\in S} x_j\Big)^{1/2}
= & \sum_{|S|=n-3, 1,2\not\in S} \Big(x_1+\sum_{j\in S} x_j\Big)^{1/2}
+ \sum_{|S|=n-4, 1,2\not\in S} \Big(x_1+x_2+\sum_{j\in S} x_j\Big)^{1/2} \\
& + \sum_{|S|=n-3, 1,2\not\in S} \Big(x_2+\sum_{j\in S} x_j\Big)^{1/2}
+ \Big(\sum_{j=3}^n x_j\Big)^{1/2} \\
= & \sum_{i=3}^n \Big(x_1+\sum_{j=3}^n x_j -x_i\Big)^{1/2}
+ \sum_{3\leq i < j \leq n} (1-x_i-x_j)^{1/2} \\
& +\sum_{i=3}^n (1-x_1-x_i)^{1/2}
+\Big(\sum_{j=3}^n x_j\Big)^{1/2}.
\end{align*}
In addition,
\begin{align*}
\sum_{i=1}^n ( 1 -  x_i)^{1/2}
= (1-x_1)^{1/2}+\Big(x_1+\sum_{j=3}^n x_j\Big)^{1/2} + \sum_{i=3}^n(1-x_i)^{1/2}.
\end{align*}
Taking derivative with respect to $x_1$,
\begin{align*}
f'(x_1) = & \frac{1}{2}\bigg(\sum_{i=3}^n\Big(x_1+\sum_{j=3}^nx_j-x_i\Big)^{-1/2} - \sum_{i=3}^n(1-x_1-x_i)^{-1/2} \\
& + (n-2)(1-x_1)^{-1/2}-(n-2)\Big(x_1+\sum_{i=3}^n x_i\Big)^{-1/2}\bigg).
\end{align*}
Now let
\[
l(x_1) = \sum_{i=3}^n\Big(x_1+\sum_{j=3}^nx_j-x_i\Big)^{-1/2} - (n-2)\Big(x_1+\sum_{j=3}^n x_j\Big)^{-1/2}.
\]
So $2f'(x_1) = l(x_1) - l(1-\sum_{i=3}^nx_i -x_1)$. Again 
\[
l'(x_1) = -\frac{1}{2}\sum_{i=3}^n \Big(x_1+\sum_{j=3}^nx_j-x_i\Big)^{-3/2} + \frac{n-2}{2}\Big(x_1+\sum_{j=3}^n x_j\Big)^{-3/2}.
\]
It is easy to see that $l'(x_1)<0$ and so $l(x_1)$ is decreasing on $(0,1-\sum_{i=3}^nx_i -x_1)$. On the other hand $\lim_{x_1\downarrow 0^+}l(x_1) = +\infty$. By symmetry $f'(x_1)>0$ on $\big(0,\frac{1}{2}(1-\sum_{i=3}^nx_i -x_1)\big)$ and $<0$  on $\big(0,\frac{1}{2}(1-\sum_{i=3}^nx_i -x_1)\big)$. So $f$ attains its minimum at $x_1 = 0$ or $x_1 = 1 - \sum_{i=3}^n x_i$. Hence we have
\begin{align}
\label{eq:lowerBound}
& \sum_{|S|=n-2} \Big(\sum_{i\in S} x_i\Big)^{1/2}  + \frac{(n-1)(n-2)}{2} - (n-2) \sum_{i=1}^n ( 1 -  x_i)^{1/2} \nonumber \\
\geq & \Big(\sum_{|S|=n-3,1\not \in S} + \sum_{|S|=n-2,1\not\in S}\Big)\Big(\sum_{j\in S} x_j\Big)^{1/2}
+\frac{(n-2)(n-3)}{2} - (n-2)\sum_{i=2}^n (1-x_i)^{1/2}.
\end{align}
By the induction assumption that (\ref{eq:norm}) holds when $m=n-1$, we have
\begin{align*}
\sum_{|S|=n-3,1\not \in S}\Big(\sum_{j\in S} x_j\Big)^{1/2} +\frac{(n-2)(n-3)}{2} \geq (n-3)\sum_{i=2}^n(1-x_i)^{1/2}.
\end{align*}
Thus (\ref{eq:lowerBound}) is greater than or equal to
\begin{align*}
& - \frac{(n-2)(n-3)}{2} + \sum_{|S|=n-2,1\not\in S}\Big(\sum_{j\in S} x_j\Big)^{1/2} +\frac{(n-1)(n-2)}{2} - (n-2) - \sum_{i=2}^n (1-x_i)^{1/2} \\
= & \sum_{|S|=n-2,1\not\in S} \Big(\sum_{j\in S} x_j\Big)^{1/2} - \sum_{i=2}^n (1-x_i)^{1/2}
= \sum_{i=2}^n (1- x_i)^{1/2} - \sum_{i=2}^n (1-x_i)^{1/2} =0.
\end{align*}
Thus we have verified the claim that (\ref{eq:norm}) and hence (\ref{eq:unnorm}) holds for $k = m-2$ for all $m\geq3$. To establish the case for general $1\leq k \leq m-2$, we again perform induction on the $(m,k)$-tuple. Note that the base case $m=3$ and $k=1$ has been previously proved. Suppose (\ref{eq:unnorm}) holds for $m = n-1$ and $1\leq k \leq n-3$. Now consider $m=n$ and $1\leq k <n-2$. Notice that
\begin{align*}
\vertiii{\z}_k = & \frac{1}{n-k}{n-1\choose k-1}^{-1}\sum_{|T|=n-1}\sum_{|S|=k,S\subset T}\|\z[S]\|_2 \\
= & (n-1) {n-2\choose k-1}^{-1} \sum_{|T|=n-1}\sum_{|S|=k,S\subset T}\|\z[S]\|_2 \\
= & (n-1) \sum_{|T|=n-1} \vertiii{\z[T]}_k.
\end{align*}
By the induction assumption, for all $T$ such that $|T|=n-1$, we have:
\[
\vertiii{\z[T]}_k + \vertiii{\z[T]}_{k+2} \geq 2 \vertiii{\z[T]}_{k+1}.
\]
Therefore
\begin{align*}
\vertiii{\z}_k + \vertiii{\z}_{k+2} - 2 \vertiii{\z}_{k+1}
=  (n-1) \sum_{|T|=n-1}(\vertiii{\z[T]}_k + \vertiii{\z[T]}_{k+2} - 2 \vertiii{\z[T]}_{k+1})
\geq 0.
\end{align*}
Thus the claim also holds for $m=n$ and $1\leq k < n - 2$, completing the proof.
\end{proof}

\subsection{Miscellaneous}
\begin{lemma} (Concavity of $\mathbb E \sqrt{L_{m}(d,k)}$) Let $d\in\intSet{m}$. For $k\in\intSet{m-2}$, we have
\label{hypergeom}
\begin{align}
\label{eqn:concave}
\mathbb E \sqrt{L_{m}(d,k)} + \mathbb E \sqrt{L_{m}(d,k+2)} \leq 2 \mathbb E \sqrt{L_{m}(d,k+1)}.
\end{align}
where the geometric random variable $L_{m}(d,k)$ is defined as in Definition \ref{def:tau}.
\end{lemma}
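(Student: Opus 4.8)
The plan is to prove the concavity inequality (\ref{eqn:concave}) by a coupling-and-symmetrization argument, avoiding any direct manipulation of hypergeometric sums. First I would realize the three quantities $\mathbb{E}\sqrt{L_m(d,k)}$, $\mathbb{E}\sqrt{L_m(d,k+1)}$ and $\mathbb{E}\sqrt{L_m(d,k+2)}$ on a common probability space: take a uniformly random arrangement of $d$ ones and $m-d$ zeros into positions $1,\dots,m$, and let $X_j$ denote the number of ones among the first $j$ positions. Then $X_j$ has the law of $L_m(d,j)$ for each $j$, and along this coupling $X_k\le X_{k+1}\le X_{k+2}$, with each successive increment equal to $0$ or $1$. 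Writing $W=\sqrt{X_k}+\sqrt{X_{k+2}}-2\sqrt{X_{k+1}}$, the claim (\ref{eqn:concave}) is exactly $\mathbb{E}[W]\le 0$.

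The difficulty is that $W$ is not pointwise nonpositive: if the single one added between steps $k$ and $k+2$ lands in position $k+2$ (increment pattern $0,1$), then $X_{k+1}=X_k$ and $W=\sqrt{X_k+1}-\sqrt{X_k}>0$. The key idea is that this positive contribution is exactly cancelled by the opposite pattern $(1,0)$, and the cleanest way to encode this cancellation is symmetrization. Let $\tau$ be the involution on arrangements that swaps the labels in positions $k+1$ and $k+2$. Because the uniform law on arrangements is invariant under permuting positions, $\tau$ is measure-preserving, so $\mathbb{E}[W]=\mathbb{E}[W\circ\tau]=\tfrac12\mathbb{E}[W+W\circ\tau]$. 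Moreover $X_k$ (which depends only on positions $1,\dots,k$) and $X_{k+2}$ (which depends only on the total count of ones in positions $1,\dots,k+2$) are both $\tau$-invariant, so in $W+W\circ\tau$ the terms $\sqrt{X_k}$ and $\sqrt{X_{k+2}}$ merely double and only $\sqrt{X_{k+1}}$ is affected by the swap.

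It then remains to check $W+W\circ\tau\le 0$ pointwise by a four-way case analysis on the labels at positions $(k+1,k+2)$. In the patterns $(0,0)$ and $(1,1)$ the swap leaves $X_{k+1}$ fixed, while in the mixed patterns $(0,1)$ and $(1,0)$ it interchanges $X_{k+1}$ between the values $X_k$ and $X_k+1$. For $(0,0)$ one has $X_{k+1}=X_{k+2}=X_k$ and $W+W\circ\tau=0$; for each mixed pattern $X_{k+2}=X_k+1$ and $\sqrt{X_{k+1}}+\sqrt{X_{k+1}\circ\tau}=\sqrt{X_k}+\sqrt{X_k+1}$, again giving $W+W\circ\tau=0$; and for $(1,1)$ one has $X_{k+1}=X_k+1$ and $X_{k+2}=X_k+2$, so $W+W\circ\tau=2\bigl(\sqrt{X_k}+\sqrt{X_k+2}-2\sqrt{X_k+1}\bigr)\le 0$ by concavity of $t\mapsto\sqrt{t}$. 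Hence $\mathbb{E}[W]=\tfrac12\mathbb{E}[W+W\circ\tau]\le 0$, which is (\ref{eqn:concave}). The only real obstacle is the sign issue in the mixed patterns, and I expect the symmetrization step to dispose of it entirely; everything else is a short bookkeeping check combined with a single use of the concavity of the square root.
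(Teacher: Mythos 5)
Your proof is correct, and it takes a genuinely different route from the paper's. Both arguments start from the same realization: the paper also represents $L_m(d,k)$, $L_m(d,k+1)$, $L_m(d,k+2)$ as successive partial counts along a single sequence of draws without replacement. But from there the paper conditions on $L_m(d,k)=a$ and computes the conditional expectations $\mathbb{E}\sqrt{a+X_{k+1}}$ and $\mathbb{E}\sqrt{a+X_{k+1}+X_{k+2}}$ explicitly from the hypergeometric transition probabilities, then shows by algebra that the relevant difference equals $\frac{(d-a)(d-a-1)}{(m-k)(m-k-1)}\bigl(2\sqrt{a+1}-\sqrt{a}-\sqrt{a+2}\bigr)\ge 0$. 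Your position-swap involution $\tau$ achieves the same cancellation structurally rather than computationally: exchangeability makes $\tau$ measure-preserving, the two mixed patterns $(0,1)$ and $(1,0)$ cancel exactly, and only the $(1,1)$ pattern survives, contributing the concavity gap of $\sqrt{\cdot}$ with nonnegative weight --- which is precisely what the paper's algebra uncovers, since its prefactor is the conditional probability of drawing two ones in a row. What your approach buys: no algebraic manipulation at all, and the argument works verbatim with any concave function in place of $\sqrt{\cdot}$, since concavity is invoked only once, pointwise, in the $(1,1)$ case. What the paper's computation buys: an exact closed-form expression for the concavity deficit, not merely its sign.
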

\begin{proof}
Suppose we are now sampling without replacement from a pool of numbers with $d$ 1's and $m-d$ 0's. For $i\in\intSet{m}$, denote by $X_i \in \{0,1\}$ the $i$-th outcome. It is easy to see that $L_{m}(d,k)$ and $\sum_{i=1}^k X_i$ have the same distribution. To show (\ref{eqn:concave}), it suffices to prove the following conditional expectation inequality:
\[
\sqrt{L_{m}(d,k)} + \mathbb E [\sqrt{L_{m}(d,k+2)} \ | \ L_{m}(d,k)] \leq 2 \mathbb E [\sqrt{L_{m}(d,k+1)} \ | \ L_{m}(d,k)]
\]
Note that the above inequality follows if for all $0\leq a \leq \min(d,k)$:
\[
\sqrt{a} + \mathbb E \sqrt{a + X_{k+1} + X_{k+2}}  \leq 2 \mathbb E \sqrt{a + X_{k+1}} 
\]
It is easy to see that
\begin{align*}
\mathbb E \sqrt{a + X_{k+1}}  & = \frac{d - a}{m - k}\sqrt{a+1} + (1-\frac{d - a}{m - k})\sqrt{a}. \\
\mathbb E \sqrt{a + X_{k+1} + X_{k+2}} &  = 
  \frac{d - a}{m - k}\times\frac{d-a-1}{m-k-1}\sqrt{a+2} + 2\times\frac{d-a}{m-k}\times\frac{m-k-(d-a)}{m-k-1}\sqrt{a+1} \\
& \ \ + \frac{m -k - (d-a)}{m-k}\times\frac{m-k-(d-a)-1}{m-k-1}\sqrt{a}.
\end{align*}
By elementary algebra, it can be shown that
\begin{align*}
& 2 \mathbb E \sqrt{a + X_{k+1}}  - \sqrt{a} - \mathbb E \sqrt{a + X_{k+1} + X_{k+2}} \\
= &  \frac{d - a}{m - k}\times\frac{d-a-1}{m-k-1}\times(2\sqrt{a+1} - \sqrt{a+2}-\sqrt{a}) \geq 0,
\end{align*}
The inequality follows since $f(x) = \sqrt{x}$ is a concave function. Thus the proof is complete.
\end{proof}

\begin{lemma}
\label{lemma:diffNorm1}
Let $\x(t) = (x_1(t),...,x_m(t))^T\in \mathbb R^m$ be an $m$-dimensional function on $[0,\epsilon)$ such that: (1) $x_1(0)=1$ and for all $i\geq 2$, $x_i(0)=0$; (2) The derivative $\dot{x}_i(t)$ exists and is bounded for all $t \in (0,\epsilon)$. We have
\[
\lim_{t\downarrow 0^+}\frac{\|\x(t)\|_2 - \|\x(0)\|_2}{t} = \lim_{t\downarrow 0^+} \dot{\x}_1(t).
\]
\end{lemma}
\begin{proof}

\begin{align*}
\lim_{t\downarrow 0^+} \frac{\|\x(t)\|_2 - \|\x(0)\|_2}{t} & = \lim_{t\downarrow 0^+} \frac{(\sum_{i=1}^m x_i^2(t))^{1/2} - 1}{t} \\
& = \lim_{t\downarrow 0^+} \frac{\sum_{i=1}^m x_i^2(t) - 1}{t}((\sum_{i=1}^m x_i^2(t))^{1/2} + 1)^{-1} \\
& = \frac{1}{2}\lim_{t\downarrow 0^+} \frac{\sum_{i=1}^m x_i^2(t) - 1}{t} \\
& = \frac{1}{2}(\lim_{t\downarrow 0^+} \frac{x_1^2(t) - 1}{t} + \sum_{i=2}^m \lim_{t\downarrow 0^+} \frac{x_i^2(t)}{t}) \\
& = \frac{1}{2}(\lim_{t\downarrow 0^+} \frac{x_1(t) - 1}{t}(x_1(t) +1) + \sum_{i=2}^m \lim_{t\downarrow 0^+} \frac{x_i^2(t)}{t}) \\
& = \lim_{t\downarrow 0^+} \frac{x_1(t) - 1}{t} + \frac{1}{2}\sum_{i=2}^m \lim_{t\downarrow 0^+} \frac{x_i^2(t)}{t}.
\end{align*}
By mean value theorem, for each $t\in (0,\epsilon)$, there exists $\delta_t\in(0,t)$ such that $x_1(t) - 1 = \dot{x}_1(\delta_t)t$. Thus the first term simply becomes $\lim_{t\downarrow 0^+} \dot{x}_1(t)$. By the same argument, for each $i \in \{2,...,m\}$, $x_i(t) = \dot{x}_i(\delta_t)t$ for some $\delta_t\in(0,t)$. Since $\dot{x}_i(t)$ is bounded, we have
\[
 \lim_{t\downarrow 0^+} \frac{x_i^2(t)}{t} =  \lim_{t\downarrow 0^+} \dot{x}_i(\delta_t)^2t = 0.
\] 
Therefore the claim is verified.
\end{proof}

\begin{lemma}
\label{lemma:diffNorm}
Let $\x(t) = (x_1(t),...,x_m(t))^T\in \mathbb R^m$ be an $m$-dimensional function on $[0,\epsilon)$ such that: (1) $x_i(0)=0$ for all $i = 1,...,m$; (2) The derivative $\dot{x}_i(t)$ exists for all $t \in (0,\epsilon)$. We have
\[
\lim_{t\downarrow 0^+}\frac{\|\x(t)\|_2}{t} = \|\lim_{t\downarrow 0^+} \dot{\x}(t)\|_2.
\]
\end{lemma}
\begin{proof}

\begin{align*}
\lim_{t\downarrow 0^+}\frac{\|\x(t)\|_2}{t} = \lim_{t\downarrow0^+}(\sum_{i=1}^m (\frac{x_i(t)}{t})^2)^{1/2} 
 = (\sum_{i=1}^m (\lim_{t\downarrow0^+}\frac{x_i(t)}{t})^2)^{1/2} = \|\lim_{t\downarrow 0^+} \dot{\x}(t)\|_2.
\end{align*}

\end{proof}

\begin{lemma}
\label{lemma:diffNorm3}
Let $\abf = (a_1,...,a_m)^T\in\mathbb R^m$ where $a_1 \neq 0$ and $\x(t) = (x_1(t),...,x_m(t))^T\in \mathbb R^m$ be an $m$-dimensional function on $[0,\epsilon)$ such that: (1) $x_1(0)=1$ and for all $i\geq 2$, $x_i(0)=0$; (2) The derivative $\dot{x}_i(t)$ exists and is bounded for all $t \in (0,\epsilon)$. We have
\begin{align*}
\lim_{t\downarrow 0^+}\frac{|\abf^T\x(t)| - |a_1|}{t} = |a_1|\lim_{t\downarrow 0^+} \dot{x}_1(t) + \sgn(a_1)\sum_{i=2}^m a_i \lim_{t\downarrow 0^+} \dot{x}_i(t).
\end{align*}
\end{lemma}
\begin{proof} Without loss of generality, assume $a_1>0$. Since $x_1(0)=1$ and for all $i\geq 2$, $x_i(0)=0$, by continuity, for sufficiently small $t$, we have
\begin{align*}
\frac{|\abf^T\x(t)| - |a_1|}{t} = \frac{|a_1x_1(t) + \sum_{i=2}^m a_ix_i(t)| - a_1}{t} = \frac{a_1x_1(t) - a_1 + \sum_{i=2}^m a_ix_i(t)}{t}.
\end{align*}
Therefore, by the same argument in the proof of Lemma \ref{lemma:diffNorm1}, 
\begin{align*}
\lim_{t\downarrow 0^+}\frac{|\abf^T\x(t)| - |a_1|}{t} & = \lim_{t\downarrow 0^+} \frac{a_1x_1(t) - a_1}{t} + \lim_{t\downarrow 0^+} \sum_{i=2}^m\frac{a_ix_i(t)}{t} \\
& = a_1\lim_{t\downarrow 0^+} \dot{x}_1(t) + \sum_{i=2}^m a_i \lim_{t\downarrow 0^+} \dot{x}_i(t).
\end{align*}
\end{proof}

\begin{lemma}
\label{lemma:diffNorm4}
Let $\abf = (a_1,...,a_m)^T\in\mathbb R^m$ and $\x(t) = (x_1(t),...,x_m(t))^T\in \mathbb R^m$ be an $m$-dimensional function on $[0,\epsilon)$ such that: (1) $x_i(0)=0$ for all $i = 1,...,m$; (2) The derivative $\dot{x}_i(t)$ exists for all $t \in (0,\epsilon)$. We have
\[
\lim_{t\downarrow 0^+}\frac{|\abf^T\x(t)|}{t} = \left|\sum_{i=1}^m a_i \lim_{t\downarrow 0^+} \dot{x}_i(t)\right|.
\]
\end{lemma}
\begin{proof}
The proof is similar to that of Lemma \ref{lemma:diffNorm}.
\end{proof}

\bibliographystyle{apalike}
\bibliography{reference}

\end{document}